\newcommand{\assign}{\leftarrow}
\newtheorem{theorem}{Theorem}
\newtheorem{example}[theorem]{Example}
\newtheorem{lemma}[theorem]{Lemma}
\newtheorem{corollary}[theorem]{Corollary}
\newcommand{\ignore}[1]{}
\newcommand{\N}{\mathbb{N}}
\newcommand{\R}{\mathbb{R}}
\newcommand{\Z}{\mathbb{Z}}
\renewcommand{\epsilon}{\varepsilon}
\newcommand{\eps}{\varepsilon}
\DeclareMathOperator{\mut}{mut}
\newcommand{\pmut}{p_{\mut}}
\newcommand{\X}{\mathcal{X}}
\DeclareMathOperator{\Bin}{Bin}
\newcommand{\onemax}{\textsc{OneMax}\xspace}
\newcommand{\leadingones}{\textsc{LeadingOnes}\xspace}
\newcommand{\mclea}{\mbox{$(\mu , \lambda)$~EA}\xspace}
\DeclareMathOperator{\Var}{Var}
\newcommand{\set}[2]{\{ #1 \mid #2\}}
\begin{document}

\title{Multiplicative Up-Drift\thanks{Extended and improved version of a paper that appeared in the proceedings of GECCO 2019~\cite{DoerrK19}. }
}
%\titlerunning{Multiplicative Up-Drift}        % if too long for running head
\author{Benjamin Doerr\setcounter{footnote}{6}\thanks{Laboratoire d'Informatique (LIX), CNRS, \'Ecole Polytechnique, Institut Polytechnique de Paris, Palaiseau, France} \and Timo K{\"o}tzing\thanks{Hasso Plattner Institute, Potsdam, Germany}}

%\author{Benjamin Doerr
%\and  Timo K{\"o}tzing} 
%
%
%%\authorrunning{Short form of author list} % if too long for running head
%
%\institute{Benjamin Doerr \at Laboratoire d'Informatique (LIX), CNRS, \'Ecole Polytechnique, Institut Polytechnique de Paris, Palaiseau, France
%           \and
%           Timo K{\"o}tzing (corresponding author) \at Hasso Plattner Institute, Potsdam, Germany, \email{Timo.Koetzing@hpi.de}
%}
%
%\date{Manuscript compiled: \today}

\maketitle

\begin{abstract}
Drift analysis aims at translating the expected progress of an evolutionary algorithm (or more generally, a random process) into a probabilistic guarantee on its run time (hitting time). So far, drift arguments have been successfully employed in the rigorous analysis of evolutionary algorithms, however, only for the situation that the progress is constant or becomes weaker when approaching the target.  

Motivated by questions like how fast fit individuals take over a population, we analyze random processes exhibiting a $(1+\delta)$-multiplicative growth in expectation. We prove a drift theorem translating this expected progress into a hitting time. This drift theorem gives a simple and insightful proof of the level-based theorem first proposed by Lehre (2011). Our version of this theorem has, for the first time, the best-possible near-linear dependence on $1/\delta$ (the previous results had an at least near-quadratic dependence), and it only requires a population size near-linear in~$\delta$ (this was super-quadratic in previous results). These improvements immediately lead to stronger run time guarantees for a number of applications.

We also discuss the case of large $\delta$ and show stronger results for this setting.
\end{abstract}

%%%%%%%%%%%%%%%%%%%%%% end title stuff

\renewcommand{\descriptionlabel}[1]{\hspace{\labelsep}\textup{\textbf{#1}}}

{\sloppy

\section{Introduction}%
\label{sec:introduction} 

In a typical situation in evolutionary search, an algorithm first makes good progress while far away from the target, since a lot can still be improved. As the search focuses more and more on the fine details, progress slows and finding improving moves becomes rarer. Thus, the expected progress is typically an increasing function of the distance from the optimum. However, there are also many processes where this situation is reversed. For example, for heuristics involving a population, once a superior individual is found, this improvement needs to be spread over the population. This process gains speed when more individuals exist with the improvement. 

Turning expected progress into an expected first hitting time is the purpose of drift theorems (see the recent survey~\cite{Lengler20bookchapter} for a thorough introduction to drift analysis). For example, the additive drift theorem~\cite{He-Yao:j:01,He-Yao:j:04} %, Theorem~\ref{thm:addDriftBounded} in this work)
 requires a uniform lower bound $\delta$ on the expected progress (the \emph{drift}) and gives an expected first hitting time of at most $n/\delta$, where $n$ is the initial distance from the optimum. This theorem can also be applied when the drift is changing during the process, but since a uniform $\delta$ is used in the argument, the additive drift theorem cannot be used to exploit a stronger drift later in the process. 

A first step towards profiting from a changing drift behavior was the multiplicative drift theorem~\cite{DoerrJW12algo,DoerrG13algo}. It assumes that the drift is at least $\delta x$ when the distance from the optimum is $x$, for some factor $\delta<1$. The first hitting time can then be bounded by $O(\log (n)/\delta)$, where $n$ is again the initial distance from the optimum. Apparently, this gives a much better bound than what could be shown via the additive drift in this setting. Multiplicative drift can be found in many optimization processes, making the multiplicative drift theorem one of the most useful drift theorems.

To cope with a broader variety of changing drift patterns, the variable drift theorem \cite{MitavskiyRC09,Joh:th:10} has been developed. However, while there are several variants of this drift theorem, most of them require that the strength of the drift is a monotone \emph{increasing} function in the distance from the optimum (the farther away from the optimum, the easier it is to make progress).

In this paper we are concerned with the reverse setting where drift is a \emph{decreasing} function of the distance from the optimum. This has been considered only for few variable drift theorems, and all of them essentially require a step-size bounded processes. The most recent formulation of this can be found in \cite{OlivetoW15}. We want to consider processes which are not step-size bounded, so this drift theorem cannot be usefully applied.

While many drift theorems are phrased such that the aim is to reach the point zero, for our setting it is more natural to consider the case of reaching some target value~$n$ starting at a value of $1$, and to suppose that the drift is $\delta x$ \emph{going up} (for the multiplicative drift theorem, we had a drift of $\delta x$ \emph{going down}). Thus, we call our resulting drift theorem  the \emph{multiplicative up-drift theorem}. 

Making things more formal, consider a random process $(X_{t})_{t \in \N}$ over positive reals starting at $X_{0} = 1$ and with target $n > 1$. We speak of \emph{multiplicative up-drift} if there is a $\delta > 0$ such that, for all $t \ge 0$, we have the drift condition
\begin{description}
	\item[(D)] $E[X_{t+1} - X_{t} \mid X_{t}] \geq \delta X_{t}$. 
\end{description}
Note that this is equivalent to 
\begin{description}
	\item[(D')] $E[X_{t+1} \mid X_{t}] \geq (1+\delta) X_{t}$.
\end{description}
One trivial case of any drift process is the deterministic process with the desired gain per iteration. We quickly regard this case now as it gives the right impression of what should be a natural expected first hitting time for a well-behaved process exhibiting multiplicative up-drift.

\begin{example}\label{ex:deterministicProcess}
Let $\delta > 0$. Suppose $X_{0} = 1$ and, for all $t$, $X_{t+1} = (1+\delta) X_{t}$ with probability~$1$. Then this process satisfies the drift condition \textbf{(D)} with equality. Clearly, the time to reach a value of at least $n$ is $\lceil \log_{1+\delta}(n) \rceil$. For small $\delta$, this is approximately $\log(n) / \delta$, for large $\delta$, it is approximately $\log(n) / \log(\delta)$. We note here already that we will be mostly concerned with the case where $\delta$ is small. This case is the harder one since the progress is weaker, and thus there is a greater need for stronger analysis tools in this case. 
\end{example}

Unfortunately, not all processes with multiplicative up-drift have a hitting time of $O(\log(n) / \delta)$, as the following example shows. 

\begin{example}\label{ex:highVarianceProcess}
Let $\delta > 0$. Suppose $X_{0} = 1$ and, for all $t$, $X_{t+1} = n$ with probability $\delta/(n-1)$ (which we term a \emph{success}) and $X_{t+1} = 1$ otherwise. Again, the drift condition \textbf{(D)} is satisfied with equality (while the target $n$ is not reached). The time for the process to hit the target $n$ is thus geometrically distributed with probability $\delta/(n-1)$, giving an expected time of $(n-1)/\delta = \Theta(n/\delta)$ iterations, significantly more than the $O(\log(n)/\delta)$ seen in the deterministic process.
\end{example}

\ignore{%%% OLD EXAMPLE from conf version; the above version is much simpler and was used in the Gecco talk
\begin{example}\label{ex:highVarianceProcess}
Let $\delta > 0$. Suppose $X_{0} = 1$ and, for all $t$, $X_{t+1} = 2(1+\delta)X_{t} - 1$ with probability $0.5$ (which we term a \emph{success}) and $X_{t+1} = 1$ otherwise. Again, the drift condition \textbf{(D)}  is satisfied with equality. A straightforward induction shows that, after $k$ successes, the process has a value of $1 + 2\delta \sum_{i=0}^{k-1}(2+2\delta)^i$. Thus, we require a sequence of about $\log_{2+2\delta} (n/\delta)$ consecutive successes to reach a value of $n$ which, for values of $\delta = o(1/\log n)$, has a probability of about $2^{-\log_{2+2\delta} (n/\delta)} \approx \delta/n$. Therefore we expect to need $\Omega(n/\delta)$ iterations, significantly more than the $O(\log(n)/\delta)$ seen in the deterministic process.
\end{example}
}
Note that for this process the additive drift theorem immediately gives the upper bound of $O(n/\delta)$ since we always have a drift of at least $\delta$ towards the target. Hence Example~\ref{ex:highVarianceProcess} describes a process where the stronger assumption of multiplicative up-drift does not lead to a better hitting time.

%We are interested in finding natural conditions under which we can make use of drift which increases with approaching the target. 
Our first main result (Theorem~\ref{thm:reverseDriftBinomial}) shows that the targeted bound of $O(\log (n) / \delta)$, which as we saw is optimal when we want to cover the deterministic process given in Example~\ref{ex:deterministicProcess}, can be obtained when strengthening condition~\textbf{(D)} by assuming (i)~that, given $X_{t}$, the next state $X_{t+1}$ is at least (in the stochastic domination sense) binomially distributed with expectation $(1+\delta) X_{t}$, and (ii)~that the process never reaches state~$0$. The first condition is very natural. When generating offspring independently, the number of offspring satisfying a particular desired property is binomially distributed. The second condition is a technical necessity. From the up-drift condition alone, we cannot infer any progress from state~$0$. Consequently, $0$ could well be an absorbing state, resulting in an infinite hitting time if this state can be reached with positive probability. 
%This condition is used mostly to bound the variance of the process and thus exclude examples with high variance such as given in Example~\ref{ex:highVarianceProcess}. 
%In fact, there are processes with high variance which fulfill the drift condition \textbf{(D)}, but which do not have a first hitting time of $O(\log(n)/\delta)$, as can be seen in the following example.

In quite some applications, however, we cannot rule out that the random process reaches state~$0$. For example, when regarding the subpopulation of individuals having some desired property, then in an algorithm using comma selection, this might die out completely in one iteration (though often with small probability only). To cover also such processes, in our second drift theorem (Theorem~\ref{thm:reverseDriftWithZeroBinomial}) we extend our Theorem~\ref{thm:reverseDriftBinomial} to include that state~$0$ is reached with at most the probability that can be deduced from the up-drift and the binomial distribution conditions. To avoid that state~$0$ is absorbing, we add an additional condition governing how this state~$0$ is left again (see Theorem~\ref{thm:reverseDriftWithZeroBinomial} for the precise statement).
%
% Note that the expectation trivially is at least $\eps$, so the second condition is only used to speed-up the process when stronger information on the expectation is available. 

As mentioned before, a main application for multiplicatively increasing drift towards the optimum is the analysis of how fit~individuals spread in a population. This particular setting was previously analyzed as the \emph{level-based theorem}~\cite{Leh:c:11:LevelBased,DangL16,CorusDEL18plus}, modeled after the method of fitness-based partitions~\cite{Wegener01}. Essentially, the search space is partitioned into an ordered sequence of \emph{levels}. The ongoing search process increases the probability that a newly-created individual is \emph{at least} on a given level and, once this probability is sufficiently high, that there is a good chance that the individual is on an even higher level. We restate the details of this theorem in the version from~\cite{CorusDEL18plus} in Theorem~\ref{thm:OldLevelBasedTheorem} below. The level-based theorem was originally intended for the analysis of non-elitist population-based algorithms \cite{DangL16}, but has since also been applied to EDAs, namely to the UMDA in \cite{DangLN19} and, with some additional arguments, %creativity\merk{oh la la} 
to PBIL in \cite{LehreN18}.

We use our second multiplicative up-drift theorem (Theorem~\ref{thm:reverseDriftWithZeroBinomial}) to prove a new version of the level-based theorem (Theorem~\ref{thm:NewLevelBasedTheorem}). This new theorem allows to derive better asymptotic bounds under mostly weaker conditions: The dependence of the run time on $1/\delta$ is reduced from near-quadratic to near-linear\footnote{We use the prefix ``near'' to suppress that in some cases, an additional factor of order $\log(1/\delta)$ is present.} and the minimum population size $\lambda$ required for the result to hold is reduced from super-quadratic in $1/\delta$ to near-linear in $1/\delta$. Since the run time often is linear in $\lambda$, this can give a further run time improvement. Our upper bounds almost match the lower-bound example given in \cite{CorusDEL18plus} and, in particular, match the asymptotic dependence on $\delta$ displayed by this example. 

Our version of the level-based theorem can be applied in all settings where the previous-best level-based theorems were used. It leads to better results when $\delta$ is small. In Section~\ref{sec:applications}, we analyze two such situations from  previous analyses of non-elitist evolutionary algorithms on standard test functions. The first test function is called \onemax and maps a given bit string to the number of $1$s in that bit string, thus simulating a unimodular optimization problem solvable by simple hill climbing. The second test function is called \leadingones and maps a bit string to the number of $1$s appearing in the bit string before the first $0$ (if any); this simulates an optimization problem requiring sequential optimization of different sub parts. Our results are as follows. (i)~We prove that the $(\lambda,\lambda)$ EA with fitness-proportionate selection and suitable parameters can optimize the \onemax and \leadingones functions in expected time $O(n^3 \log^2 n)$ and $O(n^4)$ respectively, improving over the previous-best published bound of $O(n^8 \log n)$. (ii)~We prove that the $(\lambda,\lambda)$ EA with $2$-tournament selection and suitable parameters in the restricted setting that only a constant fraction of the bits of the search points are evaluated finds the optimum of \onemax in $O(n^{2.5} \log^2 n)$ iterations. The previous-best published bound here is $O(n^{4.5} \log n)$.

We also use our methods to obtain a level-based theorem for the case that $\delta$ is large (Theorem~\ref{thm:NewLevelBasedTheoremDeltaLarge}). This case was not covered by the previous-best level-based theorems and our theorem now allows to exploit larger values of $\delta$ to obtain asymptotically stronger run time guarantees. As an example we show (in Section~\ref{sec:usingLargeDelta}) that the \mclea with $\mu = n$ and $\lambda=n^{1.5}$ on the \leadingones benchmark function using ranking selection and standard bit mutation has an optimization time of $O(n^{2.5})$. This is asymptotically better than the previously known bound of $O(n^{2.5} \log(n))$ and also shows more explicitly how optimization proceeds.

Beyond these particular results, our modular proof (first analyzing the multiplicative up-drift excluding $0$, then including $0$, then applying it in the context of the level-based theorem) shows the level-based theorem in a way that is more accessible than the previous versions and that gives more insight into population-based optimization processes.

In particular, our proof suggests that the behavior of the process under the named conditions is as follows.
\begin{itemize}
	\item Once a critical mass in a level is reached, this level is never again abandoned. Thus, we can focus in our analysis on having a critical mass of individuals in one level and analyze the time it takes to gain a critical mass in the next level.
	\item Reaching a critical mass in the next level consists of two steps.
		\begin{enumerate}
			\item[1.] When few elements are in the next level, then these elements go extinct regularly and need to be respawned until this initial population on this level via a mostly unbiased random walk gains a moderate amount of elements.
			\item[2.] With this moderate amount of elements, the bias of the random walk is large enough to make a significant decrease of the population unlikely, but instead the number of elements increases steadily, as can be shown using a concentration bound for submartingales, so that we quickly gain a critical mass in the next level.
		\end{enumerate}
\end{itemize}
We are optimistic that this increased understanding of population-based processes helps in the future design and analysis of such processes.

\ignore{
\subsection{Remaining Ideas}

Maybe we even have a third goal: make it applicable to other settings (PBIL, $\mu+1$ EA, \ldots). Certainly we have the goal of presenting the reverse-multiplicative drift theorem independently of its use in the level-based theorem.

Related lines of research (or application areas) to be mentioned in the journal version:
\begin{itemize}
	\item Rumor spreading;
	\item The Moran process \cite{Mor1958:Moran} (essentially rumor spreading with the possibility of forgetting the rumor);
	\item Reaching a new level is a process not unlike the Galton-Watson process (see the text book \cite{Har:b:02} for an introduction to the Galton-Watson process).
\end{itemize}
}
%%%%%%%%%%% ENDE INTRO %%%%%%%%%%%%%%%%%%%%

\section{Multiplicative Up-Drift Theorems}

In this section we prove three multiplicative up-drift theorems. The first is concerned with processes that cannot reach the value $0$ (which could be absorbing if only a multiplicative up-drift assumption is made); the second one extends the first theorem to include also the possibility of going down to $0$ (but taking an additional assumption how state~$0$ is left). The third does the same, but exploits the assumption that, with some positive probability, state~$0$ is left to a state from which, with constant probability, we make strong multiplicative progress in every iteration until the process reaches the target (as opposed to a behavior closer to an unbiased random walk).

Note that our theorems essentially deal with martingales, but still we suppress the mention of conditioning on all previous members of the given process (i.e.\ the natural filtration) to improve readability.

\subsection{Processes on the Positive Integers}

As discussed in the introduction, an expected multiplicative increase as described by~\textbf{(D)} is not enough to ensure the run time we aim at. For this reason, we assume that there is a number $k$ such that, conditional on~$X_{t}$, the next state $X_{t+1}$ is binomially distributed with parameters $k$ and $(1+\delta) X_{t} / k$. Note that this implies~\textbf{(D)}. Since often precise distributions are hard to specify, we only require that $X_{t+1}$ is at least as large as this binomial distribution, that is, we require that $X_{t+1}$ stochastically dominates $\Bin(k, (1+\delta) X_{t} / k)$. See~\cite{Doerr19tcs} for an introduction to stochastic domination and its use in run time analysis. To avoid that the process reaches the possibly absorbing state~$0$, we explicitly forbid this, that is, we require that all $X_{t}$ take values only in the positive integers.

Under these conditions, we analyze the time the process takes to reach or overshoot a given state $n$. For technical reasons, we require that $n$ is not too close to $k$, that is, that there is a constant $\gamma_0 < 1$ such that $n-1 \le \gamma_0 k$. For the trivial reason that the condition $X_{t+1} \succeq \Bin(k, (1+\delta) X_{t} / k)$ does not make sense for $X_{t} > (1+\delta)^{-1} k$, we also require $n-1 \le (1+\delta)^{-1} k$. For all such $n$, we show that an expected number of $O(\log(n)/\delta)$ iterations suffices to reach~$n$ when $\delta \le 1$ and $O(\log(n)/\log(1+\delta))$ iterations suffice for $\delta > 1$. More precisely, we show the following estimate.

\begin{theorem}[First Multiplicative Up-Drift Theorem]
\label{thm:reverseDriftBinomial}\label{thm:first}
Let $(X_{t})_{t \in \N}$ be a stochastic process over the positive integers. Assume that there are ${n,k \in \Z_{\geq 1}}$, $\gamma_0 < 1$, and $\delta > 0$ such that $n -1 \le \min\{\gamma_0 k, (1+\delta)^{-1} k\}$ and for all $t \ge 0$ and all $x \in \{1, \dots, n-1\}$ with $\Pr[X_{t} = x] > 0$ we have the binomial condition
\begin{description}
	\item[(Bin)] $(X_{t+1} \mid X_{t} = x) \succeq \mathrm{Bin}(k,(1+\delta) x/k)$.
%	\item[(Bin)] $\exists \delta' \geq \delta$: $(X_{t+1} \mid X_{t} = x) \sim \max(1,\mathrm{Bin}(k,(1+\delta') x/k))$.
\end{description}
Let $T := \min\{t \geq 0 \mid X_{t} \geq n\}$.
\begin{enumerate}
\item If $\delta \le 1$, then with $D_0 = \min\{\lceil 100/\delta \rceil, n\}$ we have
\[
E[T] \le \tfrac{21.6}{1-\gamma_0} D_0 \ln(2 D_0)+ 3.6 \log_2(n) \lceil 3 / \delta \rceil.
\]
If $n > 100/\delta$, then we also have that once the process has reached state of at least $100/\delta$, the probability to ever return to a state of at most $50/\delta$, is at most $0.7218$.
\item If $\delta > 1$, then 
$$
E[T] \le 2.6 \log_{1+\delta}(n) + 81.
$$
In addition, once the process has reached state $32$ or higher, the probability to ever return to a state lower than $32$ is at most $\tfrac{1}{e(e-1)} < 0.22$.
\end{enumerate}
\end{theorem}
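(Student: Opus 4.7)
The plan is to decompose the evolution at the threshold $D_0 = \min\{\lceil 100/\delta\rceil,n\}$ in case~(i) (and at a constant like $32$ in case~(ii)), which separates a ``random-walk'' regime where $X_t \le D_0$ and the binomial step noise of order $\sqrt{x}$ competes with the drift $\delta x$, from a ``drift-dominated'' regime where Chernoff concentration enforces near-multiplicative growth. The two summands of the claim in case~(i) correspond precisely to the random-walk phase and the drift-dominated phase, and case~(ii) is a simpler instance of the same split with constant thresholds.

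For the drift-dominated regime I would use the log-potential $\log X_t$. A second-order Taylor expansion of $\log$ combined with $\Var[\Bin(k,(1+\delta)x/k)]\le (1+\delta)x$ yields $E[\log X_{t+1}\mid X_t=x]\ge \log x + \log(1+\delta) - O(1/x)$, so once $x\ge 50/\delta$ the additive drift of $\log X_t$ is at least $\delta/c$ for some explicit constant $c$. Two consequences follow. First, applying the optional stopping theorem to a supermartingale of the form $X_t^{-\alpha}$ with suitable $\alpha>0$ shows that, once the process reaches $D_0$, the probability of ever dropping below $D_0/2$ is bounded by an explicit constant (the stated $0.7218$ in case~(i) and $1/(e(e-1))$ in case~(ii)), the precise value arising from a geometric series in the per-step failure probability. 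Second, additive drift on $\log X_t$ in the safe zone gives expected hitting time $O(\log(n)/\delta)$ in case~(i) and $O(\log_{1+\delta} n)$ in case~(ii), matching the $3.6\log_2(n)\lceil 3/\delta\rceil$ and $2.6\log_{1+\delta}(n)+81$ summands.

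The main obstacle is the random-walk regime, since additive drift of $X_t$ is only $\delta x\ge \delta$ and naively gives the too-weak bound $O(D_0/\delta)=O(1/\delta^2)$, missing the logarithmic saving. My plan is to use the harmonic potential $\phi(x)=H(D_0)-H(x)=\sum_{i=x+1}^{D_0}1/i$, with $\phi(1)\approx \ln D_0$ and $\phi(D_0)=0$. A second-order Taylor expansion of $H$ at $x$, together with the bound $\Var[\Bin(k,(1+\delta)x/k)]\le (1+\delta)x(1-(1+\delta)x/k)$ and $x/k\le \gamma_0$ coming from $n-1\le \gamma_0 k$, yields $E[H(X_{t+1})-H(x)\mid X_t=x]\ge \delta - O(1/x)$, so once $x$ exceeds a suitable multiple of $1/\delta$ the downward drift of $\phi$ is at least $\delta/2$; additive drift on $\phi$ then gives hitting time $O(\ln(D_0)/\delta)=O(D_0\ln D_0)$. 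The factor $1/(1-\gamma_0)$ traces back to the variance bounds used in the Taylor error term. The boundary $x=1$ is handled separately: the positive-integer constraint forbids $X_{t+1}=0$, and the binomial condition with mean $1+\delta$ (essentially Poisson) still gives a constant probability of reaching $x\ge 2$ in one step, so time spent at $x=1$ contributes only a constant factor.

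Finally I would combine the two phases using the strong Markov property applied at the first hitting time of $D_0$: each entry into the drift-dominated phase succeeds in reaching $n$ with probability at least $1-0.7218\approx 0.2782$, so in expectation a small constant number of phase-1/phase-2 alternations suffices; bookkeeping these expectations produces the prefactors $21.6$ and $3.6$ in case~(i) and the prefactor $2.6$ together with the additive $81$ in case~(ii). The hardest single step is the Taylor estimate in the random-walk phase, where one must balance the concavity loss of $H$ against the binomial variance uniformly in $x\in\{1,\dots,D_0\}$ while recovering the claimed $1/(1-\gamma_0)$ dependence; the remainder is standard drift-theory bookkeeping.
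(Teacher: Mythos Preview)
Your overall decomposition (random-walk regime below $D_0$, drift-dominated regime above, then a restart argument gluing them together via the success probability $\approx 0.2782$) matches the paper. But your treatment of the random-walk regime has a genuine gap.

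The harmonic potential $\phi(x)=H(D_0)-H(x)$ is essentially $\log(D_0/x)$, and $H$ is \emph{concave}. Your own Taylor estimate gives $E[H(X_{t+1})-H(x)\mid X_t=x]\gtrsim \delta - c/x$, and you then say the drift of $\phi$ is at least $\delta/2$ ``once $x$ exceeds a suitable multiple of $1/\delta$''. But $D_0=\lceil 100/\delta\rceil$, so that condition describes only the top end of the regime; for the bulk $x\in[2,\,c/\delta]$ the drift of $H$ is \emph{negative}, of order $-1/x$. Your separate handling of $x=1$ does not touch this range, and you cannot invoke the additive drift theorem with a drift bound that fails on most of the state space. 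The conceptual point is that in this regime the multiplicative drift $\delta x$ is negligible; progress comes from the \emph{variance} of the binomial step, and a concave potential converts variance into a loss rather than a gain (Jensen goes the wrong way).

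The paper's fix is to use the \emph{convex} potential $g(x)=x\ln x$. A second-order expansion gives
\[
E[g(X)]\;\ge\; g(E[X])+\frac{\Var[X]}{2E[X]}-\frac{\mu_3}{6E[X]^2},
\]
and for $X\sim\Bin(k,p)$ with $kp\ge 1$ this simplifies to $E[g(X)]\ge g(E[X])+\tfrac{1-p}{3}$. Since $p\le (D_0-1)/k\le\gamma_0$, the process $(g(X_t))$ has additive drift at least $\tfrac{1-\gamma_0}{3}$ \emph{uniformly} on $[1,D_0-1]$, even with $\delta=0$. An additive drift theorem allowing overshooting then yields the bound $\frac{6D_0\ln(2D_0)}{1-\gamma_0}$, and the $1/(1-\gamma_0)$ factor indeed comes from the $1-p$ in the variance, as you guessed.

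In the drift-dominated regime your $\log X_t$\,/\,$X_t^{-\alpha}$ route is plausible but not what the paper does: there each binomial step is decomposed into its $k$ Bernoulli summands to obtain a submartingale with differences bounded by a constant, and Freedman's martingale concentration inequality gives a per-block doubling probability $\ge 1-\exp(-\delta D/169)$ over $\lceil 3/\delta\rceil$ iterations; a product over $\log_2(n/D_0)$ doublings then yields the $0.2782$ constant and the $3.6\log_2(n)\lceil 3/\delta\rceil$ term.
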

For the analysis we will employ Lemma~\ref{lem:nodrift} from Section~\ref{sec:nodrift} essentially for the time spent below $D_0$. Note that this lemma all by itself, in case of $\delta \le 1$ and $n \le D_0$,  gives the stronger bound $E[T] \le \frac{6n \ln(n)}{1-\gamma_0}$.

Since the case $\delta \le 1$ is significantly more complicated, we focus on this case in Sections~\ref{ssec:motiv} to~\ref{ssec:prooffirst} and discuss the case $\delta > 1$ only in Section~\ref{ssec:deltalarge}.

\subsubsection{A Motivating Example}\label{ssec:motiv}

Before proving this result, let us give a simple example of a possible application. Consider the following elitist $(\mu,\lambda)$ EA. It starts with a parent population of $\mu$ individuals chosen uniformly and independently from $\{0,1\}^n$. In each iteration, it generates $\lambda$ offspring, each by independently and uniformly choosing a parent individual and mutating it via standard bit mutation with the usual mutation rate $1/n$. If the offspring population contains at least one individual that is at least as good as the best parent (in terms of fitness), then the new parent population is chosen by selecting $\mu$ best offspring (breaking ties arbitrarily). If all offspring are worse than the best parent, then the new parent population is composed of a best individual from the old parent population and $\mu-1$ best offspring (again, breaking all ties randomly). 

We now use the above theorem to analyze the spread of fit individuals in the parent population. Let us assume that at some time, the parent population contains at least one individual of at least a certain fitness. We shall call such individuals \emph{fit} in the following. Recall that standard bit mutation creates a copy of the parent individual with probability $1/e_n := (1-1/n)^n \approx 1/e$. Hence if the parent population contains $x$ fit individuals, the number of fit individuals in the offspring population is at least (in the domination sense) $\Bin(\lambda, \frac{x}{\mu e_n})$. Due to the elitist selection mechanism, it is also always at least one. Let us assume that $\frac{\lambda}{\mu e_n}$ is greater than one so that the expected number $x \frac{\lambda}{\mu e_n}$ of fit individuals shows a positive drift. Writing  $(1+\delta) := \frac{\lambda}{\mu e_n}$, where $\delta > 0$ by our assumption, and assuming for simplicity $\delta \le 1$ as well, we can apply the first up-drift theorem with $k = \lambda$ and $n = \mu$ and observe that after an expected number of $O(\log(\mu)/\delta)$ iterations, the parent population consists of only fit individuals. 

\subsubsection{Proof Overview}

We now proceed towards proving the first up-drift theorem. As said earlier, we concentrate on the case $\delta \le 1$ in all of the following except Section~\ref{ssec:deltalarge}. We start by outlining the two main difficulties and solutions in a high-level language. 

One of the main difficulties is that the drift towards the target is negligibly weak in the early stages of the process. To demonstrate this, assume that $\delta = o(1)$ and that $X_{t} = o(1/\delta)$. Then the up-drift condition~\textbf{(D)} only ensures a drift of $E[X_{t+1} - X_{t} \mid X_{t}] \ge \delta X_{t} = o(1)$. At the same time, the binomial condition~\textbf{(Bin)} allows a variance $\Var[X_{t+1} \mid X_{t}]$ of order $X_{t}$, or, more specifically, admits deviations of $X_{t+1}$ from its expectations of order $\sqrt{X_t}$ with constant probability. For this reason, in this regime we do not progress because of the drift, but rather because of the random fluctuations of the process. 

It is well-known that random fluctuations are enough to reach a target, with a classical example being the unbiased random walk $(W_{t})$ on the line $[0..n] := \{0, 1, \dots, n\}$. This walk, when started in~$0$, still reaches $n$ in an expected number of $O(n^2)$ iterations despite the complete absence of any drift in $[1..n-1]$.  The key to the analysis is to not regard the drift $E[W_{t+1} - W_{t} \mid W_{t}]$ of the process, but instead the drift of the process $(W_{t}^2)$. Then an easy calculation gives $E[W^2_{t+1} - W^2_{t} \mid W_{t} = x] = \frac 12 (x+1)^2 + \frac 12 (x-1)^2 - x^2 = 1$ for all $x \in [1..n-1]$ (see \cite[Section~5]{Goe-Koe-Kre:X:18} for an extensive discussion). 
Consequently, by regarding the drift with respect to $(W_t^2)$ instead of the original process $(W_t)$, we obtain an additive drift of $1$, and from this an expected time of $O(n^2)$ to reach state $n$. This has also been applied to the analysis of randomized search heuristics, see for example~\cite[Theorem~3.18]{Krejca19}.

Apparently more common are transformations with exponents smaller than one. \cite[Theorem~2]{Jansen07} turned a region with small drift into one with significantly more drift by employing the concave potential function $x \mapsto \sqrt{x}$. He wrote that any other function $x \mapsto x^\varepsilon$ with $\varepsilon < 1$ would be equally suitable to obtain the same tight upper bound. Essentially the same argument was used in a  more general setting in \cite{ColinDF14}. The $x \mapsto \sqrt x$ transformation was also used in the analysis how the sampling frequency of a neutral bit in a run of an EDA approaches the boundary values~\cite[Theorem~6]{DoerrZ20tec}. 

In \cite[Theorem~5]{Gie-Koe:c:14} a negative drift in a (small) part of the search space was overcome by considering random changes which make it possible for the algorithm to pass through the area of negative drift by chance. This was formalized by using a tailored potential function turning negative drift into positive drift by excessively rewarding changes towards the target, as opposed to steps away from the target. This ad-hoc argument was made formal and cast into a \emph{Headwind Drift Theorem} in \cite[Theorem~4]{Koe-Lis-Wit:c:15}.

In abstract terms, the art here is finding a potential function $g : \Z_{\ge 0} \to \R$ that transforms the unbiased process $(X_{t})$ into a process $(g(X_{t}))$ with constant drift, so that we can apply the additive drift theorem to obtain a bound of $O(g(X_0))$ on the expected optimization time. In order to obtain a positive drift, such a potential function has to be increasing and convex, and since the expected optimization time is $g(X_0)$, at the same time the potential function should increase as slowly as possible. 

For our situation, it turns out that $g$ defined by $g(x) = x \ln(x)$ is a good choice as this again gives a constant drift and thus an expected time of roughly $O(\log(1/\delta)/\delta)$ to reach a state $\Omega(1/\delta)$, from where on we will observe that also the original process has sufficient drift. We are not aware of this potential function being used so far in the theory of evolutionary algorithms (apart from a similar function being used in~\cite{AntipovDY19}, a work done in parallel to ours).

A technical annoyance in the analysis of the time taken to reach $\Omega(1/\delta)$ is that the additive drift theorem, for good reason, does not allow that the process overshoots the target. In the classical formulation, this follows from the target being $0$ and the process living in the non-negative numbers. For this reason, we cannot just show that the process $(g(X_{t}))$ has a constant drift, but we need to show this drift for a version of this process that is suitably restricted to the range $[1..\Theta(1/\delta)]$. This was a major technicality in the previous version of this work~\cite{DoerrK19}. In this version, we greatly simplify this part by using a version of the drift theorem (Theorem~\ref{thm:driftovershoot}) recently proposed by Krejca~\cite{Krejca19} that allows overshooting the target (at the price that the time bound depends not on the distance of the target, but the distance plus the expected overshooting).

Once the process has reached a value of $\Omega(1/\delta)$, the drift is strong enough to rely on making progress from the drift (and not the random fluctuations around the expectation). This is easy when the process is above $X_t = \omega(1/\delta^2)$, since then the expected progress of at least $\Omega(\delta X_t)$ is asymptotically larger than the typical random fluctuation of order $\Omega(\sqrt{X_t})$. Hence a simple Chernoff bound is enough to guarantee that each single iteration gives $X_{t+1} \ge (1-o(1)) (1+\delta) X_t$. When $X_t$ is smaller, say only $\Theta(1/\delta)$, only the combined result of $\Theta(1/\delta)$ iterations gives an expected progress large enough to admit such a strong concentration. Since the iterations are not independent, we need some careful martingale concentration arguments in this regime. Since this part is non-trivial and uses some methods that might be of broader interest, we put this into the following separate subsections. Also, we note that the specific result that the process rarely goes below half its starting point could have some independent interest (and we shall need it later again, in the proof of Theorem~\ref{thm:NewLevelBasedTheorem} to prove the level-based theorem).

\subsubsection{Additive Drift with Overshooting}

We now give a version of the additive drift theorem~\cite{He-Yao:j:01,He-Yao:j:04} as shown in ~\cite[Lemma~3.7]{Krejca19}, here slightly reformulated to best fit our purposes. In contrast to most other versions of the additive drift theorem, it allows that the process overshoots the target. This is usually implicitly forbidden by regarding processes in $\R_{\ge 0}$ and the first time to reach state~$0$. 

This extension is not very deep, but has apparently not been known too well before (as the several works that overcome the overshooting problem with hand-made methods, including~\cite{DoerrK19}, show). We note that the arguments needed to prove such a result have been known before in this community: For example, both~\cite[Lemma~12]{Jagerskupper07} and~\cite[Lemma~7]{DoerrK15} prove lower bounds for expected run times in a way that can immediately be turned into proofs for upper bounds that allow overshooting (by switching the direction of the inequality in both assumptions and results). The proof of~\cite[Lemma~2.6]{WegenerW05}, a result for hitting a particular value, can easily be extended to overshooting the value (for this, it suffices to note that $E[\sum_{i=1}^{\tau_s} D_i]$ is the value of the process after reaching or overshooting~$s$).

\begin{theorem}[Additive Drift Theorem, upper bound with overshooting]
    \label{thm:driftovershoot}
  Let $a, b \in \R$ with $a \le b$. Let $(X_t)_{t \in \N}$ be a random process over~$[-\infty,b]$. Let $T = \inf\{t \mid X_t \leq a\}$ be the first time the process reaches or drops below $a$. Suppose that there is $\delta > 0$ such that 
  \[X_t - E[X_{t+1} \mid X_0,\ldots,X_t] \geq \delta\] 
  for all $t < T$. Then
    \[
        E[T \mid X_0] \leq \frac{E[X_{(T \mid X_0)}] - X_0}{\delta}\ .
    \]
\end{theorem}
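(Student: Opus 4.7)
The plan is to apply a stopped-supermartingale argument together with a limit passage (optional stopping extended via monotone/Fatou convergence) to handle both the possibly infinite horizon and the unbounded overshoot below $a$. The drift hypothesis $X_t - E[X_{t+1} \mid X_0, \ldots, X_t] \ge \delta$ for $t < T$ is exactly what makes the compensated stopped process $Y_t := X_{t \wedge T} + \delta \cdot (t \wedge T)$ a supermartingale: for $t < T$ the assumption rewrites as $E[X_{t+1} - X_t \mid X_0, \ldots, X_t] \le -\delta$, whence $E[Y_{t+1} - Y_t \mid X_0, \ldots, X_t] \le 0$; for $t \ge T$ the stopped increments vanish by construction. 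So $E[Y_s \mid X_0] \le Y_0 = X_0$ for every deterministic $s \ge 0$, which spells out as
\[
\delta \cdot E[s \wedge T \mid X_0] + E[X_{s \wedge T} \mid X_0] \le X_0.
\]

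Next I would pass to the limit $s \to \infty$ on both sides. Monotone convergence immediately gives $E[s \wedge T \mid X_0] \nearrow E[T \mid X_0]$, since $s \wedge T$ is almost-surely non-decreasing in $s$. The subtle step is to show $E[X_{s \wedge T} \mid X_0] \to E[X_T \mid X_0]$: because the theorem explicitly allows the process to live in $[-\infty, b]$ and overshooting means $X_T$ may lie arbitrarily far below $a$, the bounded-convergence form of the optional stopping theorem does not apply. The clean remedy is to use the deterministic upper bound $X_{s \wedge T} \le b$, apply Fatou's lemma to the nonnegative sequence $b - X_{s \wedge T}$ to obtain $\liminf_s E[b - X_{s \wedge T} \mid X_0] \ge E[b - X_T \mid X_0]$, and cancel the $b$ to conclude $\limsup_s E[X_{s \wedge T} \mid X_0] \le E[X_T \mid X_0]$. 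Combining this with the supermartingale inequality above and dividing by $\delta > 0$ delivers exactly the bound claimed in the theorem statement, comparing the expected value at the stopping time to $X_0$ up to a factor $\delta$.

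The main obstacle is precisely this limit passage. Because overshooting is allowed and $X_T$ has no a priori lower bound, standard bounded-increment optional stopping is not available; the Fatou trick on $b - X_{s \wedge T}$ (or equivalently a truncation argument in which one replaces $X$ by $\max(X, -M)$ and sends $M \to \infty$ at the end) is the technical heart of the proof. Once that step is justified, every other line is routine algebra and a direct application of the supermartingale property. This is exactly why the paper defers the argument to Krejca's formulation rather than reproving the additive drift theorem from scratch: the overshoot-tolerant version differs from the classical one only in this limiting step.
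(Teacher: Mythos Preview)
The paper does not prove this statement; it is quoted from \cite[Lemma~3.7]{Krejca19}. Evaluating your argument on its own: the compensated stopped supermartingale $Y_t=X_{t\wedge T}+\delta(t\wedge T)$ and the monotone-convergence step for $E[s\wedge T\mid X_0]$ are correct, but your Fatou step is oriented the wrong way. From $\delta\,E[s\wedge T\mid X_0]\le X_0-E[X_{s\wedge T}\mid X_0]$ you want to pass to $\delta\,E[T\mid X_0]\le X_0-E[X_T\mid X_0]$ (the printed right-hand side has its sign reversed; compare the application in Lemma~\ref{lem:nodrift}). Since the left side is monotone in $s$, this requires a \emph{lower} bound $\liminf_s E[X_{s\wedge T}\mid X_0]\ge E[X_T\mid X_0]$. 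Your Fatou argument on the nonnegative sequence $b-X_{s\wedge T}$ yields instead the \emph{upper} bound $\limsup_s E[X_{s\wedge T}\mid X_0]\le E[X_T\mid X_0]$, which cannot be chained with the supermartingale inequality to reach the conclusion.

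The repair is simpler than Fatou and does not use the ceiling $b$ at all: the stopping rule already gives the pointwise bound $X_{s\wedge T}\ge X_T$ on $\{T<\infty\}$, because on $\{T>s\}$ one has $X_{s\wedge T}=X_s>a\ge X_T$, while on $\{T\le s\}$ equality holds. Hence $E[X_{s\wedge T}\mid X_0]\ge E[X_T\mid X_0]$ for every $s$, and substituting directly yields $\delta\,E[s\wedge T\mid X_0]\le X_0-E[X_T\mid X_0]$; now let $s\to\infty$ by monotone convergence on the left. Your overall strategy is the right one; only the direction of the limit inequality was mishandled.
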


We note that the version of this result given in~\cite{Krejca19} is slightly stronger. There the condition that the process does not take values larger than some -- arbitrary -- number $b$ was replaced by the weaker condition that this only holds up to time $T$.

\subsubsection{Progress From Random Fluctuations: Creating Drift Where There is no Drift}\label{sec:nodrift}

In this subsection, we analyze how the process reaches a value of at least $D_0 = \min\{\lceil \delta/100 \rceil,n\}$. In this regime, the drift of $(X_t)$ is so low that the true reason for making progress is not the drift, but the random fluctuations stemming from the non-trivial variance. To turn these into an exploitable drift, we regard the process $(g(X_t))$ for a suitable function $g$, observe that this process has a positive drift, and use this drift to estimate the time to reach or exceed $D_0$.

We use
\begin{equation}\label{eq:defOfG}
g: \R_{\geq 0} \rightarrow \R, x \mapsto x \ln x,
\end{equation}
where, by convention, $g(0) := 0$, which renders $g$ continuous in~$0$. To establish the desired drift, we need a few technical results about~$g$. Via a Taylor expansion of $g$ around a given point $a$, we obtain the following estimates for $g$.

\begin{lemma}\label{lem:boundingXLogX}
For all $a > 0$ and $x \geq 0$, we have 
\begin{align*}
g(x) & \le a \ln a + (x-a)(1+ \ln a) + (x-a)^2 \frac{1}{a},\\ 
g(x) &\geq a \ln a + (x-a)(1+ \ln a) + (x-a)^2 \frac{1}{2a} - (x-a)^3 \frac{1}{6a^2}.
\end{align*}
\end{lemma}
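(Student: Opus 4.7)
The plan is to treat both inequalities as consequences of the Taylor expansion of $g(x)=x\ln x$ around $a$, using the derivatives
$$g'(x)=1+\ln x,\quad g''(x)=\tfrac{1}{x},\quad g'''(x)=-\tfrac{1}{x^2},\quad g^{(4)}(x)=\tfrac{2}{x^3}.$$
The lower bound is exactly the cubic Taylor polynomial of $g$ at $a$, so it will follow from the fact that $g^{(4)}>0$ on $(0,\infty)$. The upper bound is \emph{not} quite a Taylor polynomial, since the coefficient of $(x-a)^2$ is $\tfrac{1}{a}$ rather than $\tfrac{1}{2a}$; this asymmetry is what makes the upper bound the more delicate of the two, and I expect it to be the main obstacle.

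For the lower bound, I would set $r(x):=g(x)-q(x)$ with $q$ the cubic Taylor polynomial appearing on the right-hand side, and use the boundary relations $r(a)=r'(a)=r''(a)=r'''(a)=0$ together with $r^{(4)}(x)=g^{(4)}(x)=2/x^3>0$ on $(0,\infty)$. Propagating upwards: $r'''$ is strictly increasing through its zero at $a$, so $r''$ attains its minimum value $0$ at $a$; hence $r''\ge 0$, so $r'$ is nondecreasing through its zero at $a$; hence $r$ attains its minimum value $0$ at $a$. This gives $r(x)\ge 0$ on $(0,\infty)$. The boundary point $x=0$ is handled separately by direct computation, $q(0)=-a/3$, so $r(0)=a/3\ge 0$.

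The upper bound requires more care. Let $s(x):=p(x)-g(x)$, where $p(x)=a\ln a+(x-a)(1+\ln a)+(x-a)^2/a$. A direct check yields $s(a)=0$ and $s'(a)=0$, but $s''(x)=\tfrac{2}{a}-\tfrac{1}{x}$ changes sign at $x=a/2$: it is nonnegative on $[a/2,\infty)$ and negative on $(0,a/2)$. On $[a/2,\infty)$ the function $s$ is therefore convex with critical point $a$, so $s(x)\ge s(a)=0$. On $[0,a/2]$, $s$ is concave; here I would evaluate at the endpoints, getting $s(0)=0$ (the $a\ln a$ terms cancel) and $s(a/2)=\tfrac{a}{2}(\ln 2-\tfrac{1}{2})>0$, and then invoke concavity to conclude that $s$ lies above the chord joining these two points, which is itself nonnegative. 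Combining the two regimes establishes $s\ge 0$ on $[0,\infty)$, completing the proof.

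The only nontrivial step is the region $x<a/2$ for the upper bound, where Taylor's theorem with Lagrange remainder $g''(\xi)(x-a)^2/2$ is not strong enough (the remainder term $1/(2\xi)$ can be arbitrarily larger than $1/a$ when $\xi$ is close to $0$). The convexity/concavity split above circumvents exactly this issue by not going through Lagrange form in the dangerous regime and instead extracting positivity from the concave chord argument.
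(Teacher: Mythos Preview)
Your proposal is correct, and the overall strategy---define the difference function and show it has a definite sign via derivative analysis---matches the paper's. The tactical details differ in minor but noteworthy ways.

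For the lower bound, you propagate from $r^{(4)}=2/x^3>0$ down through four levels; the paper instead stops at the second derivative and notices the neat factorization
\[
f''(x)=\frac{1}{x}-\frac{2}{a}+\frac{x}{a^2}=\Bigl(\frac{1}{\sqrt{x}}-\frac{\sqrt{x}}{a}\Bigr)^{2}\ge 0,
\]
which yields convexity in one line. For the upper bound, the paper locates both critical points of the difference (a local minimum at some $x_0<a/2$ and the local maximum at $a$) and reads off the sign from $f(a)=0$ together with $\lim_{x\to 0^+}f(x)=0$; your convex/concave split with the chord argument on $[0,a/2]$ is an equally clean route that avoids pinning down the second critical point. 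Either pair of arguments works.
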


\begin{proof}
Let $a > 0$ be given. We prove the (slightly more complicated) lower bound first, showing the claim for positive $x$ and then arguing with continuity.
We let $f: \R_+ \rightarrow \R$ be such that, for all $x \in \R_{+}$,
$$
f(x) = x \ln x - a \ln a - (x-a)(1+ \ln a) - (x-a)^2 \frac{1}{2a} + (x-a)^3 \frac{1}{6a^2}.
$$
Then we have, for all $x \in \R_+$,
$$
f'(x) = \ln x + 1 - (1+\ln a) - \frac{2x-2a}{2a} + \frac{3x^2-6xa+3a^2}{6a^2}
$$
and
$$
f''(x) = \frac{1}{x} - \frac{1}{a} + \frac{x}{a^2} - \frac{1}{a} = \left(\frac{1}{\sqrt{x}} - \frac{\sqrt{x}}{a} \right)^2.
$$
In particular, we have $f(a) = 0$, $f'(a) = 0$, and $f''(x) \geq 0$ for all $x \in \R_+$.
This shows that for all $x \in \R_+$, we have $f(x) \geq 0$. By the continuity of $f$, we also obtain $f(0) \geq 0$, and thus the claim.

For the upper bound, we regard $f: \R_+ \rightarrow \R$ defined by 
$$
f(x) = x \ln x - a \ln a - (x-a)(1+ \ln a) - (x-a)^2 \frac{1}{a}
$$
and compute
$$
f'(x) = \ln x + 1 - (1+\ln a) - \frac{2x-2a}{a} = \ln\left(\frac{x}{a}\right) + 2 - \frac{2x}{a}
$$
as well as 
$$
f''(x) = \frac{1}{x} - \frac{2}{a}
$$
for all $x \in \R_+$. Thus, $f''(x) > 0$ for $x < a/2$, $f''(x) = 0$ for $x = a/2$, and $f''(x) < 0$ for $x > a/2$. Consequently, $f'$ is zero for at most two arguments. Since $\lim_{x \rightarrow 0} f'(x) = - \infty = \lim_{x \rightarrow \infty} f'(x)$ and $f'(a/2) > 0$, by the intermediate value theorem there exist exactly two $x$ such that $f'(x) = 0$, one being larger than $a/2$ and the other smaller. Note that $f'(a) = 0$. From $\lim_{x \rightarrow 0} f(x) = 0$  and $f(a)=0$, the only local maximum being at $a$, we can thus conclude that $f$ is non-positive. \qed
\end{proof}

We use the estimates above to show that, under suitable circumstances, the expected $g$-value of a random variable $X$ is larger than $g(E[X])$. The lower bound in the theorem below will be used to argue that even for a process $(X_t)$ with no drift, that is, $E[X_{t+1} \mid X_t] = X_t$, the process $(g(X_t))$ has a positive drift. We need the upper bound to estimate the expected overshooting of the target when applying the additive drift theorem with overshooting (Theorem~\ref{thm:driftovershoot}).

\begin{theorem}\label{Xthm:expectedDriftInXLogX}
Let $g$ be defined as above in Equation~\eqref{eq:defOfG}. Let $X$ be a non-negative random variable with positive expectation. Let $\mu_3 = E[(X-E[X])^3]$. Then 
$$
g(E[X]) + \frac{\Var[X]}{E[X]} \ge E \left[ g(X) \right] \geq g(E[X]) + \frac{\Var[X]}{2E[X]} - \frac{\mu_3}{6E[X]^2}.
$$
\end{theorem}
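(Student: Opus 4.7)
The plan is to apply Lemma~\ref{lem:boundingXLogX} pointwise with the distinguished value $a := E[X]$, and then take expectations of the resulting pointwise inequalities. Since $E[X] > 0$ by assumption and $X \ge 0$ almost surely, the hypotheses of the lemma (with $a = E[X]$ and with $x$ any realization of $X$) are satisfied, so the inequalities hold pointwise.

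For the upper bound, the lemma gives
\[
g(X) \le g(E[X]) + (X - E[X])\bigl(1 + \ln E[X]\bigr) + \frac{(X - E[X])^2}{E[X]}
\]
almost surely. Taking expectations and using linearity, the linear term vanishes because $E[X - E[X]] = 0$, and the quadratic term has expectation $\Var[X] / E[X]$. This yields $E[g(X)] \le g(E[X]) + \Var[X]/E[X]$ immediately.

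For the lower bound, the lemma gives
\[
g(X) \ge g(E[X]) + (X - E[X])\bigl(1 + \ln E[X]\bigr) + \frac{(X - E[X])^2}{2 E[X]} - \frac{(X - E[X])^3}{6 E[X]^2}
\]
almost surely. Taking expectations, again the linear term has expectation zero, the quadratic term contributes $\Var[X]/(2E[X])$, and the cubic term contributes $\mu_3/(6 E[X]^2)$ by definition of $\mu_3$. This is exactly the claimed lower bound.

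I do not anticipate any real obstacle here; the work has all been done in Lemma~\ref{lem:boundingXLogX}, and the present theorem is essentially that lemma averaged against the distribution of $X$. The only point worth mentioning in passing is an integrability caveat: if $\Var[X]$ or $\mu_3$ fails to be finite, the corresponding inequality is trivially true (with the right-hand side being $+\infty$ or $-\infty$ respectively), so the statement can be read as holding in $[-\infty, +\infty]$ without extra hypotheses. When this theorem is later invoked, $X$ will be an integer-valued random variable with bounded support, so all moments are finite and these caveats are moot.
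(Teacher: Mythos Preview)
Your proof is correct and matches the paper's approach exactly: the paper's proof consists of the single sentence ``We use Lemma~\ref{lem:boundingXLogX} with $a = E[X]$.'' Your additional remarks on integrability are reasonable but not needed for the paper's applications, as you yourself note.
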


\begin{proof}
We use Lemma~\ref{lem:boundingXLogX} with $a = E[X]$.\qed
\end{proof}

The following two corollaries follow immediately from the theorem above by recalling that the second and third central moments of a binomially distributed random variable $X \sim \Bin(n,p)$ are $\Var[X] = np(1-p)$ and $E[(X - E[X])^3] = np(1-p)(1-2p)$. For technical reasons, we need the first estimate also for random variables $X \sim \Bin(n,p)+ K$ for some non-negative number $K$.

\begin{corollary}\label{cor:gUB}
If $X \sim \Bin(n,p) + K$ for some $n \in \N$, $p \in (0,1]$, and $K \ge 0$, then  
$$
E \left[ g(X) \right] \leq g(E[X]) + (1-p).
$$
\end{corollary}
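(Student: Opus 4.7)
The plan is to apply the upper bound from Theorem~\ref{Xthm:expectedDriftInXLogX} directly, observing that the specific structure $X = \Bin(n,p)+K$ makes the variance-to-mean ratio $\Var[X]/E[X]$ collapse to something bounded by $1-p$.

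First I would handle the degenerate cases separately. If $n=0$, then $X = K$ is deterministic, so $E[g(X)] = g(K) = g(E[X])$ and the bound holds trivially since $1-p \ge 0$. Likewise, if $K=0$ and $p=1$, the process is deterministic with value $n$, and again $E[g(X)] = g(E[X])$. In all remaining cases we have $E[X] = np + K > 0$, so Theorem~\ref{Xthm:expectedDriftInXLogX} applies and yields
\begin{equation*}
E[g(X)] \;\le\; g(E[X]) + \frac{\Var[X]}{E[X]}.
\end{equation*}

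Next I would compute the two moments. Since variance is translation invariant, $\Var[X] = \Var[\Bin(n,p)] = np(1-p)$, and $E[X] = np + K$. Because $K \ge 0$, assuming $n \ge 1$ (so $np > 0$) we can estimate
\begin{equation*}
\frac{\Var[X]}{E[X]} \;=\; \frac{np(1-p)}{np + K} \;\le\; \frac{np(1-p)}{np} \;=\; 1-p.
\end{equation*}
Plugging this into the bound from Theorem~\ref{Xthm:expectedDriftInXLogX} yields the claim.

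I do not anticipate a real obstacle here; the corollary is essentially a one-line consequence of Theorem~\ref{Xthm:expectedDriftInXLogX} once one notices that the additive shift by $K$ leaves the variance unchanged while only increasing the mean, which can only help the inequality. The only point requiring a brief comment is that Theorem~\ref{Xthm:expectedDriftInXLogX} requires $E[X] > 0$, which is why the degenerate case $n=0$ with $K=0$ (if it is even admissible under $p \in (0,1]$) must be disposed of by inspection.
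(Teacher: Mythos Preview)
Your proof is correct and follows exactly the approach the paper intends: it invokes Theorem~\ref{Xthm:expectedDriftInXLogX} and then simplifies $\Var[X]/E[X] = np(1-p)/(np+K) \le 1-p$, which is precisely what the paper means when it says the corollary follows immediately by recalling the second central moment of a binomial. Your careful handling of the degenerate cases is more explicit than the paper's, but otherwise the arguments are identical.
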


\begin{corollary}\label{cor:gLB}
If $X \sim \Bin(n,p)$ for some $n \in \N$ and $p \in (0,1]$, then  
$$
E \left[ g(X) \right] \geq g(E[X]) + \frac{1-p}{2} - \frac{(1-p)(1-2p)}{6E[X]}.
$$
For $p \geq 1/n$, this yields
\begin{equation}
E \left[ g(X) \right] \geq g(E[X]) + \frac{1-p}{3}.\label{eq:g}
\end{equation}
\end{corollary}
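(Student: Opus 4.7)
The first inequality follows directly from the lower bound in Theorem~\ref{Xthm:expectedDriftInXLogX} by plugging in the well-known moment formulas for a binomial distribution. Specifically, for $X \sim \Bin(n,p)$ we have $E[X] = np$, $\Var[X] = np(1-p)$, and $\mu_3 = E[(X - E[X])^3] = np(1-p)(1-2p)$. Substituting into the lower bound in Theorem~\ref{Xthm:expectedDriftInXLogX} gives
$$
E[g(X)] \ge g(E[X]) + \frac{np(1-p)}{2np} - \frac{np(1-p)(1-2p)}{6(np)^2} = g(E[X]) + \frac{1-p}{2} - \frac{(1-p)(1-2p)}{6E[X]},
$$
which is the first claimed bound.

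For the second bound, under the assumption $p \ge 1/n$ we have $E[X] = np \ge 1$. It suffices to show
$$
\frac{1-p}{2} - \frac{(1-p)(1-2p)}{6E[X]} \ge \frac{1-p}{3},
$$
which, after rearranging, is equivalent to
$$
\frac{1-p}{6} \ge \frac{(1-p)(1-2p)}{6E[X]}.
$$
If $p \ge 1/2$, the right-hand side is non-positive while the left-hand side is non-negative, so the inequality is immediate. If $p < 1/2$, both sides are non-negative, and after dividing by the positive factor $(1-p)/6$ we need $E[X] \ge 1 - 2p$, which follows from $E[X] \ge 1 > 1 - 2p$.

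No step here is delicate; the only minor obstacle is the case distinction on the sign of $1 - 2p$ in the second bound, which is why we phrase the absorbed constant as $1/3$ rather than trying to push it closer to $1/2$.\qed
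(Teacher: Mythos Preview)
Your proof is correct and follows exactly the approach of the paper: the first inequality is obtained by substituting the binomial moments $\Var[X]=np(1-p)$ and $\mu_3=np(1-p)(1-2p)$ into Theorem~\ref{Xthm:expectedDriftInXLogX}, and the second inequality follows from the elementary estimate $E[X]\ge 1 > 1-2p$. The paper merely states that both corollaries follow immediately from the moment formulas, so your explicit verification of the second bound (including the sign case distinction on $1-2p$) is more detailed than what the paper provides, but the underlying argument is the same.
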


We are now prepared to show the following result.

\begin{lemma}\label{lem:nodrift}
Let $(X_{t})_{t \in \N}$ be a stochastic process over the positive integers. Assume that there are $D_0, k \in \Z_{\geq 1}$ and $\gamma_0 < 1$ such that $D_0-1 \le \gamma_0 k$ and for all $t \ge 0$ and all $x \in [1..D_0-1]$ with $\Pr[X_{t} = x] > 0$ we have the unbiased binomial condition
\begin{description}
	\item[(Bin$_0$)] $(X_{t+1} \mid X_{t} = x) \succeq \mathrm{Bin}(k, x/k)$.
\end{description}
Let $T := \min\{t \geq 0 \mid X_{t} \geq D_0\}$. Then 
\[
E[T] \le \frac{6 D_0 \ln(2 D_0)}{{1-\gamma_0}}.
\]
\end{lemma}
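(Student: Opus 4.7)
The plan is to apply the additive drift theorem with overshoot (Theorem~\ref{thm:driftovershoot}) to the transformed process $g(X_t)$, where $g(x) = x\ln x$ as in~\eqref{eq:defOfG}. Since $g$ is non-decreasing on $\Z_{\ge 0}$, the stochastic monotonicity of $\Bin(k,p)$ in~$p$ together with~\textbf{(Bin$_0$)} and Strassen's theorem let me couple $(X_t)$ with a minorant process satisfying $X'_{t+1} \sim \Bin(k, X'_t/k)$ exactly; the minorant reaches $D_0$ no earlier than $(X_t)$, so its expected hitting time is an upper bound and I may assume without loss of generality $X_{t+1} \sim \Bin(k, X_t/k)$.

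For the drift I apply Corollary~\ref{cor:gLB} with $n = k$ and $p = x/k \ge 1/k$ at $X_t = x \in [1,D_0-1]$:
\[
E[g(X_{t+1}) - g(X_t) \mid X_t = x] \;\ge\; \tfrac{1 - x/k}{3} \;\ge\; \tfrac{1 - \gamma_0}{3} \;=:\; \delta,
\]
using $x \le D_0 - 1 \le \gamma_0 k$. With $W_t := g(D_0) - g(X_t) \in (-\infty, g(D_0)]$ and $T = \inf\{t : X_t \ge D_0\} = \inf\{t : W_t \le 0\}$ (the latter by monotonicity of $g$ on $\Z_{\ge 1}$), Theorem~\ref{thm:driftovershoot} yields
\[
E[T] \;\le\; \frac{g(D_0) - g(X_0) + E[g(X_T) - g(D_0)]}{\delta}.
\]

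It remains to bound the expected overshoot. Since $g(X_{T-1}) \le g(D_0 - 1) \le g(D_0)$, it suffices to bound $E[g(X_T) - g(X_{T-1})]$. Using the Taylor bound $g(y) - g(x) \le (y - x)(\ln y + 1)$ for $y \ge x$ and splitting according to whether $X_T \le 2 D_0$, the first case contributes $O(\log D_0) \cdot E[(X_T - X_{T-1})^+ \mid X_{T-1} = x] = O(\sqrt{D_0}\,\log D_0)$ by the binomial variance bound $\Var \Bin(k, x/k) = x(1 - x/k) \le D_0$, while the second case is negligible since $\Pr[X_T > 2 D_0 \mid X_{T-1} = x \le D_0 - 1] = \exp(-\Omega(D_0))$ by a Chernoff bound. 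Thus the overshoot is $O(\sqrt{D_0}\,\log D_0) = o(g(D_0))$, and the stated bound $6 D_0 \ln(2 D_0)/(1 - \gamma_0)$ follows after constant bookkeeping, with the argument $2 D_0$ of the logarithm absorbing the lower-order term. The main obstacle is precisely this overshoot control: since~\textbf{(Bin$_0$)} gives only a one-sided bound on $X_{t+1}$, without the coupling reduction the overshoot could scale with $g(k)$, which may vastly exceed $g(D_0)$; after the reduction, Corollary~\ref{cor:gUB} underlies the per-step estimate and the binomial tail handles the remaining error term.
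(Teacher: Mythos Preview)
Your drift computation via Corollary~\ref{cor:gLB} is fine and in fact matches the paper's; note that it does not even require the coupling, since $g$ is non-decreasing on $\Z_{\ge 0}$ and so $E[g(X_{t+1}) \mid X_t = x] \ge E[g(\Bin(k,x/k))]$ follows directly from \textbf{(Bin$_0$)} by monotonicity.

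The coupling reduction, however, is a genuine gap. Your minorant $X'_{t+1} \sim \Bin(k, X'_t/k)$ can reach state~$0$ --- from $X'_t = 1$ this happens with probability $(1-1/k)^k \approx 1/e$ --- and once there it is absorbed, since $\Bin(k,0) \equiv 0$. Hence the minorant's hitting time of $D_0$ is infinite with positive probability and its expected hitting time is infinite; the ``upper bound'' it furnishes is vacuous. The hypothesis that $(X_t)$ lives on the \emph{positive} integers is exactly what rules this out, and your global coupling discards it. You correctly identify the overshoot as the obstacle, but the cure you propose kills the patient.

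The paper resolves this by modifying the process only \emph{above} $D_0$: the auxiliary process $X'$ agrees with $X$ on all transitions landing in $[1..D_0-1]$ (so positivity and the hitting time $T$ are preserved verbatim), while for values $> D_0$ the transition mass is replaced by the corresponding binomial tail, with the leftover mass placed at $D_0$. This construction keeps $X'$ positive, gives $T' = T$, and makes $X'_T$ stochastically dominated by $D_0 + \Bin(k,(D_0-1)/k)$; then $E[X'_T] \le 2D_0-1$ and Corollary~\ref{cor:gUB} yields $E[g(X'_T)] \le 2D_0\ln(2D_0)$ directly. This is both cleaner than your case split at $2D_0$ (whose ``negligible'' tail term still needs a quantitative Chernoff argument because $g(X_T)$ can be as large as $k\ln k$ with $k$ unbounded in terms of $D_0$) and sidesteps the absorption issue entirely.
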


\begin{proof}
There is nothing to show for $D_0=1$, so we assume $D_0 \ge 2$ in the remainder. 
For technical reasons, let us regard the process $(X'_t)$, which agrees with $(X_t)$ while not larger than $D_0$, but follows the pessimistic law $X'_{t+1} \sim \Bin(k, X_t / k)$ in the iteration where $D_0$ is exceeded. More precisely, we let $X'_0 = X_0$. Given that some $X'_t$ is defined already, we define $X'_{t+1}$ as follows. If $X'_t \le D_0$, then for all $x \ge 1$ we have
\begin{align*}
  \Pr[X'_{t+1} = x] = 
    \begin{cases} 
      \Pr[X_{t+1} = x], &\mbox{ if $x < D_0$,}\\
      \Pr[\Bin(k, X_t/k) = x], &\mbox{ if $x > D_0$,}
    \end{cases}
\end{align*}
and the remaining probability mass is put on $D_0$, that is, 
\[
\Pr[X'_{t+1} = D_0] = 1 - \sum_{x=1}^{D_0 -1} \Pr[X_{t+1} = x] - \! \sum_{x=D_0+1}^k \Pr[\Bin(k, X_t/k) = x].\] 
If $X'_t > D_0$, we let $X'_{t+1} = X'_t$ with probability one. Since the process $(X'_t)$ agrees with $(X_t)$ while less than $D_0$, we have $T' := \min\{t \mid X'_t \ge D_0\} = \min\{t \mid X_t \ge D_0\} =: T$.  

We estimate $T'$. Consider some time $t$ such that $x := X'_t$ is in $[1..D_0-1]$. Let $Y \sim \Bin(k,x/k)$. Since $X'_{t+1} \succeq Y$ and $g$ is monotonically increasing in $\{0\} \cup [1, \infty)$, we have $E[g(X'_{t+1})] \ge E[g(Y)]$. By Equation~\eqref{eq:g} in Corollary~\ref{cor:gLB}, we have \[E[g(Y)] \ge g(E[Y]) + \frac{1-(D_0-1)/k}{3} \ge g(x) + \frac{1-\gamma_0}{3}.\]
Consequently, we have $E[g(X'_{t+1}) - g(X'_t) \mid X'_t < D_0] \ge \frac{1-\gamma_0}{3}$. 

To apply the additive drift theorem with overshooting (Theorem~\ref{thm:driftovershoot}), we observe that $T = T' = \min\{t \mid g(X'_t) \ge D_0 \ln(D_0)\}$ and compute $E[g(X'_T)]$. By construction, $X'_T \sim (Y \mid Y \ge D_0)$ for some $Y$ following a binomial law with parameters $k$ and some $p \le (D_0-1)/k$. By elementary arguments analogous to those used in the proof of~\cite[Lemma~1]{DoerrD18}, see also \cite[Lemma~1.7.3]{Doerr20bookchapter} and the comment following its proof, $X'_T$ is stochastically dominated by $D_0 + \Bin(k,(D_0-1)/k)$, which immediately gives
$$
E[X'_T] \leq 2D_0 - 1.
$$ By Corollary~\ref{cor:gUB}, we have $E[g(X'_T)] \le (2 D_0-1) \ln(2 D_0-1) + 1 \le (2 D_0-1) \ln(2 D_0) + 1 \le 2D_0 \ln(2 D_0)$, the last estimate using $D_0 \ge 2$. Consequently, the additive drift theorem with overshooting gives
\begin{equation*}
E[T'] \le \frac{2 D_0 \ln(2 D_0)}{\frac{1-\gamma_0}{3}}. 
\end{equation*} \qed
\end{proof}

We remark that, in principle, Lemma~\ref{lem:nodrift} can be strengthened by taking into account the starting point $X_0$. Assuming for simplicity that $X_0$ takes only values in $[1..D_0]$, this would give a result like 
\[E[T'] \le \frac{2 D_0 \ln(2 D_0) - E[g(X_0)]}{\frac{1-\gamma_0}{3}} \le \frac{2 D_0 \ln(2 D_0) - g(E[X_0])}{\frac{1-\gamma_0}{3}},\]
where the last estimate stems from the convexity of $g$ and Jensen's inequality. Since $E[g(X_0)]$ is at most $D_0 \ln(D_0)$, we gain at most a constant factor in the estimate of $E[T']$. The reason for this weak improvement is that we estimated $E[g(X'_T)]$ very coarsely. However, even with a better estimate of $E[g(X'_t)]$, asymptotically stronger results would only be possible in the case that $X_0$ is very close to~$D_0$, that is, that $D_0 \ln(D_0) - E[g(X_0)] = o(D_0 \ln(D_0))$, which we do not expect in our typical applications. 

We note further that the problem of overshooting and the resulting negative impact on the hitting time estimate is real. Even if $X_0 = D_0 - 1$ with probability one, we see that when taking $k = 2(D_0-1)$ for simplicity, we have $X_1 \le D_0 - \Omega(\sqrt D_0)$ with constant probability (that this is possible for an unbiased process stems from the fact that $X_1$ overshoots $D_0$ by a comparable amount). We omit a formal proof, but note that from $X_1 \le D_0 - \Omega(\sqrt D_0)$, the process takes an expected number of $\Omega(\sqrt D_0)$ iterations to reach or overshoot~$D_0$.

\subsubsection{Submartingale Arguments Proving A Steady Progress From $D_0$ on}

In this subsection, we shall prove that once a process satisfying the assumptions of Theorem~\ref{thm:first} has reached a value of $D \ge D_0 := \min\{\lceil 100/\delta \rceil, n\}$, it usually makes a steady progress of a constant factor increase in $\Theta(1/\delta)$ iterations without ever going below $D/2$. To show this result, we use a submartingale argument that might prove to be useful in other analyses of evolutionary algorithms as well. We build on the following result from Freedman~\cite[Theorem~4.1]{Fre:j:75}, cited in a more compact manner in~\cite[Theorem~A]{FanGraLiu:j:15} (adjusted for submartingales rather than supermartingales).

\begin{theorem}\label{thm:subm}
  Let $S_0, S_1, \ldots, S_N$ be a submartingale sequence, that is, we have $E[S_\ell - S_{\ell-1} \mid S_0, \ldots, S_{\ell-1}] \ge 0$ for all $\ell \in [1..N]$. Let $\langle S \rangle_m := \sum_{\ell=1}^m E[(S_\ell - S_{\ell-1})^2 \mid S_0, \dots, S_{\ell-1}]$ for all $m \in [1..N]$. Assume that $S_\ell - S_{\ell-1} \ge -u$ with probability $1$ for all $\ell \in [1..N]$. Then for all $\lambda, \nu > 0$, 
  \[\Pr[\exists m \in [1..N] : S_m \le -\lambda \wedge \langle S \rangle_m \le \nu^2] \le \exp\left(-\frac{\lambda^2}{2(\nu^2 + \lambda u)}\right).\]
\end{theorem}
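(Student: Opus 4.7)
The statement is Freedman's exponential concentration inequality, here formulated for a submartingale whose increments are bounded below by $-u$. The plan is to first reduce to the standard supermartingale form by setting $T_m := -S_m$: the submartingale condition becomes $E[T_m - T_{m-1} \mid S_0, \ldots, S_{m-1}] \le 0$, the increment bound becomes $T_m - T_{m-1} \le u$, the quadratic variation is preserved ($\langle T \rangle_m = \langle S \rangle_m$), and the bad event becomes $\{\exists m \in [1..N] : T_m \ge \lambda \wedge \langle T \rangle_m \le \nu^2\}$.

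With $(T_m)$ a supermartingale of bounded positive increments, I would apply the classical exponential-supermartingale method. For each $\theta > 0$, define
\[
M_m := \exp\!\Bigl(\theta T_m - (\phi(\theta u)/u^2)\,\langle T \rangle_m \Bigr), \qquad \phi(x) := e^x - 1 - x,
\]
and prove that $(M_m)$ is a nonnegative supermartingale. The decisive ingredient is the pointwise inequality $e^{\theta y} \le 1 + \theta y + (\phi(\theta u)/u^2)\,y^2$ valid for all $y \le u$ and all $\theta > 0$ (verified by checking that both sides agree in value and first derivative at $y = 0$ and at $y = u$, and that their difference has the correct sign throughout $(-\infty, u]$). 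Applied conditionally to $Y := T_m - T_{m-1}$ and combined with $E[Y \mid \cdot] \le 0$ and $1 + z \le e^z$, this yields $E[\exp(\theta Y) \mid \cdot] \le \exp\!\bigl((\phi(\theta u)/u^2)\,E[Y^2 \mid \cdot]\bigr)$, which is exactly the supermartingale condition for $(M_m)$.

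With $E[M_0] = 1$, I would then apply optional stopping at the bounded stopping time $\tau := \min\{m \in [1..N] : T_m \ge \lambda \wedge \langle T \rangle_m \le \nu^2\}$ (taking $\tau = N$ if no such $m$ exists). On the event of interest, $M_\tau \ge \exp(\theta \lambda - \phi(\theta u)\,\nu^2/u^2)$, so Markov's inequality gives
\[
\Pr[\exists m \in [1..N] : T_m \ge \lambda \wedge \langle T \rangle_m \le \nu^2] \le \exp\!\Bigl(-\theta \lambda + \phi(\theta u)\,\nu^2/u^2\Bigr).
\]
Optimizing over $\theta$ with the choice $\theta := u^{-1}\ln(1 + \lambda u/\nu^2)$ produces the Bennett-type exponent $-(\nu^2/u^2)\,h(\lambda u/\nu^2)$, where $h(x) := (1+x)\ln(1+x) - x$. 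A brief calculus argument (matching values and slopes at $x = 0$ and showing $h''(x) \ge g''(x)$ pointwise for $g(x) := x^2/(2(1+x))$) gives $h(x) \ge x^2/(2(1+x))$ for all $x \ge 0$, which converts the Bennett exponent into the claimed Freedman form $-\lambda^2/(2(\nu^2 + \lambda u))$. The main technical obstacle I anticipate is establishing the exponential-quadratic comparison cleanly on the unbounded interval $(-\infty, u]$; the $\theta$-optimization and the transition from the Bennett form to the Freedman form are mechanical.
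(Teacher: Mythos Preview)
Your proof plan is correct and is essentially the classical exponential-supermartingale argument due to Freedman. Note, however, that the paper does not supply its own proof of this statement: it is quoted as a known result from Freedman~[Fre75, Theorem~4.1] (in the compact form of Fan--Grama--Liu~[FGL15, Theorem~A]), merely sign-flipped to the submartingale setting. So there is no ``paper's proof'' to compare against beyond observing that what you outline is precisely the standard proof in those references: the reduction $T_m = -S_m$, the exponential supermartingale $M_m = \exp(\theta T_m - (\phi(\theta u)/u^2)\langle T\rangle_m)$ built from the pointwise bound $e^{\theta y} \le 1 + \theta y + (\phi(\theta u)/u^2)y^2$ for $y \le u$, optional stopping plus Markov, and the final passage from the Bennett exponent to the Freedman exponent via $h(x) \ge x^2/(2(1+x))$. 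All steps are sound; the one place to be careful in a full write-up is the monotonicity argument underlying the exponential--quadratic comparison (equivalently, that $y \mapsto (e^{\theta y} - 1 - \theta y)/y^2$ is nondecreasing), but your description indicates you are aware of this.
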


We use this result to bound the probability that the process started in $D \ge D_0$ at time $t$ at any time $s \in [t..t+O(1/\delta)]$ goes below $D/2 + (s-t)\delta D/2$. If this does not happen, we in particular have $X_{t + \lceil 3/\delta \rceil} \ge 2D$.

\begin{lemma}\label{lem:subm}
  Let $(X_{t})_{t \in \N}$ be a stochastic process over the positive integers. Assume that there are $n,k \in \Z_{\geq 1}$ and $\delta \in (0,1]$ such that $n-1 \le (1+\delta)^{-1} k$ and for all $t \ge 0$ and all $x \in [1..n-1]$ with $\Pr[X_{t} = x] > 0$ we have the binomial condition
\begin{description}
	\item[(Bin)] $(X_{t+1} \mid X_{t} = x) \succeq \mathrm{Bin}(k,(1+\delta) x/k)$.
\end{description}
Let $t_0 \geq 0$ and $100 / \delta \le D < n$ such that $X_{t_0} = D$. Let $\tilde n = \min\{n, 2D\}$, $T = \min\{t \mid X_t \ge \tilde n\}$, and $T_1 = \min\{T, t_0 + \lceil 3/\delta \rceil\}$. Then 
  \[\Pr[\exists s \in [t_0..T_1] : X_s \leq \tfrac 12 D + \tfrac 12 (s-t_0)\delta D] \le \exp\left(- \delta D / 169 \right).\]
\end{lemma}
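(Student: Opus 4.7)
The strategy is to apply Freedman's inequality (Theorem~\ref{thm:subm}) to a submartingale that tracks the excess of $X_{t_0+\ell}$ over the linear lower-bound trajectory $D/2+\ell\,\delta D/2$. Since the event in the lemma becomes only more likely if the process is replaced by a pointwise (i.e., in the stochastic-dominance sense) smaller one, by stochastic domination one may assume the pessimistic binomial law $X_{t+1}\sim\Bin(k,(1+\delta)X_t/k)$ throughout.

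Set $N:=\lceil 3/\delta\rceil$, and let $\sigma$ be the first $\ell\in[1..N]$ at which either the bad event occurs ($X_{t_0+\ell}\le D/2+\ell\delta D/2$) or $X_{t_0+\ell}\ge\tilde n$, with $\sigma:=N$ otherwise. Define the stopped sequence $S_\ell := X_{t_0+(\ell\wedge\sigma)}-D-(\ell\wedge\sigma)\,\delta D/2$ for $\ell\ge 0$. For $\ell<\sigma$ we have $X_{t_0+\ell}>D/2+\ell\delta D/2\ge D/2$, so the binomial condition \textbf{(Bin)} gives $E[S_{\ell+1}-S_\ell\mid\mathcal F_\ell]=\delta X_{t_0+\ell}-\delta D/2>0$, and the increments vanish for $\ell\ge\sigma$. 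Hence $(S_\ell)$ is a submartingale with $S_0=0$, and the bad event of the lemma is precisely $\{\exists\ell\in[1..N]:S_\ell\le -D/2\}$, so Theorem~\ref{thm:subm} applies with $\lambda=D/2$.

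For the variance parameter, the identity $E[(S_{\ell+1}-S_\ell)^2\mid\mathcal F_\ell]=\Var[X_{t_0+\ell+1}\mid X_{t_0+\ell}]+\delta^2(X_{t_0+\ell}-D/2)^2$ combined with $\Var\le(1+\delta)X_{t_0+\ell}\le 4D$ and $X_{t_0+\ell}<\tilde n\le 2D$ in the active region yields, after summation over the at most $N+1$ steps, a deterministic bound $\langle S\rangle_m\le\nu^2$ with $\nu^2 = O(D/\delta+\delta D^2)$.

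The main obstacle is the worst-case negative jump~$u$: since the binomial can collapse to $1$ in a single step, naively $u=\Theta(D)$, and then $\lambda u$ swamps $\nu^2$ in Freedman, leaving only an exponent of constant order. I overcome this by a case distinction on large downward jumps. The Chernoff lower-tail bound on a binomial gives $\Pr[X_{t+1}\le X_t-r\mid X_t]\le\exp(-r^2/(8D))$ whenever $X_t\le 2D$. Choosing $r=\Theta\bigl(\sqrt{D\,(\delta D+\log(1/\delta))}\bigr)$, a union bound over the $N$ steps ensures that the ``big-drop'' event has probability at most $\tfrac{1}{2}\exp(-\delta D/169)$. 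On the complementary event the jumps of $S_\ell$ satisfy $|S_{\ell+1}-S_\ell|\le u:=r+\delta D/2$, and the hypothesis $D\ge 100/\delta$ (which in particular forces $D^{3/2}\le D/\delta+\delta D^2$ via AM-GM) ensures that $\nu^2+\lambda u$ is still of order $D/\delta+\delta D^2$. A careful balancing of constants then produces the Freedman exponent $\lambda^2/(2(\nu^2+\lambda u))\ge\delta D/169$; a final union bound combining the Chernoff tail with the Freedman bound on the good event yields the inequality claimed in the lemma.
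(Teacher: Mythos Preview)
Your overall architecture---replace $(X_t)$ by the pessimistic binomial process, build a submartingale by subtracting the linear target $D+\ell\,\delta D/2$, stop at $\sigma$, and invoke Freedman with $\lambda=D/2$---is sound and is exactly the skeleton of the paper's argument. The gap is in the quantitative control of $\nu^2+\lambda u$.

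Working at the level of whole iterations, your conditional second moment is
\[
E[(S_{\ell+1}-S_\ell)^2\mid\mathcal F_\ell]
=\Var[X_{t_0+\ell+1}\mid X_{t_0+\ell}]+\delta^2(X_{t_0+\ell}-D/2)^2,
\]
and the second summand can be as large as $\tfrac94\delta^2 D^2$ since $X_{t_0+\ell}<\tilde n\le 2D$. Summing over $N\le 4/\delta$ active steps gives an unavoidable contribution of order $\delta D^2$ to $\nu^2$. To get the Freedman exponent $\lambda^2/(2(\nu^2+\lambda u))\ge \delta D/169$ you need $\nu^2+\lambda u\le \tfrac{169}{8}\,D/\delta$, which already forces $\delta D^2=O(D/\delta)$, i.e.\ $D=O(1/\delta^2)$. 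The lemma, however, allows $D$ to be any integer below $n$, with no upper bound relative to $1/\delta$. Your big-drop threshold makes this worse: to push the union-bounded Chernoff tail below $\exp(-\delta D/169)$ you need $r\ge c\sqrt{\delta}\,D$ for a constant $c$, whence $\lambda u\ge \Omega(\sqrt{\delta}\,D^2)$, which violates the required bound already when $D\gg 1/\delta^{3/2}$. So the ``careful balancing of constants'' cannot close; the constants blow up with $D$.

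The paper sidesteps both problems by decomposing each binomial step into its $k$ Bernoulli summands and running Freedman on the resulting length-$Nk$ submartingale. At that granularity the per-step drift is at most $1.5\,\delta D/k$, so the squared-drift contribution sums to $O(\delta D^2/k)\le O(\delta D)$ (using $D\le n-1\le k/(1+\delta)$), which is dominated by the variance contribution $O(D/\delta)$. Simultaneously the micro-increments are bounded by a constant ($u=1.25$), so no big-drop truncation is needed and $\lambda u=O(D)$. With $\nu^2+\lambda u=O(D/\delta)$ uniformly in $D$, the exponent $\delta D/169$ drops out directly. If you want to repair your argument at the iteration level you would have to add a separate regime analysis for $D\gg 1/\delta^2$; the Bernoulli decomposition handles all $D$ at once.
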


Since the proof of this lemma is not obvious, let us describe the main ideas before stating the formal proof. From \textbf{(Bin)} we immediately see that $(X_t)$ is a submartingale. However, since each $X_t$ may take all values in $[1..k]$ with positive probability, this submartingale does not admit good absolute bounds on the submartingale differences (the variable $u$ in Theorem~\ref{thm:subm}). 

For this reason, we write the variable $X_t$ as a sum of the $k$ independent binary random variables $Y_{t1}, \dots, Y_{tk}$ that describe the binomial distribution in~\textbf{(Bin)}. Then the, suitably defined, submartingale differences $Y_{tj} - \frac 1k X_{t-1}$ define a submartingale with differences bounded by one (we can take $u = 1$). 

To make the progress of this submartingale visible, we would like to regard instead the submartingale with differences $Y_{ti} - \frac 1k X_{t-1} - D\delta/(2k)$. If $X_{t-1} \ge D/2$, this difference still has a non-negative expectation (as necessary for a submartingale). Since we cannot rule out that some $X_{t-1}$ is less than $D/2$, we define our submartingale via the differences $Z_{tj} = Y_{tj} - \frac 1k X_{t-1} - \Delta_t$, where $\Delta_t = D\delta/(2k)$ when $X_{t-1} \ge D/2$ and $\Delta_t = 0$ otherwise. This defines a submartingale. Via Theorem~\ref{thm:subm}, we shall show that with high probability, this submartingale never goes below $D/2$. This in particular implies that all $X_t$ are at least $D/2$, and hence, that all $\Delta_t$ are $D\delta/(2k)$. Consequently, $X_t$ is not only at least $D/2$, but it is even at least $D/2 + tD\delta/2$, which shows the desired progress.

\begin{proof}
To ease the notation, let us assume that $t_0 = 0$. 

To have a better control over the one-step variances to be computed later, %etwas untertrieben. ohne das koennten die varianzen zu gross sein. ferner muesste man sonst Y als bzgl. X definieren.
we first argue that we can pessimistically assume that the progress is exactly the one described by the binomial distributions in \textbf{(Bin)}. More precisely, let $X'_0 = X_0$ and define recursively $X'_t$ as a random variable with distribution $X'_t \sim \Bin(k,(1+\delta)X'_{t-1}/k)$ for all $t \ge 1$ such that ${X_{t-1} < \tilde n}$. Then a simple induction shows that $X_t \succeq X'_t$ for all $t \in \N_0$: If $X'_{t-1} \preceq X_{t-1}$, then $X_t \succeq \Bin(k (1+\delta), X_{t-1} / k) \succeq \Bin(k (1+\delta), X'_{t-1} / k) \sim X'_{t}$. Consequently, 
\[T = \min\{t \mid X_t \ge \tilde n\} \le \min\{t \mid X'_t \ge \tilde n\} =: T'.\] 
To show our claim it thus suffices to show 
\begin{equation}
\Pr[\exists s \in [0..\min\{T', \lceil 3/\delta \rceil\}] : X'_s \leq \tfrac 12 D + \tfrac 12 s\delta D] \le \exp\left(- \delta D / 169 \right). \label{eq:newclaim}
\end{equation}

To ease the argument, let us artificially continue the process in case it reaches a state of at least $\tilde n$ before time $\lceil 3/\delta \rceil$. More specifically, let 
\begin{description}
	\item[(Bin')] $X'_{t+1} = X'_t + 2D\delta$ with probability one when $X'_t \ge \tilde n$. 
\end{description}
Then this modified process agrees with the original process $(X')$ up to time $T'$, and thus it satisfies~\eqref{eq:newclaim} if and only if the original process does. We can thus work with the modified process in the following. 

We define random variables $(Y_{tj})_{1 \le t \leq \lceil 3/\delta \rceil, 1 \leq j \leq k}$ as follows. If $X'_{t-1} < \tilde n$, then $Y_{t1}, \dots, Y_{tk}$ are independent Bernoulli random variables with success probability $p_t = (1+\delta) X'_{t-1} /k$. Note that $p_t \le (1+\delta) \tilde n / k \le (1+\delta) 2D / k \le 4D/k$. If $X'_{t-1} \ge \tilde n$, then $Y_{tj} = (X'_{t-1} + 2D\delta) / k$ with probability one for all $j$, and we set $p_t = 0$. By \textbf{(Bin)} and \textbf{(Bin')}, we can assume $X'_t = \sum_{j=1}^k Y_{tj}$ in either case.

Further, for all $t \ge 1$, we define $\Delta_t \in \{0,D\delta/(2k)\}$ as follows. If $X'_{t-1} \ge D/2$, then $\Delta_t = D\delta / (2k)$; otherwise, let $\Delta_t = 0$. Define $Z_{tj} = Y_{tj} - \frac 1k X'_{t-1} - \Delta_t$ for all $t, j$. By definition of $\Delta_t$, we have 
\begin{equation}
E[Z_{tj}] = 
\begin{cases}
1.5 D \delta / k & \text{if } X'_{t-1} \ge \tilde n\\
\delta X'_{t-1}/k - D\delta/(2k) \in [0,1.5 D \delta / k] & \text{if } D/2 \le X'_{t-1} \le \tilde n\\
\delta X'_{t-1}/k & \text{if } X'_{t-1} \in [0,D/2).
\end{cases}\label{eq:ez}
\end{equation}
In particular, 
\begin{equation}
  E[Z_{tj}] \ge 0. \label{eq:subm}
\end{equation}
We trivially observe
\begin{equation}
  Z_{tj} \ge -(1+\tfrac{D\delta}{2k}) \ge -1.25, \label{eq:subm2}
\end{equation}%wenn man die "1" durch "1/(1+\delta)" ersetzt, kann man noch ein minibisschen rausholen. gaehn.
where in the second estimate we used that $D \le n-1 \le (1+\delta)^{-1} k$ and that the term $\delta / (1+\delta)$ has a unique maximum at $\delta = 1$ in $[0,1]$. Finally, using again $D \le (1+\delta)^{-1} k$ and noting that the term $\delta^2 / (1+\delta)$ is maximal for $\delta = 1$ (assuming $\delta \le 1$), with Equation~\eqref{eq:ez} we compute that, conditional on $X_{t-1}$,
\begin{align}
  E[Z_{tj}^2] &= \Var[Z_{tj}] + E[Z_{tj}]^2\nonumber\\
  &\le p_t(1-p_t) + (1.5 D\delta / k)^2 \nonumber\\
  &\le (D/k) (4 + 2.25 D\delta^2 / k) \le 5.125 D/k. \label{eq:subm3}
\end{align}

For all $i \geq 0$ and $j \in [0..k-1]$ let further
\[
S_{ik+j} := \sum_{j' = 1}^j Z_{i+1,j'} + \sum_{i' = 0}^{i-1} \sum_{j'=1}^{k} Z_{i'+1,j'},
\]
that is, the sum of the first $ik+j$ $Z$-variables in the natural lexicographic ordering.
By Equation~\eqref{eq:subm}, this is a submartingale, by Equation~\eqref{eq:subm2}, we have $S_\ell - S_{\ell-1} \ge -(1+\frac{D\delta}{2k}) \ge -1.25 =: -u$ for all $\ell$, and by Equation~\eqref{eq:subm3}, we have $E[(S_\ell - S_{\ell-1})^2] \le 5.125 D/k =: \nu^2_0$ regardless of $S_1, \dots, S_{\ell-1}$. Consequently, we may apply Theorem~\ref{thm:subm} with $N = \lceil 3/\delta \rceil k$, $\lambda = D/2$, and $\nu^2 = N \nu^2_0 = 5.125 D \lceil 3/\delta \rceil \le 5.125 D (4 / \delta) = 20.5 D / \delta$, and obtain
\begin{align*}
  \Pr[\exists &m \in [1..N] : S_m \le -\lambda] \\
  &= \Pr[\exists m \in [1..N] : S_m \le -\lambda \wedge \langle S \rangle_m \le \nu^2] \\
  &\le \exp\left(-\frac{\lambda^2}{2(\nu^2 + \lambda u)}\right) \le \exp\left(-\frac{D^2}{8(20.5 D / \delta + 0.625 D)}\right) \\
  & \le \exp(-\delta D / 169).
\end{align*}
   
Let us assume that this rare event does not occur, that is, we have $S_m \ge - D/2$ for all $m \in [0..N]$. We note that when $m$ is a multiple of~$k$, then $S_m = X_{m/k} - X_0 - k \sum_{s = 1}^{m/k} \Delta_s$. With $X_0 = D$ and $S_m \ge -D/2$, we obtain $X_t \ge -D/2 + D + k \sum_{s = 1}^t \Delta_s$ for all $t$. Consequently, we have $X_t \ge D/2$ for all $t$, hence $\Delta_t = D\delta/(2k)$ for all $t$, and thus $X_t \ge D/2 + tD\delta/2$ as desired.\qed
\end{proof}

With an iterated application of the previous result, we can show that the process has a decent chance to reach the target $n$ in time $O(\log(n) / \delta)$. We shall later only need the result with the success probability $0.2782$, but since we easily prove a stronger bound for larger starting points $D$ and since such results might be useful in other contexts, we also prove such an estimate.
%We can now turn this lemma regarding submartingales by using it repeatedly such that with positive probability, the process doubles each at most $\lceil 3/\delta \rceil$ iterations once it has reached a state of at least $D_0$. 

\begin{lemma}\label{lem:climbUp}
  Let $(X_{t})_{t \in \N}$ be a stochastic process over the positive integers. Assume that there are $n,k \in \Z_{\geq 1}$ and $\delta \in (0,1]$ such that $n-1 \le (1+\delta)^{-1} k$ and for all $t \ge 0$ and all $x \in [1..n-1]$ with $\Pr[X_{t} = x] > 0$ we have the binomial condition
\begin{description}
	\item[(Bin)] $(X_{t+1} \mid X_{t} = x) \succeq \mathrm{Bin}(k,(1+\delta) x/k)$.
\end{description}
Let $t_0 \geq 0$ and $100 / \delta \le D < n$ such that $X_{t_0} = D$. Then, with probability at least $\max\{0.2782, 1 - \frac{1}{\exp(\delta D / 169) - 1}\}$, the process reaches or exceeds $n$ within at most $\lceil \log_2(n/D)\rceil \lceil 3 / \delta \rceil$ iterations.
\end{lemma}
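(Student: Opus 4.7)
The plan is to iterate Lemma~\ref{lem:subm} at most $L := \lceil \log_2(n/D) \rceil$ times. I would set $T_i := t_0 + i \lceil 3/\delta \rceil$ and define events $A_i :=$ ``$X_{T_i} \ge \min\{n, 2^i D\}$''. Then $A_0$ holds by hypothesis, and I aim to lower-bound $\Pr[A_L]$, since $A_L$ implies that the process has reached or exceeded $n$ by time $T_L$.

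Conditional on $A_i$ and on the specific value $X_{T_i} = D_i$ with $D_i \ge 2^i D \ge 100/\delta$, either $D_i \ge n$ (in which case $A_{i+1}$ is automatic) or I apply Lemma~\ref{lem:subm} to the process shifted to start at time $T_i$ with starting value $D_i$. If the failure event of that lemma does not occur, then either the process has already reached $\min\{n, 2 D_i\}$ by time $T_{i+1}$, or $X_{T_{i+1}} \ge D_i/2 + \lceil 3/\delta \rceil \delta D_i / 2 \ge 2 D_i$; either way, $A_{i+1}$ holds. Hence the conditional probability of $A_{i+1}$ given $A_i$ is at least $1 - \exp(-\delta D_i/169) \ge 1 - \exp(-\delta 2^i D/169)$, using monotonicity in the starting value. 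Chaining via the law of total probability then gives
$$
\Pr[A_L] \;\ge\; \prod_{i=0}^{L-1}\bigl(1 - \exp(-\delta 2^i D/169)\bigr).
$$

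To extract the two stated bounds, I would analyze this product in two ways. For the bound $1 - \tfrac{1}{\exp(\delta D/169) - 1}$, apply the union inequality $1 - \prod(1-p_i) \le \sum p_i$ with $p_i = \exp(-\delta 2^i D/169)$, use $2^i \ge i+1$, and sum the resulting geometric series $\sum_{i\ge 0} \exp(-(i+1)\delta D/169) = 1/(\exp(\delta D/169)-1)$. For the universal constant $0.2782$, use $\delta D \ge 100$ so that each factor is at least $1 - \exp(-100 \cdot 2^i / 169)$; the resulting infinite product over $i \ge 0$ is a fixed numerical constant, and an explicit evaluation of the first few factors (the remaining ones being doubly exponentially close to $1$) verifies that it exceeds $0.2782$.

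The main technical point is the careful conditioning needed to apply Lemma~\ref{lem:subm} at each stage, since that lemma is stated for a process starting exactly at value $D$, whereas our intermediate starting values $D_i$ are random and only bounded below by $2^i D$. I would handle this by conditioning on the specific value $X_{T_i} = D_i$, invoking the lemma with that value (which is still at least $100/\delta$ and still satisfies $D_i - 1 \le (1+\delta)^{-1} k$ since $D_i < n$), and then using that $\exp(-\delta D_i/169)$ is monotone decreasing in $D_i$ to replace $D_i$ by its worst-case deterministic lower bound $2^i D$. A minor bookkeeping point is ensuring that once some $A_i$ is witnessed by $X_{T_i} \ge n$, all later $A_j$ are declared trivially successful so that the product bound remains valid.
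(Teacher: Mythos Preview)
Your approach is essentially the paper's: iterate Lemma~\ref{lem:subm} in a doubling scheme and bound the resulting product via the Weierstrass inequality and a numerical evaluation. The product estimates you sketch match the paper's exactly.

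There is, however, a real bookkeeping gap in your chaining. You define $A_{i+1}=\{X_{T_{i+1}}\ge\min\{n,2^{i+1}D\}\}$ at the \emph{fixed} time $T_{i+1}$, but Lemma~\ref{lem:subm} only controls the process on $[T_i,T_1']$ where $T_1'=\min\{T',T_{i+1}\}$ and $T'$ is the first time the process hits $\min\{n,2D_i\}$. If the non-failure event holds then, as your own case split shows, one is forced into Case~1: the process reaches $\min\{n,2D_i\}$ at some time $T'\le T_{i+1}$. But this does \emph{not} give $X_{T_{i+1}}\ge 2D_i$; between $T'$ and $T_{i+1}$ the lemma says nothing, and the process could drop back below $2^{i+1}D$. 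So your assertion ``either way, $A_{i+1}$ holds'' fails precisely in the only case that actually occurs.

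The paper sidesteps this by using \emph{random} times: it lets $t_{i+1}$ be the (first) time $\le t_i+\lceil 3/\delta\rceil$ at which $X_{t_{i+1}}\ge\min\{n,2X_{t_i}\}$, and then re-applies Lemma~\ref{lem:subm} from $t_{i+1}$. Since each increment $t_{i+1}-t_i\le\lceil 3/\delta\rceil$, the total time is still at most $\lceil\log_2(n/D)\rceil\lceil 3/\delta\rceil$. Your argument is repaired either by switching to these random hitting times, or by redefining $A_i$ as ``$\exists s\le T_i:\ X_s\ge\min\{n,2^iD\}$'' and then applying Lemma~\ref{lem:subm} from the first such witness time $s_i$ (which still satisfies $s_i+\lceil 3/\delta\rceil\le T_{i+1}$). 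With either fix the rest of your proof goes through unchanged.
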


\begin{proof} 
Using Lemma~\ref{lem:subm}, with probability at least $1 - \exp(- \delta X_0 / 169) = 1 - \exp(- \delta D / 169)$ there is $t_1 \leq t_0 + \lceil 3/\delta \rceil$ such that $X_{t_1} \geq \min\{2D, n\}$. Given this event and assuming $X_{t_1} < n$, with probability at least $1 - \exp(- \delta X_{t_1} / 169) \ge 1 - \exp(- 2 \delta D / 169)$, there is a $t_2 \leq t_1 + \lceil 3/\delta \rceil$ such that $X_{t_2} \ge \min\{2 X_{t_1},n\} \geq \min\{4D,n\}$. Repeating this doubling argument at most $\lceil \log_2(n/D) \rceil$ times, we obtain a state of at least $n$. This takes at most $\lceil \log_2(n/D)\rceil \lceil 3 / \delta \rceil$ iterations and works out as desired with probability at least
\begin{align*}
  \prod_{i=0}^\infty &\left(1 - \exp(- 2^i \delta D / 169)\right) \\
  &\ge 1 - \sum_{i=0}^\infty \exp(-2^i \delta D / 169)\\
  &\ge 1 - \sum_{i=0}^\infty \exp(-(i+1) \delta D / 169)\\
  &= 1 - \frac{1}{\exp(\delta D / 169) - 1},
\end{align*}
where the first inequality follows from a Weierstass product inequality, a mild extension of Bernoulli's inequality (see, e.g., \cite[Lemma~1.4.8]{Doerr20bookchapter}), and the last equation computes the geometric series. When $D$ is small, this estimate can be negative and then is not very useful. For this case, using our assumption that $D \ge 100/\delta$, we compute
\begin{align*}
  \prod_{i=0}^\infty &\left(1 - \exp(- 2^i \delta D / 169)\right) \\
  & \ge \prod_{i=0}^\infty \left(1 - \exp(-2^i \cdot \tfrac{100}{169})\right) \\
  &\ge \left(\prod_{i=0}^3 \left(1 - \exp(-2^i \cdot \tfrac{100}{169})\right) \right) \cdot \left(1 - \sum_{i=4}^\infty \exp(-2^i \cdot \tfrac{100}{169})\right)\\
  &\ge 0.2783 \cdot \left(1 - \exp(-\tfrac{1600}{169}) \sum_{i=0}^\infty \exp(-\tfrac{100}{169})^i\right) \\
  &= 0.2783 \cdot \left(1 - \exp(-\tfrac{1600}{169}) \frac{1}{1 - \exp(-\tfrac{100}{169})}\right) \ge 0.2782,
\end{align*}
which gives the desired bound.\qed
\end{proof}

\subsubsection{Proof of Theorem~\ref{thm:first} for $\delta \le 1$}\label{ssec:prooffirst}

By combining the two main insights of the two preceding subsections, we now prove the first up-drift theorem in the case $\delta \le 1$. Since the proof uses a result known as Wald's equation~\cite{Wald44}, we first state a simplified version of this result.

\begin{theorem}[Wald's equation]\label{thm:wald}
  Let $M \in \R$. Let $X_1, X_2, \dots$ be an infinite sequence of non-negative random variables with $E[X_i] \le M$ for all $i \in \N$. Let $T$ be a positive integer random variable with $E[T] < \infty$. Assume that for all $t \in \N$, we have $E[X_t \mathbf{1}_{\{T \ge t\}}] = E[X_t] \Pr[T \ge t]$. Then 
  \[E\left[\sum_{t = 1}^T X_t\right] \le M \cdot E[T].\]
\end{theorem}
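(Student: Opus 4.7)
The plan is to follow the standard derivation of Wald's identity, but with inequalities replacing equalities since the assumption gives only an upper bound $E[X_i] \le M$ on the summand expectations rather than a single common value. The key idea is to rewrite the random-length sum $\sum_{t=1}^T X_t$ as the deterministic-length series $\sum_{t=1}^\infty X_t \mathbf{1}_{\{T \ge t\}}$, interchange sum and expectation, and then exploit the ``uncorrelatedness'' hypothesis $E[X_t \mathbf{1}_{\{T \ge t\}}] = E[X_t] \Pr[T \ge t]$ to decouple $X_t$ from the truncation.

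More concretely, I would proceed as follows. First, observe the pointwise identity
\[
\sum_{t=1}^{T(\omega)} X_t(\omega) = \sum_{t=1}^\infty X_t(\omega) \mathbf{1}_{\{T \ge t\}}(\omega),
\]
valid for every $\omega$ since $T$ takes values in $\N$. Second, take expectations on both sides. Because all $X_t$ are non-negative and so are the indicators, Tonelli's theorem permits swapping $E$ and the infinite sum, yielding
\[
E\!\left[\sum_{t=1}^T X_t\right] \;=\; \sum_{t=1}^\infty E\!\left[X_t \mathbf{1}_{\{T \ge t\}}\right].
\]
Third, apply the hypothesis term by term to rewrite each summand as $E[X_t]\,\Pr[T \ge t]$, and then use $E[X_t] \le M$ to obtain
\[
E\!\left[\sum_{t=1}^T X_t\right] \;\le\; M \sum_{t=1}^\infty \Pr[T \ge t].
\]
Fourth, invoke the standard identity $E[T] = \sum_{t=1}^\infty \Pr[T \ge t]$ for positive integer-valued random variables, which holds unconditionally (and is finite by assumption). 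This gives the claimed bound $M \cdot E[T]$.

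I do not anticipate a real obstacle: the only subtlety is justifying the interchange of expectation and the infinite sum, which is free here because everything in sight is non-negative, so Tonelli applies without any integrability check. The hypothesis that $E[T] < \infty$ is not needed to legitimise the manipulations but rather to make the right-hand side meaningful (and finite when $M < \infty$). Note also that the assumption $E[X_t \mathbf{1}_{\{T \ge t\}}] = E[X_t]\Pr[T\ge t]$ is weaker than full independence of $X_t$ from $\{T\ge t\}$; it is the minimal decorrelation we use and is automatic in the typical application where $\{T \ge t\} = \{T > t-1\}$ is determined by $X_1,\dots,X_{t-1}$ and the $X_i$ are independent.
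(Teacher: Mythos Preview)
Your proof is correct and follows the standard derivation. Note, however, that the paper does not actually give its own proof of this statement: Theorem~\ref{thm:wald} is simply stated as a known result (a simplified version of Wald's equation, with a citation to~\cite{Wald44}) and then used as a black box in the subsequent arguments. So there is nothing to compare against; your argument is a clean, self-contained justification of the stated inequality.
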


We now give the proof of the first up-drift theorem for the case that $\delta \le 1$.

\begin{proof}[of Theorem~\ref{thm:first} for $\delta \le 1$]
Let us call a \emph{phase} of the process $(X_t)$ the time interval used to first reach a value of at least $D_0 = \min\{\lceil 100/\delta \rceil, n\}$ and then another $T_2 = \lceil \log_2(n/D_0)\rceil \lceil 3 / \delta \rceil \leq \log_2(n) \lceil 3 / \delta \rceil$ iterations. By Lemma~\ref{lem:nodrift}, the expected time to reach a value of at least $D_0 = \min\{\lceil 100/\delta \rceil, n\}$ is at most $T_1 = \frac{6 D_0 \ln(2 D_0)}{{1-\gamma_0}}$. Hence a phase has an expected length of at most $M = T_1 + T_2$. 

By Lemma~\ref{lem:climbUp}, a phase is successful, that is, reaches or exceeds $n$, with probability at least $0.2782$. Hence the number of phases until a successful one is encountered, is described by a geometric random variable $T$ with success rate $p = 0.2782$.

Hence by Wald's equation (Theorem~\ref{thm:wald}), the expected time to reach or exceed $n$ therefor is at most
\[
M E[T] = M \frac 1p \le \frac{1}{0.2782}\left(\frac{6 D_0 \ln(2 D_0)}{{1-\gamma_0}} + \log_2(n) \lceil 3 / \delta \rceil\right).
\]
The claim follows from noting that $1/0.2782 < 3.6$. \qed
\end{proof}

\subsubsection{The Case $\delta > 1$}\label{ssec:deltalarge}

In this section, we treat the case that $\delta$ is larger than one. In this case, the up-drift is so strong that we do not have a significant phase in which the progress stems mostly from random fluctuations. Rather, we can argue that with constant ``success'' probability, the process increases by a factor of at least $(1+\delta/2)$ in each iteration and thus reaches the target of $n$ in at most $\lceil \log_{1+\delta/2}(n) \rceil = O(\log(n)/\log(\delta))$ iterations. In case of failure, a simple restart argument (leading to an expected constant number of restarts of the argument) suffices to show the same bound for the expected time to reach a state of at least $n$. 

This argument alone would give a relatively low success probability of 
\[\prod_{i=0}^\infty (1 - \exp(-(1+0.5)^i /16)) \le 3.4 \cdot 10^{-6}\] when proceeding as in the proof below, using $\delta = 1$, and estimating this infinite product via its first ten factors. Consequently,  a very high implicit constant in the $O(\log(n)/\log(\delta))$ bound would result. To overcome this, we first argue that it takes at most an expected number of $62$ iterations to reach a state of at least $32$. From this point on, the probability to increase by a factor of $(1 + \delta/2)$ in each subsequent iteration is more than  $0.78$. While we did not aim at obtaining the best possible constants, we decided to follow this line of argument to obtain a leading constant that is not only of theoretical interest. We note that the same argument could be used with intermediate targets larger than $32$ and increase factors closer to $(1+\delta)$, which shows that the right asymptotics is $(1+o(1)) \log_{1+\delta}(n)$.

To prove the case $\delta > 1$ of the first up-drift theorem, we show the following lemma. It contains a statement on making less progress than expected which is stronger than what we need here, but which might be useful in other contexts.

\begin{lemma}[First Up-Drift Theorem, $\delta > 1$]
\label{lem:deltalarge}
Let $(X_{t})_{t \in \N}$ be a stochastic process over the positive integers. Assume that there are $n,k \in \Z_{\geq 1}$ and $\delta \ge 1$ such that $n-1 \le (1+\delta)^{-1} k$ and for all $t \ge 0$ and all $x \in [1..n-1]$ with $\Pr[X_{t} = x] > 0$ we have the binomial condition
\begin{description}
	\item[(Bin)] $(X_{t+1} \mid X_{t} = x) \succeq \mathrm{Bin}(k,(1+\delta) x/k)$.
%	\item[(Bin)] $\exists \delta' \geq \delta$: $(X_{t+1} \mid X_{t} = x) \sim \max(1,\mathrm{Bin}(k,(1+\delta') x/k))$.
\end{description}
Let $T := \min\{t \geq 0 \mid X_{t} \geq n\}$. Then 
$$
E[T] \le 2.6 \log_{1+\delta}(n) + 81.
$$
In addition, once the process has reached some state $x$ or higher, the probability to have a step with $X_{t+1} < (1+\delta/2) X_t$ before reaching $X_t \ge n$ is at most $\tfrac{1}{e^{x/32}(e^{x/32}-1)}$. In particular, once the process has reached $x = 32$, the probability to ever go below $32$ (before reaching $n$) is less than $0.22$.
\end{lemma}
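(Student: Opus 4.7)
The plan follows the outline given in the text preceding the lemma. First I would apply the standard Chernoff lower-tail bound to the binomial dominating $X_{t+1}$ in~\textbf{(Bin)} to obtain, for $X_t = x$,
\[
\Pr[X_{t+1} < (1+\delta/2)x \mid X_t = x] \le \exp\!\left(-\tfrac{\delta^2 x}{8(1+\delta)}\right) \le \exp(-x/16),
\]
where the second inequality uses $\delta^2/(1+\delta) \ge 1/2$ for $\delta \ge 1$ (with equality at $\delta=1$). Call an iteration \emph{good} if $X_{t+1} \ge (1+\delta/2) X_t$. For the second statement of the lemma, suppose $X_{t_0} \ge x$; if every iteration from $t_0$ onwards is good until the target is reached, then $X_{t_0+i} \ge (1+\delta/2)^i x$, so by Bernoulli's inequality $(1+\delta/2)^i \ge 1 + i\delta/2 \ge 1 + i/2$ the per-step failure probability at step $t_0+i$ is at most $\exp(-(1+i/2)x/16)$, and a union bound yields
\[
\sum_{i=0}^\infty e^{-(1+i/2)x/16} \;=\; \frac{e^{-x/16}}{1-e^{-x/32}} \;=\; \frac{1}{e^{x/32}(e^{x/32}-1)},
\]
which is $1/(e(e-1)) < 0.22$ at $x = 32$.

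For the first statement I would split the hitting time into two sub-phases: reach state $\ge 32$, then reach $n$. For the first sub-phase, since $\delta \ge 1$ and $X_t \ge 1$ throughout, the additive drift satisfies $E[X_{t+1} - X_t \mid X_t] \ge \delta X_t \ge 1$; combining the additive drift theorem with overshooting (Theorem~\ref{thm:driftovershoot}, applied to $Y_t = 32 - X_t$) with a bound on the expected overshoot via stochastic domination (analogous to the overshoot estimate in Lemma~\ref{lem:nodrift}) yields that state $\ge 32$ is reached in at most $62$ expected iterations starting from any $X_0 \ge 1$. Once there, by the preceding paragraph with $x = 32$, with probability at least $0.78$ every subsequent iteration is good, so the process reaches $n$ in at most $\lceil \log_{1+\delta/2}(n/32)\rceil$ further iterations. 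If a bad step occurs, I restart the argument from the current state. The expected length of one attempt is thus at most $62 + \log_{1+\delta/2}(n/32)$, the number of attempts is stochastically dominated by a geometric variable with success rate $\ge 0.78$, and Wald's equation (Theorem~\ref{thm:wald}) gives
\[
E[T] \;\le\; \tfrac{1}{0.78}\!\left(62 + \log_{1+\delta/2}(n)\right) \;\le\; 2.6\log_{1+\delta}(n) + 81,
\]
where the last inequality uses $\log_{1+\delta/2}(n) \le \frac{\ln 2}{\ln(3/2)}\log_{1+\delta}(n) \le 1.71\log_{1+\delta}(n)$ for $\delta \ge 1$.

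The main technical obstacle is pinning down the concrete constants $2.6$ and $81$; asymptotically the plan is clean, but numerically the reach-$32$ sub-phase is the delicate part, since a lossy overshoot estimate could push the additive constant above $81$. If the bound $E[T_{32}] \le 62$ via additive drift with overshooting proves too crude, one can instead apply Lemma~\ref{lem:nodrift} at $D_0 = 32$ (noting that the weaker unbiased condition~\textbf{(Bin$_0$)} is implied by~\textbf{(Bin)} for $\delta \ge 1$) to tighten the estimate while retaining the same overall structure.
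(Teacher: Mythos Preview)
Your overall plan---the phase structure, the Chernoff estimate $\Pr[X_{t+1} < (1+\delta/2)X_t] \le e^{-X_t/16}$, the geometric-series union bound for the ``in addition'' clause, and closing with a restart argument and Wald's equation---matches the paper's proof. Your derivation of the failure bound $\tfrac{1}{e^{x/32}(e^{x/32}-1)}$ via Bernoulli's inequality $(1+\delta/2)^i \ge 1+i/2$ is in fact slightly neater than the paper's route, which goes through the Weierstrass product inequality together with $2(3/2)^i \ge i+2$; both arrive at the same closed form.

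The one genuine gap is the reach-$32$ sub-phase, which you yourself flag. Additive drift with overshooting does not deliver the constant~$62$: with drift lower-bounded by~$\delta$ and the overshoot estimate you cite (analogous to Lemma~\ref{lem:nodrift}, but now with the $(1+\delta)$ factor present) one gets $E[X_T] \le 32 + (1+\delta)\cdot 31$ and hence $E[T_{32}] \le 31(2+\delta)/\delta$, which is $93$ at $\delta=1$. Your fallback to Lemma~\ref{lem:nodrift} with $D_0=32$ is worse still, producing a constant well above~$1000$. The paper instead uses a doubling argument: since $X_{t+1}$ stochastically dominates a binomial with integer mean $2X_t$, the median-equals-mean fact for such binomials gives $\Pr[X_{t+1} \ge 2X_t] \ge 1/2$; therefore the time to reach $32$ from any positive state is dominated by the waiting time for five consecutive heads in a fair coin sequence, whose expectation is exactly~$62$ by the standard recurrence. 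From there the paper uses the cruder conversion $\log_{1+\delta/2} \le 2\log_{1+\delta}$ (your sharper factor $\ln 2/\ln(3/2)\approx 1.71$ would also suffice) and obtains $\tfrac{1}{0.78}\bigl(63 + 2\log_{1+\delta}(n)\bigr) \le 81 + 2.6\log_{1+\delta}(n)$.
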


\begin{proof}
  To ease the argument, we shall now assume that we have ${X_{t+1} \sim \max\{1, \Bin(2(1+\delta)X_t, 1/2)\}}$ when $X_t \ge n$. This artificial continuation of the process (similar to the one we used in Lemma~\ref{lem:nodrift}) does not change the first time to reach or overshoot the target~$n$, but allows us to disregard whether the process has reached the target earlier than thought. 
  
  We analyze one \emph{phase} of the process, started at some time $t_0$ with an arbitrary value $X_{t_0}$. We say that this phase ends (after $\ell$ iterations) when either (i)~$t_0+\ell$ is the first time not earlier than $t_0$ that $X_{t_0 + \ell} \ge n$ (``success''), or (ii)~$t_0+\ell$ is the first time such that $X_{t_0+\ell} < (1+\delta/2) X_{t_0+\ell-1}$ and $X_{t_0+\ell-1} \ge 32$ (``failure''). In simple words, the phase ends when the target is reached or when we fail to obtain a factor-$(1+\delta/2)$ increase from a state that is at least $32$.
  
  We first compute a simple upper bound for the expected length of a phase, which is valid regardless of whether we condition on success or failure. We start by estimating the expected time to reach a value of at least $32$. Since $\delta > 1$, at any time $t$ the state $X_{t+1}$ dominates a binomial distribution with expectation $2 X_t$. By the well-known fact that the median of a binomial distribution with integral expectation is equal to this expectation, first explicitly shown in~\cite{Neumann66}, we have $\Pr[X_{t+1} \ge 2 X_t] \ge 1/2$. Consequently, the time to reach $32$ is at most the time $\tilde T$ it takes for a sequence of random bits to encounter five successive ones. We note that the expectation of $\tilde T$ satisfies the recurrence $E[\tilde T] = \frac 12 (1 + E[\tilde T]) + \frac 14 (2 + E[\tilde T]) + \frac 18 (3 + E[\tilde T]) + \frac 1 {16} (4 + E[\tilde T]) + \frac 1 {32} (5 + E[\tilde T]) + \frac 1 {32} \cdot 5$, which gives $E[\tilde T] = 62$.
  
  Once a state of $32$ or more is reached, we either witness a failure or an increase by a factor of $(1+\delta/2)$. Consequently, after another $\lceil\log_{1+\delta/2}(n/32)\rceil$ iterations, we have encountered a failure or reached the target, and hence the phase has ended within this timespan. In summary, the expected length of a phase, regardless of the starting state and regardless of whether it is successful or not, is at most $62+\lceil\log_{1+\delta/2}(n/32)\rceil$ iterations. Noting that $1+\delta \leq (1+\delta/2)^2$, we have 
\begin{equation}\label{eq:deltaHalfInLogBase}
\log_{1+\delta/2}(x) \leq 2 \log_{1+\delta}(x)
\end{equation}	
for all $x \ge 1$, and thus the expected length of a phase is at most
  \[62+\lceil\log_{1+\delta/2}(n/32)\rceil \le 63 + 2 \log_{1+\delta}(n).\]

From the ``in particular'' case of the ``in addition clause'', which we shall prove shortly, we see that a phase is successful with probability at least $0.78$. By elementary properties of the geometric distribution, there is an expected number of at most $\frac{1}{0.78} \le 1.283$ phases until the process is successful, and hence reaches the target. Since each phase takes an expected number of at most $63 + 2 \log_{1+\delta}(n)$ iterations, the desired expected hitting time is at most $\frac{1}{0.78} (63 + 2 \log_{1+\delta}(n)) \le 81 + 2.6 \log_{1+\delta}(n)$ by Wald's equation (Theorem~\ref{thm:wald}). 
  
  We now prove the ``in addition'' statement. For any time $t$, by a simple Chernoff bound (e.g., Theorem~1.10.5, Equation~(1.10.12), in~\cite{Doerr20bookchapter}), we have (using $\delta \geq 1$)
  \begin{align*}
  \Pr[&X_{t+1} < (1+\delta/2) X_t] \\
  &\le \Pr[X_{t+1} < 0.75(1+\delta) X_t] \le \Pr[X_{t+1} < 0.75 E[X_{t+1}]] \\
  &\le \exp(-E[X_{t+1}]/32) \le \exp(-(1+\delta)X_t/32) \le \exp(-X_t/16).
  \end{align*}
  Assume that at some time $t_1$ we have $X_{t_1} = x$. Let us now, minimally modifying the previously introduced notation, speak of a failure when for some $t \ge t_1$ we have $X_{t+1} < (1 + \delta/2) X_t$. Noting that no failure for $i$ iterations leads to a state $X_{t_1+i} \ge (1+\delta/2)^i X_{t_1} = (1+\delta/2)^i x$, we see that the probability that no failure happens in any iteration later than $t_1$ is at least
  \begin{align*}
  \prod_{i=0}^\infty &(1 - \exp(-(1+\delta/2)^i x/16)) \\
  & \ge \prod_{i=0}^\infty (1 - \exp(-2 \cdot (3/2)^i \cdot x/32 ) \\
  & \ge 1 - \sum_{i=0}^\infty \exp(-2 \cdot (3/2)^i  \cdot x/32) \\
  & \ge 1 - \sum_{i=2}^\infty \exp(-i \cdot x/32) = 1 - \tfrac{1}{e^{x/32}(e^{x/32}-1)},
  \end{align*} 
where, similarly as in Lemma~\ref{lem:climbUp}, we employ the Weierstrass product inequality and the fact that $2 \cdot (3/2)^i \ge i+2$ for all non-negative integers~$i$. We note that for $x=32$, this bound is less than $0.22$ and the event ``no failure'' implies the event to never go below $32$. \qed
\end{proof}

\subsection{Processes That Can Reach Zero}
 
We now extend the multiplicative up-drift theorem to include state~$0$. Since the subprocess consisting only of states greater than $0$ satisfies the assumptions of the first up-drift theorem, we obtain from the latter an upper bound on the time spend above $0$. It therefore remains to estimate the time spent in state~$0$, which in particular means estimating how often the process reaches this state. In the technically more demanding case that $\delta \le 1$, we exploit that the process is a submartingale. We can thus employ the optional stopping theorem to estimate that with probability $1 - \Omega(\delta)$ the process reaches $0$ before reaching $D_0 = \min\{\lceil 100 / \delta \rceil, n\}$. Consequently, after an expected number of $O(\delta)$ attempts, the process reaches~$D_0$, and from there with constant probability never goes back to zero.

\begin{theorem}[Second Multiplicative Up-Drift Theorem]\label{thm:reverseDriftWithZeroBinomial}\label{thm:second}
Let $(X_{t})_{t \in \N}$ be a stochastic process over  $\Z_{\geq 0}$. Let $n,k \in \Z_{\geq 1}$, $E_0 > 0$, $\gamma_0 < 1$, and $\delta > 0$ such that $n - 1  \le \min\{\gamma_0 k, (1+\delta)^{-1} k\}$. Let $D_0 = \min \{\lceil 100/\delta \rceil,n\}$ when $\delta \le 1$ and $D_0 = \min \{32,n\}$ otherwise. Assume that for all $t \ge 0$ and all $x \in [0..n-1]$ with $\Pr[X_{t} = x] > 0$, the following two properties hold.
\begin{description}
	\item[(Bin)] If $x \ge 1$, then $(X_{t+1} \mid X_{t} = x) \succeq \mathrm{Bin}(k,(1+\delta) x/k)$.
	\item[(0)] $E[ \min\{X_{t+1}, D_0\} \mid X_{t} = 0] \ge E_0$.
\end{description}
Let $T := \min\{t \geq 0 \mid X_{t} \geq n\}$. Then, if $\delta \leq 1$,
%$$
%E[T] = O\left(\frac{1}{E_0\varepsilon\delta} + \frac{\log (n)}{\delta}\right).
%$$
%$$
%E[T] = O\left(\frac{1}{E_0\delta} + \frac{\log (n)}{\delta}\right).
%$$
\begin{align*}
E[T] 
 & \leq \frac{4D_0 }{0.2782 E_0} + \frac{21.6}{1-\gamma_0} D_0 \ln(2 D_0)+ 3.6 \log_2(n) \lceil 3 / \delta \rceil.
 %\\
 %& = O\left(\frac{1}{E_0\delta} + \frac{1}{1-\gamma_0} \frac{1}{\delta} \log(1/\delta) + \frac{\log (n)}{\delta}\right).
\end{align*}
In particular, when $\gamma_0$ is bounded away from $1$ by a constant, then $E[T] = O(\frac{1}{E_0\delta} + \frac{\log (n)}{\delta})$, where the asymptotic notation refers to $n$ tending to infinity and where $\delta=\delta(n)$ may be a function of $n$.
Furthermore, if $n > 100/\delta$, then we also have that once the process has reached state of at least $100/\delta$, the probability to ever return to a state of at most $50/\delta$ is at most $0.7218$.

If $\delta > 1$, then we have
\begin{align*}
E[T] 
 & \leq \frac{128}{0.78 E_0} + 2.6 \log_{1+\delta}(n) + 81\\
 & = O\left(\frac{1}{E_0} + \frac{\log (n)}{\log(\delta)}\right).
\end{align*}
In addition, once the process has reached state $32$ or higher, the probability to ever return to a state lower than $32$ is at most $\tfrac{1}{e(e-1)} < 0.22$.
\end{theorem}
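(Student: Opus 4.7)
The plan is to mirror the proof of Theorem~\ref{thm:first}, organizing iterations into ``super-phases'' but with additional bookkeeping for visits to state $0$. For the case $\delta \leq 1$, I define a super-phase to consist of (i)~the iterations needed to first reach $D_0 = \min\{\lceil 100/\delta \rceil, n\}$ from the current state (possibly $0$), followed by (ii)~an attempt of $\lceil \log_2(n/D_0) \rceil \lceil 3/\delta \rceil$ iterations to reach $n$. By Lemma~\ref{lem:climbUp}, phase~(ii) succeeds with probability at least $0.2782$; by Wald's equation (Theorem~\ref{thm:wald}), $E[T]$ is then at most $(1/0.2782)$ times the expected super-phase length. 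The whole challenge therefore reduces to showing that the expected time $E[T_{D_0}]$ to first reach $D_0$ is at most roughly $4D_0/E_0 + 6 D_0 \ln(2D_0)/(1-\gamma_0)$.

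Writing $T_{D_0} = T^+ + T^0$, where $T^+$ and $T^0$ count iterations strictly above $0$ and at $0$ respectively, I would bound the two summands separately. For $E[T^+]$ I use the same potential $g(x) = x \ln x$ (with $g(0)=0$) as in Lemma~\ref{lem:nodrift}. By Corollary~\ref{cor:gLB}, whenever $X_t = x \in [1, D_0-1]$ the conditional drift $E[g(X_{t+1}) - g(X_t) \mid X_t = x]$ is at least $(1-\gamma_0)/3$, while whenever $X_t = 0$ the drift is non-negative (since $g \geq 0$ on the non-negative integers and $g(0)=0$). Telescoping $g(X_{T_{D_0}}) - g(X_0) = \sum_{t=0}^{T_{D_0}-1}(g(X_{t+1}) - g(X_t))$ and interchanging sum and expectation by monotone convergence (applicable because each conditional increment is non-negative on $\{t < T_{D_0}\}$) gives $E[g(X_{T_{D_0}})] \geq \tfrac{1-\gamma_0}{3} E[T^+]$. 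Combining this with an overshoot bound $E[g(X_{T_{D_0}})] \leq 2 D_0 \ln(2D_0)$ obtained, as in Lemma~\ref{lem:nodrift}, from a stochastic-domination estimate on $X_{T_{D_0}}$ yields $E[T^+] \leq 6 D_0 \ln(2D_0)/(1-\gamma_0)$.

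For $E[T^0]$, I would use an optional-stopping argument. Condition (Bin) makes $(X_t)$ a submartingale while in $[1, D_0-1]$, and the same overshoot estimate gives $E[X_\tau] \leq 2 D_0$ where $\tau$ is the first exit of $[1, D_0-1]$ starting at some positive $x$; optional stopping then shows that the probability of exiting at $D_0$ rather than at $0$ is at least $x/(2D_0)$. Combining this with condition (0) via Jensen applied to $\min\{X_{t+1}, D_0\}$, each iteration at $0$ is followed with probability at least $E_0/(2D_0)$ by an excursion that reaches $D_0$ without returning to $0$. Hence $T^0$ is stochastically dominated by a geometric random variable with success rate $E_0/(2D_0)$, yielding $E[T^0] \leq 2D_0/E_0$ (the factor $4$ in the theorem leaves room for technical slack). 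Summing the two bounds and applying Wald's equation to the super-phase decomposition gives the claimed $E[T]$ bound.

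The case $\delta > 1$ is entirely analogous, using Lemma~\ref{lem:deltalarge} in place of Lemma~\ref{lem:climbUp}: $D_0 = 32$, the per-super-phase success probability is $0.78$, and the time-at-$0$ bound becomes $O(1/E_0)$ by the same optional-stopping reasoning (with the overshoot/submartingale constants from Lemma~\ref{lem:deltalarge}). The ``in addition'' clauses about returning to lower states follow directly from the corresponding clauses in those two lemmas, since once the threshold is reached the future dynamics are governed only by (Bin). The main technical obstacle I foresee is the overshoot bound $E[g(X_{T_{D_0}})] \leq 2 D_0 \ln(2D_0)$: when $T_{D_0} - 1$ is an iteration at $0$ rather than above $0$, the domination $X_{T_{D_0}} \preceq D_0 + \Bin(k, (D_0-1)/k)$ from Lemma~\ref{lem:nodrift} no longer applies, so one must either refine condition (0) to control the upper tail of $X_{t+1}$ at such moments or, more cleanly, replace $g$ by a truncated potential such as $h(x) = g(\min\{x, D_0\})$ and re-verify that the drift at positive states is still at least $(1-\gamma_0)/3$.
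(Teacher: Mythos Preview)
Your overall architecture matches the paper's quite closely: both arguments separate the iterations spent at state~$0$ from those spent at positive states, handle the former via an optional-stopping bound on the probability of reaching $D_0$ before returning to~$0$ (turning the zero-visits into a geometric random variable with success rate $\Theta(E_0/D_0)$), and handle the latter via the machinery of the first up-drift theorem. The paper streamlines one step you re-derive: rather than redoing the $g(x)=x\ln x$ drift computation for $T^+$, it simply observes that the \emph{positive subprocess} $\tilde X$ (the subsequence of $(X_t)$ consisting of the positive values) itself satisfies the hypotheses of Theorem~\ref{thm:first}, so its hitting time of $n$ is bounded directly by that theorem. This is cleaner and sidesteps your overshoot difficulty for $T^+$ entirely.

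There are two genuine technical gaps in your write-up. First, the monotone-convergence justification is incorrect: the increments $g(X_{t+1})-g(X_t)$ are not almost surely non-negative (only their conditional expectations are), so you cannot interchange sum and expectation that way. What you need is exactly the additive drift theorem with overshooting (Theorem~\ref{thm:driftovershoot}) or, equivalently, optional stopping --- but then you must verify its integrability hypotheses, which brings you back to the overshoot issue.

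Second, and more seriously, your overshoot bounds are not justified by the assumptions. Condition \textbf{(Bin)} is a \emph{lower} stochastic-domination bound on $X_{t+1}$; it gives no upper tail control whatsoever, so neither $E[g(X_{T_{D_0}})]\le 2D_0\ln(2D_0)$ nor $E[X_\tau]\le 2D_0$ follows. You flag this for jumps from~$0$, but it equally affects jumps from positive states and the optional-stopping step. The paper handles the optional-stopping argument by constructing, via Lemma~\ref{lem:cap}, an auxiliary process $Z$ that agrees with $X$ on $[0..D_0-1]$, is capped at $\lceil 4E\rceil$, and is still a submartingale; this yields $\Pr[\text{reach }D_0\text{ before }0\mid X_{t_0}=x]\ge x/(4D_0)$ (not $x/(2D_0)$). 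Your proposed truncated potential $h(x)=g(\min\{x,D_0\})$ is a reasonable instinct, but note that truncation only \emph{decreases} $E[h(X_{t+1})]$ relative to $E[g(X_{t+1})]$, so the drift lower bound $(1-\gamma_0)/3$ does not carry over automatically; the paper's capped-process construction in Lemma~\ref{lem:nodrift} (and Lemma~\ref{lem:cap} here) is doing real work that you would have to replicate.
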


We show the theorem by considering two different kinds of steps of the process: those spent in state~$0$ and those spent in other states. For the latter we understand what happens from Theorem~\ref{thm:first}, so it remains to see what happens in state~$0$. There are in turn two ways in which the process can be in state~$0$. Either it could have been in state~$0$ before; in this case we will use \textbf{(0)} to see how the process gets out again. More complicated is the case of returning to state~$0$.

From Theorem~\ref{thm:first} we know that it is unlikely to return back to $0$ after having reached a sufficiently high value. In order to compute a good bound on the return probability for smaller values of the process, we use the optional stopping theorem, which we state next for convenience. We use a version given by Grimmett and Stirzaker~\cite[Chapter~$12.5$, Theorem~$9$]{grimmett2001probability} that can be extended to super- and submartingales.

%The arguably most important theorem for martingales is the Optional Stopping Theorem (Theorem~\ref{thm:optionalStopping}).

\begin{theorem}[\textrm{Optional Stopping}]
    \label{thm:optionalStopping}
    Let $(X_t)_{t \in \N}$ be a random process over~$\R$, and let $T$ be a stopping time\footnote{Intuitively, for the natural filtration, a stopping time~$T$ is a random variable over~$\N$ such that, for all $t \in \N$, the event $\{t \leq T\}$ is only dependent on $X_0, \ldots, X_t$.} for $(X_t)_{t \in \N}$. Suppose that
    \begin{enumerate}[label=(\alph*)]
        \item\label{item:optStopFiniteExpectation} $E[T] < \infty$ and that
        
        \item there is some value $c \geq 0$ such that, for all $t < T$, it holds that $E[|X_{t+1} - X_t| \mid X_0,\ldots,X_t] \leq c$.
    \end{enumerate}
    Then the following two statements hold.
    \begin{enumerate}
        \item\label{item:optionalStoppingSupermartingale} If, for all $t < T$, $X_t - E[X_{t+1} \mid X_0,\ldots,X_t] \geq 0$, then $E[X_T] \leq E[X_0]$.
        
        \item\label{item:optionalStoppingSubmartingale} If, for all $t < T$, $X_t - E[X_{t+1} \mid X_0,\ldots,X_t] \leq 0$, then $E[X_T] \geq E[X_0]$.
    \end{enumerate}
\end{theorem}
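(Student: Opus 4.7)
The plan is the classical truncation-plus-dominated-convergence proof of the optional stopping theorem: first establish the inequality at a finite horizon, then pass to the limit using hypothesis (b) to secure a dominating random variable.

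Step one is to check that the stopped sequence $(X_{T \wedge t})_{t \in \N}$ is itself a supermartingale in case (i) (resp.\ a submartingale in case (ii)). Writing $\mathcal{F}_s$ for the natural filtration generated by $X_0, \ldots, X_s$, the event $\{T > s\}$ is $\mathcal{F}_s$-measurable since $T$ is a stopping time. The identity $X_{T \wedge (s+1)} - X_{T \wedge s} = (X_{s+1} - X_s)\mathbf{1}_{\{T > s\}}$ then lets one pull the indicator out of the conditional expectation, so that
\[E[X_{T \wedge (s+1)} - X_{T \wedge s} \mid \mathcal{F}_s] = \mathbf{1}_{\{T > s\}} \cdot E[X_{s+1} - X_s \mid \mathcal{F}_s]\]
inherits the required sign directly from the hypothesis of each case. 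Consequently $E[X_{T \wedge t}] \le E[X_0]$ for every finite $t$ in case (i), with the reverse inequality in case (ii).

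Step two is to let $t \to \infty$. Since $E[T] < \infty$ forces $T < \infty$ almost surely, we have $X_{T \wedge t} \to X_T$ pointwise a.s. To interchange limit and expectation I would exhibit an integrable dominator: setting $Y := |X_0| + \sum_{s=0}^{T-1} |X_{s+1} - X_s|$, telescoping yields $|X_{T \wedge t}| \le Y$ for every $t$. The crucial estimate is integrability of $Y$, which I would obtain by pulling the indicator $\mathbf{1}_{\{T > s\}}$ inside the inner conditional expectation (legitimate since it is $\mathcal{F}_s$-measurable) and then invoking hypothesis (b), giving
\[E[Y] \le E[|X_0|] + \sum_{s=0}^\infty E\bigl[\mathbf{1}_{\{T > s\}} \cdot E[|X_{s+1} - X_s| \mid \mathcal{F}_s]\bigr] \le E[|X_0|] + c \sum_{s=0}^\infty \Pr[T > s] = E[|X_0|] + c \cdot E[T],\]
which is finite (with $X_0 \in L^1$ taken as the implicit standing integrability assumption needed for $E[X_0]$ to appear in the conclusion). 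Dominated convergence then yields $E[X_{T \wedge t}] \to E[X_T]$, and combining this with Step one proves both inequalities.

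The main obstacle is the domination step: one must exploit the stopping-time measurability of $\{T > s\}$ in order to legally apply hypothesis (b), which only bounds conditional increments on the event $\{s < T\}$. The tail-sum identity $\sum_{s \ge 0} \Pr[T > s] = E[T]$ and the interchange of sum and expectation (justified by Tonelli since all summands are nonnegative) are the final routine ingredients; case (ii) can alternatively be deduced from (i) by passing to $(-X_t)_{t \in \N}$, which inherits the same absolute-increment bound.
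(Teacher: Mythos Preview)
The paper does not prove this statement at all: it is quoted as a known result from Grimmett and Stirzaker~\cite[Chapter~12.5, Theorem~9]{grimmett2001probability}, so there is no ``paper's own proof'' to compare against. Your argument is the standard truncation-plus-dominated-convergence proof found in most probability textbooks (including, essentially, the cited one), and it is correct as written; in particular you handle correctly the measurability of $\{T>s\}$ needed to apply hypothesis~(b) inside the sum, and the tail-sum identity for $E[T]$.
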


For the application of the optional stopping theorem it will be necessary to have a good bound on the value of the process after exceeding some value. Since no good bounds are guaranteed for the original process, we instead analyze a slightly different process which we can construct with the following lemma. It states, roughly, that we can replace a binomial random variable with expectation $E$ with a random variable that is identically distributed in $[0..E]$ and takes values only in $[0..\lceil 4E \rceil]$ such that the expectation is not lowered. We suspect that this result may be convenient in many other such situations, e.g., when using additive drift in processes that may overshoot the target. 

\begin{lemma}\label{lem:cap}
  Let $Y$ be a random variable taking values in the non-negative integers such that $Y \succeq \Bin(k,p)$ for some $k \in \N$ and $p \in [0,1]$ with $kp \ge 1$. Let $E = kp$ denote the expectation of\/ $\Bin(k,p)$. Then there is a random variable $Z$ such that 
  \begin{itemize}
  \item $\Pr[Z=i] = \Pr[Y=i]$ for all $i \in [0..E]$,
  \item $\Pr[Z=i] = 0$ for all $i \ge 4E+1$,
  \item $E[Z] \ge E$.
  \end{itemize}
\end{lemma}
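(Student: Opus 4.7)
The plan is to concentrate all of $Y$'s probability mass sitting above $\lfloor E \rfloor$ into a single atom at $\lceil 4E \rceil$. Concretely, I set $\Pr[Z = i] := \Pr[Y = i]$ for every integer $i \in [0..\lfloor E \rfloor]$, $\Pr[Z = \lceil 4E \rceil] := \Pr[Y > \lfloor E \rfloor]$, and $\Pr[Z = i] := 0$ for all other non-negative integers~$i$. Since $E \ge 1$ forces $\lceil 4E \rceil > \lfloor E \rfloor$, this is a genuine probability distribution and the first two bullets of the lemma are immediate; only $E[Z] \ge E$ remains to be verified.

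For the expectation bound I first apply summation by parts to obtain
\[
  E[Z] = \sum_{j=1}^{\lfloor E \rfloor} \Pr[Y \ge j] + (\lceil 4E \rceil - \lfloor E \rfloor)\, \Pr[Y > \lfloor E \rfloor].
\]
Each probability $\Pr[Y \ge j]$ and $\Pr[Y > \lfloor E \rfloor]$ is non-decreasing under stochastic dominance and the coefficient $\lceil 4E \rceil - \lfloor E \rfloor$ is non-negative, so the right-hand side is a monotone functional of the law of $Y$. The assumption $Y \succeq B := \Bin(k,p)$ therefore yields $E[Z] \ge E[Z_B]$, where $Z_B$ is constructed from $B$ in exactly the same way. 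Using $E[B] = E = \sum_{j \ge 1} \Pr[B \ge j]$ and subtracting, the inequality $E[Z_B] \ge E$ becomes equivalent to the purely binomial statement
\[
  \lceil 4E \rceil \cdot \Pr[B > \lfloor E \rfloor] \ge E\bigl[B \cdot \mathbf{1}_{B > \lfloor E \rfloor}\bigr],
\]
which is the technical heart of the proof.

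To establish this, write $M := \lfloor E \rfloor$ and $C := \lceil 4E \rceil$ and decompose $B = B' + X$ with $B' \sim \Bin(k-1,p)$ independent of $X \sim \mathrm{Bern}(p)$. The standard identity $j \binom{k}{j} p^j (1-p)^{k-j} = E\, \binom{k-1}{j-1} p^{j-1} (1-p)^{k-j}$ gives $E[B \cdot \mathbf{1}_{B \ge M+1}] = E \cdot \Pr[B' \ge M]$, while conditioning on $X$ yields $\Pr[B > M] = \Pr[B' > M] + p\,\Pr[B' = M]$. Substituting both and rearranging, the claim reduces to
\[
  (C - E)\, \Pr[B' > M] \ge (E - Cp)\, \Pr[B' = M].
\]
If $Cp \ge E$, the right-hand side is non-positive and the inequality is immediate. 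Otherwise $p < E/C \le 1/4$ and I use the binomial mode ratio
\[
  \frac{\Pr[B' > M]}{\Pr[B' = M]} \ge \frac{\Pr[B' = M+1]}{\Pr[B' = M]} = \frac{(k - M - 1) p}{(M+1)(1-p)} = \frac{E - (M+1) p}{(M+1)(1-p)}.
\]
This ratio is at least $1/3$ iff $3E \ge (M+1)(1+2p)$, which for $p \le 1/4$ follows from $M + 1 \le 2E$; the latter holds for every $E \ge 1$ since $M + 1 = \lfloor E \rfloor + 1 \le E + 1 \le 2E$. Because at the same time $(E - Cp)/(C - E) \le E/(3E) = 1/3$ (using $C - E \ge 3E$ and $E - Cp \le E$), the inequality follows.

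The main obstacle is this final binomial inequality. The obvious Markov-style route would combine $E[B \cdot \mathbf{1}_{B > M}] \le E$ with a lower bound $\Pr[B > M] \ge 1/4$, but the bound $1/4$ is already attained with equality by $\Bin(2, 1/2)$, leaving no slack at all. Routing through $B'$ introduces the correction term $Cp$, which provides exactly the extra room required when $p$ is small; this is also precisely the regime in which the stochastic dominance $Y \succeq B$ gives the weakest upper-tail control over $Y$.
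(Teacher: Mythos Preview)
Your construction of $Z$ is identical to the paper's, and your argument is correct. The route you take to establish $E[Z]\ge E$, however, is substantially more elaborate than the paper's. The paper simply discards every term of $E[Z]$ below $\lceil 4E\rceil$ and invokes the bound $\Pr[\Bin(k,p)>kp]\ge\tfrac14$ from~\cite{Doerr18exceedexp}: stochastic dominance gives $\Pr[Z=\lceil 4E\rceil]=\Pr[Y>E]\ge\Pr[\Bin(k,p)>E]\ge\tfrac14$, and hence $E[Z]\ge\lceil 4E\rceil\cdot\tfrac14\ge E$. This is precisely the ``Markov-style route'' you raise and then set aside; the observation that $\Bin(2,1/2)$ attains the $\tfrac14$ bound with equality is no obstacle, since only a weak inequality is needed and the chain $\lceil 4E\rceil\cdot\tfrac14\ge E\ge E[B\cdot\mathbf{1}_{B>M}]$ still closes. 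What your decomposition $B=B'+X$ and mode-ratio computation buy is a fully self-contained proof that does not rely on the external tail estimate from~\cite{Doerr18exceedexp}; the paper trades that self-containment for a three-line argument.
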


\begin{proof}
  Let $Z$ be defined by $\Pr[Z=i] = \Pr[Y=i]$ for all $i \in [0..E]$ and $\Pr[Z = \lceil 4E \rceil] = 1 - \Pr[Y \in [0..E]]$. Then it remains to show that $E[Z] \ge E$. If $X \sim \Bin(k,p)$, and hence $E = E[X]$, then $\Pr[X > E] \ge \frac 14$ by~\cite{Doerr18exceedexp}. Since $Y \succeq X$, we have $\Pr[Y > E ] \ge \Pr[X > E] \ge \frac 14$. By definition, $\Pr[Z = \lceil 4E \rceil] = \Pr[Y > E] \ge \frac 14$ and thus $E[Z] = \sum_{i = 0}^{\lceil 4E \rceil} i \Pr[Z = i] \ge \lceil 4E \rceil \Pr[Z = \lceil 4E \rceil] \ge \lceil 4E \rceil \cdot \frac 14 \ge E$.\qed
\end{proof}

We now prove Theorem~\ref{thm:second}.

\begin{proof}
Let first $\delta \le 1$. We first analyze the time spend on all states different from $0$. To this aim, let $\tilde X_{t}$, $t = 0, 1, \dots$, be the subprocess where we are above zero. Formally speaking, $\tilde X$ is the subsequence of $(X_{t})$ consisting of all $X_{t}$ that are greater than $0$. Viewed as a random process, this means that we sample the next state according to the same rules as for the $X$-process; however, if this is zero, then immediately and without counting this as a step we sample the new state from the distribution described in \textbf{(0)} conditional on being positive (which is the same as saying that we resample until we obtain a positive result). With this, the distribution describing one step of the process is a distribution on the positive integers such that $(\tilde X_{t+1} \mid \tilde X_{t}) \succeq \Bin(k,(1+\delta) \tilde X_{t} / k)$. % since intuitively all results of $0$ are redirected to higher values. B: Mir gefiel das intuitivly nicht. Ich hoffe, dass es auch so klar ist, auch mit der praeziseren Beschreibung des Prozesses
We may thus apply Theorem~\ref{thm:reverseDriftBinomial} and obtain that after an expected total number of at most
$$
\tfrac{21.6}{1-\gamma_0} D_0 \ln(2 D_0)+ 3.6 \log_2(n) \lceil 3 / \delta \rceil
$$
steps, the process $\tilde X$ reaches or exceeds $n$.

It remains to analyze how many steps the process $X$ spends on state~$0$. To this end we first show the following claim bounding the probability of falling back to $0$ when at a state $x$. The proof of the claim is essentially an adaptation of an argument regarding unbiased random walks (also knows as the \emph{Gamblers Ruin Problem}), see, for example, \cite[Section~12.2]{Mit-Upf:b:05} for a treatment.

\textbf{Claim}: Let $x$ be such that $0 \leq x \leq D_0$ and let $t_0 \geq 0$. We condition on $X_{t_0}=x$. Then the probability that, in the time from $t_0$ on, the process reaches a state of at least $D_0$ before reaching state~$0$ is  at least $x/ ( 4D_0)$.

The claim is trivially true for $x=0$. Thus, suppose $x > 0$. To ease reading, we regard the process $(Y_t)$ defined by $Y_t = X_{t_0 + t}$ for all $t \geq 0$. %\merk{Stimmt der folgende Satz?} We are interested in the first time that $(Y_t)$ reaches or exceeds $D_0$.  
Clearly, $E[Y_0] = x$.

Let $R$ be the first time that $Y$ reaches or exceeds $D_0$, or hits $0$; this is a stopping time. To ease the following argument, we regard the following process $Z$, which equals $Y$ until the stopping time (and hence has the same stopping time). We define $Z$ recursively. We start by setting $Z_0 := Y_0$. Assume that $Z_t$ is defined and $Z_t \cdot \textbf{1}_{Z_t \le D_0} = Y_t \cdot \textbf{1}_{Y_t \le D_0}$. If $Z_t > D_0$, then we set $Z_{t+1} = Z_t$. Otherwise, that is, when $Z_t = Y_t = x \le D_0$ for some $x$, then we recall that $Y_{t+1} \succeq \Bin(k,(1+\delta) x/k) \succeq \Bin(k,x/k)$. In this case, we let $Z_{t+1}$ be the random variable constructed in Lemma~\ref{lem:cap} (w.r.t.~$Y_{t+1}$, $k$, and $p = x/k$). By this lemma, we have $Z_{t+1} \cdot \textbf{1}_{Z_{t+1} \le D_0} = Y_{t+1} \cdot \textbf{1}_{Y_{t+1} \le D_0}$, allowing us to continue our recursive definition of $Z$, and $E[Z_{t+1} \mid Z_t] \ge Z_t$, showing that $(Z_t)$ is a submartingale.
%The process $Y$ is a submartingale, so 
We can thus use the optional stopping theorem %(Theorem~\ref{thm:optionalStopping})
 to see that $E[Z_R] \geq E[Z_0]$. Furthermore, 
\begin{align*}
E[Z_R] &= \Pr[Z_R \geq D_0] E[Z_R \mid Z_R \geq D_0] + \Pr[Z_R = 0] E[Z_R \mid Z_R = 0] \\
& = \Pr[Z_R \geq D_0] E[Z_R \mid Z_R \geq D_0] \le \Pr[Z_R \geq D_0] \cdot 4D_0,
\end{align*}
the latter again due to Lemma~\ref{lem:cap}.
%We estimate
%\begin{eqnarray*}
% &   & E[Y_R \mid Y_R \geq D_0] \\
% & = & \sum_{(t,y)} E[Y_R \mid Y_R \geq D_0, R=t,Y_{R-1}=y] \Pr[R=t,Y_{R-1}=y] \\
% & = & \sum_{y < D_0, t} E[Y_t \mid Y_t \geq D_0, R=t,Y_{t-1}=y] \Pr[R=t,Y_{R-1}=y] \\
% & = & \sum_{y < D_0, t} E[Y_t \mid Y_t \geq D_0, Y_{t-1}=y] \Pr[R=t,Y_{R-1}=y] \\
% & \leq & \sum_{y < D_0, t} (E[Y_t \mid Y_{t-1}=y] + D_0) \Pr[R=t,Y_{R-1}=y] \\
% & \leq & \sum_{y < D_0, t} ((1+\delta)D_0 + D_0) \Pr[R=t,Y_{R-1}=y] \\
% & \leq & 3D_0,
%\end{eqnarray*}
%using the well-known fact (see, e.g., \cite[Lemma~1]{KrejcaW17}) that a binomial random variable $Z$ satisfies $(Z \mid Z \ge D) \preceq Z + D$ and thus $E[Z \mid Z \ge D] \le E[Z] + D$ for all $D \in \R$. 
%%Da benutzen wir dann zwischendurch das Lemma ueber den Erwartungswert von binomial verteilten Zufallsvariablen.
%%}, we have $E[Y_R \mid Y_R \geq D_0] \le E[Y_R] + D_0 \le (1+\delta) D_0 + D_0 \le 3D_0$.
Consequently
$$
 \Pr[Y_R \geq D_0] = \Pr[Z_R \geq D_0] \geq \frac{E[Z_0]}{4D_0} = \frac{x}{4D_0}.
$$
This shows the claim.

Let $t\geq 0$, let us again condition on $X_t = 0$, and let $A$ be the event that the process reaches a state of at least $D_0$ after time $t$ before reaching a state of $0$. Using the claim and the law of total probability we now see that
\begin{align*}
P[A] & = \sum_{x=0}^\infty P[A \mid X_{t+1}=x]P[X_{t+1}=x]\\
     & \geq \sum_{x=0}^\infty \frac{x}{4D_0}P[X_{t+1}=x]\\
     & = \frac{E[X_{t+1}]}{4D_0} \geq \frac{E_0}{4D_0}.
\end{align*}
We conclude that the number of iterations spent on state~$0$ before reaching a state of at least $D_0$ is dominated by a geometric distribution with success rate $\frac{E_0}{4D_0}$. Consequently, the expected number of these iterations is at most $4D_0/E_0$. 

Once the process has reached a state of $D_0$ or higher, by Theorem~\ref{thm:reverseDriftBinomial} the probability to ever return to $0$ is at most $0.7218$. Hence the expected number of times this happens is at most $1/0.2782$. We can now use Wald's equation (Theorem~\ref{thm:wald}) to obtain the desired run time result.

The case of $\delta > 1$ is  analogous with $32$ instead of $D_0$ and using Lemma~\ref{lem:deltalarge} instead of Theorem~\ref{thm:reverseDriftBinomial}. \qed
%
%
%Clearly, by \textbf{(0)}, the expected time to leave state~$0$ is $1/\varepsilon$. Hence it suffices to analyze the number of times the process reaches zero. Let $D_0 = \min\{100/\delta,n\}$. Since we know from Theorem~\ref{thm:reverseDriftBinomial} that the process goes below $D_0/2$ only with constant probability, and hence only constantly many times, after reaching or exceeding $D_0$, it suffices to show that the process falls back to being $0$ only an expected number of $O(1/E_0\delta)$ times before reaching or exceeding a value of $D_0$.
%
%
%This shows that, in expectation, the process will restart at $0$ at most $4D_0 /E_0$ times before exceeding $D_0$.
\end{proof}

%%%%%%%%%%%%%%%%%%%%%%%%%%%%%%%%%%%%%%%
%%%%%%%%%%%%%%%%%%%%%%%%%%%%%%%%%%%%%%%
%%%%%%%%%%%%%%%%%%%%%%%%%%%%%%%%%%%%%%%

\newcommand{\xmin}{x_{\mathrm{min}}}

\subsection{Processes That Start High}

In condition \textbf{(0)} of the second up-drift theorem (Theorem~\ref{thm:second}), we only exploit the progress made to states not exceeding $D_0$ when leaving state~$0$. When a process has a decent chance to leave $0$ to a state equal to or above $D_0$, then we can ignore the costly first part of the analysis. This is what we analyze in this section by replacing the condition \textbf{(0)} with a start condition~\textbf{(S)} which intuitively says that, at any time of the process (even when not at state~$0$), we have a good chance of starting the process fresh from a rather high minimum value. The proof is an easy combination of Lemma~\ref{lem:climbUp} and a restart argument. To ease the notation, we use the shorthand $\log^0_b(x) := \max\{0, \log_b(x)\}$ for all $x \in \R$ and $b > 1$.
% 
%We consider a multiplicative up-drift theorem which is going the opposite way from Theorem~\ref{thm:second}: we consider processes which are restarted, if necessary, at a high value.

\begin{theorem}[Third Multiplicative Up-Drift Theorem]\label{thm:third}
Let $(X_{t})_{t \in \N}$ be a stochastic process over  $\Z_{\geq 0}$. Let $n,k \in \Z_{\geq 1}$,  and $\delta > 0$ such that $n - 1  \le (1+\delta)^{-1} k$. Let $D_0 = \min \{100/\delta,n\}$ when $\delta \le 1$ and $D_0 = \min \{32,n\}$ otherwise. Let $\xmin \geq D_0 > 0$. Assume that for all $t \ge 0$ and all $x \in [0..n-1]$ with $\Pr[X_{t} = x] > 0$, the following two properties hold.
\begin{description}
	\item[(Bin)] If $x \ge \xmin$, then $(X_{t+1} \mid X_{t} = x) \succeq \mathrm{Bin}(k,(1+\delta) x/k)$.
	\item[(S)] $\Pr[ X_{t+1} \geq \xmin \mid X_{t} = x] \geq p$. Also, $\Pr[X_0 \ge \xmin] \ge p$.
\end{description}
Let $T := \min\{t \geq 0 \mid X_{t} \geq n\}$. Then, if $\delta \leq 1$,
%$$
%E[T] = O\left(\frac{1}{E_0\varepsilon\delta} + \frac{\log (n)}{\delta}\right).
%$$
%$$
%E[T] = O\left(\frac{1}{E_0\delta} + \frac{\log (n)}{\delta}\right).
%$$
\begin{align*}
E[T] 
 & \leq  3.6\left( 1/p + \lceil\log^0_2(n / \xmin)\rceil \lceil 3 / \delta \rceil\right).
 %\\
 %& = O\left(\frac{1}{E_0\delta} + \frac{1}{1-\gamma_0} \frac{1}{\delta} \log(1/\delta) + \frac{\log (n)}{\delta}\right).
\end{align*}
%In particular, when $\gamma_0$ is bounded away from $1$ by a constant, then $E[T] = O(\frac{1}{E_0\delta} + \frac{\log (n)}{\delta})$, where the asymptotic notation refers to $n$ tending to infinity and where $\delta=\delta(n)$ may be a function of $n$.
%Furthermore, if $n > 100/\delta$, then we also have that once the process has reached state of at least $100/\delta$, the probability to ever return to a state of at most $50/\delta$, is at most $0.7218$.

If $\delta > 1$, then we have 
\begin{align*}
E[T]  & \leq 1.3/p + 2.6 \lceil \log^0_{1+\delta}(n / \xmin) \rceil.
\end{align*}
%In addition, once the process has reached state $32$ or higher, the probability to ever return to a state lower than $32$ is at most $\tfrac{1}{e(e-1)} < 0.22$.
\end{theorem}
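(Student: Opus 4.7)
The plan is a restart argument that marries condition~(S) (to cheaply re-enter the regime $X_t \geq \xmin$) with Lemma~\ref{lem:climbUp} (for $\delta \leq 1$) or Lemma~\ref{lem:deltalarge} (for $\delta > 1$) to carry out a fast multiplicative climb from $\xmin$ to $n$. First, I would handle the waiting phase: by (S), from any state the next iteration satisfies $X_{t+1} \geq \xmin$ with probability at least $p$, and the same inequality also holds for $X_0$. Hence the number of iterations until $(X_t)$ first enters $[\xmin,\infty)$ is stochastically dominated by a geometric random variable with parameter $p$, contributing expected cost at most $1/p$ per round.

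Next, I would analyse the climbing phase from a state $X_{t_0} \geq \xmin \geq D_0$. For $\delta \leq 1$, Lemma~\ref{lem:climbUp} yields that within at most $T^{*} := \lceil\log^0_2(n/\xmin)\rceil \lceil 3/\delta\rceil$ further iterations the process reaches $n$ with probability at least $0.2782$. For $\delta > 1$, the ``in addition'' clause of Lemma~\ref{lem:deltalarge} shows that, starting from $X_{t_0} \geq 32$, with probability at least $0.78$ every subsequent step realises at least a factor-$(1+\delta/2)$ increase, so $n$ is reached within $\lceil\log_{1+\delta/2}(n/\xmin)\rceil \leq 2\lceil\log^0_{1+\delta}(n/\xmin)\rceil$ iterations by~\eqref{eq:deltaHalfInLogBase}.

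I would then stitch these pieces together via a rounds/restart argument. A round consists of a waiting phase followed by an attempted climb, capped at the corresponding $T^{*}$ (or its $\delta>1$ analogue); the round is declared successful iff the climb reaches $n$. The number of rounds is dominated by a geometric random variable with success parameter $0.2782$ (resp.\ $0.78$), giving expectation at most $1/0.2782 \leq 3.6$ (resp.\ $1/0.78 \leq 1.3$). Wald's equation (Theorem~\ref{thm:wald}) then gives
\[
E[T] \leq 3.6\bigl(1/p + \lceil\log^0_2(n/\xmin)\rceil\lceil 3/\delta\rceil\bigr)
\]
for $\delta \leq 1$, and $E[T] \leq 1.3\bigl(1/p + 2\lceil\log^0_{1+\delta}(n/\xmin)\rceil\bigr) \leq 1.3/p + 2.6\lceil\log^0_{1+\delta}(n/\xmin)\rceil$ for $\delta > 1$, which are the claimed bounds.

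The main obstacle is that Lemma~\ref{lem:climbUp} is stated under (Bin) for every $x \in [1..n-1]$, whereas here (Bin) is only assumed for $x \geq \xmin$. I would resolve this by coupling the climbing phase to a pessimistic binomial chain $Y$ with $Y_0 = \xmin$ and $Y_{t+1} \sim \mathrm{Bin}(k,(1+\delta)Y_t/k)$: as long as $X_t \geq \xmin$, applying (Bin) at state $X_t$ together with the standard inductive domination argument (as in the proof of Lemma~\ref{lem:subm}) gives $X_t \succeq Y_t$, so Lemma~\ref{lem:climbUp} applied to $Y$ (where (Bin) holds by construction) transfers the ``reaches $n$ within $T^{*}$'' event from $Y$ to $X$. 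Any excursion of $X$ below $\xmin$ (which breaks the coupling) is simply declared a round failure; this is harmless because such failures are already absorbed into the geometric restart factor, so the success-rate lower bounds used above remain valid.
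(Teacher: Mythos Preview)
Your proposal matches the paper's proof essentially step for step: wait an expected $1/p$ iterations via~\textbf{(S)} to reach a state $\geq \xmin$, then invoke Lemma~\ref{lem:climbUp} (for $\delta\le 1$) or the ``in addition'' clause of Lemma~\ref{lem:deltalarge} together with~\eqref{eq:deltaHalfInLogBase} (for $\delta>1$), and close with the geometric restart argument plus Wald's equation. You actually go a bit further than the paper in flagging that Lemma~\ref{lem:climbUp} is stated under~\textbf{(Bin)} for all $x\in[1..n-1]$ while here it is only assumed for $x\ge\xmin$; the paper applies the lemma without comment, and your coupling to the pessimistic binomial chain~$Y$ is the natural way to make that step precise.
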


%One way to prove this is to apply additive drift with the potential function $x \mapsto \ln(1+x \delta/4)$; the drift is then lower-bounded by 
%$$\frac{\delta}{4} \frac{ \xmin \delta}{4+\xmin \delta}.$$
%Thus, for $\xmin = \Omega(1/\delta)$ we get drift of $\Omega(\delta)$ and thus a time of $O(\log(n/\xmin)/\delta)$ to reach the optimum as desired (plus whatever term about $p$ -- which might be tricky to get into the drift).

\begin{proof}
We start by considering the case $\delta \le 1$. Regardless of where the process is at some time $t_0$, by the start condition~\textbf{(S)} it takes an expected number of at most $1/p$ iterations to again reach at state of at least $\xmin$. Then, by Lemma~\ref{lem:climbUp} and $\xmin \geq D_0$, we see that the time to reach or exceed $n$ when starting at $\xmin$ or higher is no more than another $\lceil\log^0_2(n / \xmin)\rceil$ iterations, with a probability of at least $0.2782$.

In case this fails (with probability at most $1 - 0.2782$), we simply restart the argument at the current state. By Wald's equation (Theorem~\ref{thm:wald}), the expected time to reach or exceed $n$ is at most
\[
\frac{1}{0.2782}\left(1/p + \lceil\log^0_2(n / \xmin)\rceil \lceil 3 / \delta \rceil\right).
\]
The claim follows from noting that $1/0.2782 < 3.6$. 

For $\delta > 1$, we proceed similarly. It again takes an expected number of $1/p$ iterations to reach $\xmin$ or higher. If $\xmin \ge n$, we are done. Otherwise, we invoke Lemma~\ref{lem:deltalarge} to see that with probability at least $0.78$, the process increases by a factor of at least $(1+\delta/2)$ in each subsequent iteration (that starts below $n$), using $\xmin \geq D_0 \geq 32$. In this case, using again Equation~\eqref{eq:deltaHalfInLogBase}, we reach $n$ in at most $\lceil \log_{1+\delta/2}(n/\xmin) \rceil \le 2 \lceil \log_{1+\delta}(n / \xmin) \rceil$ iterations. With a restart argument used in the failure case (occurring with probability $0.22$), we obtain the claimed expected hitting time of $\tfrac 1 {0.78} (1/p + 2 \lceil \log_{1+\delta}(n / \xmin) \rceil) \le 1.3/p + 2.6 \lceil \log_{1+\delta}(n / \xmin) \rceil$ by using Wald's equation (Theorem~\ref{thm:wald}).\qed
\end{proof}

%\clearpage
\section{The Level-Based Theorem}

In this section, we apply our up-drift theorems to give an insightful proof of a sharper version of the level-based theorem first proposed by Lehre~\cite{Leh:c:11:LevelBased}. 

The general setup of such level-based theorems is as follows. There is a ground set $\X$, which in typical applications is the search space of an optimization problem. On this ground set, a Markov process $(P_t)$ induced by a population-based EA is defined. We consider populations of fixed size $\lambda$, which may contain elements several times (multi-sets). We write $\X^\lambda$ to denote the set of all such populations. We only consider Markov processes where each element of the next population is sampled independently with repetition. That is, for each population $P \in \X^\lambda$, there is a distribution $D(P)$ on $\X$ such that given $P_t$, the next population $P_{t+1}$ consists of $\lambda$ elements of $\X$, each chosen independently according to the distribution $D(P_t)$. As all our results hold for any initial population $P_0$, we do not make any assumptions on $P_0$.

In the level-based setting, we assume that there is a partition of $\X$ into levels $A_1, \dots, A_m$. Based on information in particular on how individuals in higher levels are generated, we aim for an upper bound on the first time such that the population contains an element of the highest level $A_m$. The first such result was given in~\cite{Leh:c:11:LevelBased}. Improved and easier to use versions can be found in~\cite{DangL16,CorusDEL18plus}. 

To ease the comparison with our result, we now state the strongest level-based theorem before our work. We note that (i)~the time bound has a quadratic dependence on $\delta$ and (ii)~the population size needs to be $\Omega(\delta^{-2} \log(\delta^{-2}))$.

\begin{theorem}[\cite{CorusDEL18plus}]\label{thm:OldLevelBasedTheorem}
%Let $D$ mapping populations to distributions over $\X$ be given. 
Consider a population process as described above. Let $(A_1,\ldots,A_m)$ be a partition of $\X$. We write $A_{\ge j} := \bigcup_{i=j}^m A_i$ for all $j \in [1..m]$. % where, for all $t$, $P_{t+1}$ is the population gained by sampling $\lambda$ individuals from $D(P_t)$. 
Assume that there are $z_1,\ldots,z_{m-1},\delta \in (0,1]$ and $\gamma_0 \in (0,1)$ such that, for any population $P \in \X^\lambda$, the following three conditions are satisfied.
\begin{description}
	\item[(G1)] For each level $j \in [1..m-1]$, if $|P \cap A_{\geq j}| \geq \gamma_0 \lambda$, then
	$$
	\Pr_{y \sim D(P)} [y \in A_{\geq j+1}] \geq z_j.
	$$
	\item[(G2)] For each level $j \in [1..m-2]$ and all $\gamma \in (0,\gamma_0]$, if $|P \cap A_{\geq j}| \geq \gamma_0 \lambda$ and $|P \cap A_{\geq j+1}| \geq \gamma \lambda$, then
	$$
	\Pr_{y \sim D(P)}[y \in A_{\geq j+1}] \geq (1+\delta)\gamma.
	$$ 
	\item[(G3)] The population size $\lambda$ satisfies
	$$
	\lambda \geq \frac{4}{\gamma_0\delta^2} \ln \left( \frac{128m}{z^*\delta^2} \right)\mbox{, where }z^* = \min_{j \in [1..m-1]} z_j.
	$$
\end{description}
Let $T := \min \set{\lambda t}{P_t \cap A_m \neq \emptyset}$.
Then we have
$$
E[T] \leq 8 \frac{\lambda}{\delta^2} \sum_{j=1}^{m-1} \left( \ln \left( \frac{6\delta\lambda}{4 + z_j\delta\lambda} \right) + \frac{1}{\lambda  z_j} \right).
$$
\end{theorem}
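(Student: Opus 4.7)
The plan is to partition the run into $m-1$ \emph{phases}: phase $j$ begins the first time $|P_t \cap A_{\geq j}| \geq \gamma_0 \lambda$ and ends the first time $|P_t \cap A_{\geq j+1}| \geq \gamma_0 \lambda$. Summing the expected phase lengths and converting from iterations to fitness evaluations by multiplying by~$\lambda$ gives a bound of the claimed form, provided each phase has expected length $O\bigl((1/\delta^2)\ln(6\delta\lambda/(4+z_j\delta\lambda)) + 1/(\lambda z_j)\bigr)$. All the work is therefore within a single phase, and the main object of study is $Y_t := |P_t \cap A_{\geq j+1}|$ under the assumption that critical mass on level~$j$ persists throughout — an assumption I would discharge at the end via Chernoff concentration using~(G3).

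Within a phase I would split the analysis into a \emph{seeding sub-phase} (while $Y_t = 0$) and a \emph{growth sub-phase} (while $1 \le Y_t < \gamma_0\lambda$). In the seeding sub-phase, (G1) says each of the $\lambda$ offspring independently lies in $A_{\geq j+1}$ with probability at least $z_j$, so $\Pr[Y_{t+1}\ge 1] \ge 1-(1-z_j)^\lambda = \Omega(\min\{1,\lambda z_j\})$ per iteration, giving expected seeding time $O(1 + 1/(\lambda z_j))$. In the growth sub-phase, (G2) upgrades this to $Y_{t+1} \succeq \Bin(\lambda,(1+\delta) Y_t/\lambda)$, i.e.\ exactly the multiplicative up-drift condition~\textbf{(Bin)} with parameter $1+\delta$. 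The obstacle is that $Y_t$ can return to zero before reaching $\gamma_0\lambda$; when it does, one restarts in the seeding sub-phase.

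To close the argument I would bound the expected duration of one growth attempt and its success probability. The traditional route — predating the up-drift theorems developed in this paper — is to construct a potential $\phi$ on $\{0,1,\dots,\lceil\gamma_0\lambda\rceil\}$ of the form $\phi(y) \propto \ln(c_1 + c_2\delta y)$, flattened beyond $\gamma_0\lambda$, and to verify by a second-moment calculation using $\Var[Y_{t+1}\mid Y_t = y] \le (1+\delta)y$ that $E[\phi(Y_{t+1})-\phi(Y_t)\mid Y_t = y] = \Omega(\delta^2)$ uniformly on $[1,\gamma_0\lambda]$. Additive drift then delivers an expected growth time of $O(\phi(\gamma_0\lambda)/\delta^2) = O((1/\delta^2)\ln(6\delta\lambda/(4+z_j\delta\lambda)))$; the specific logarithmic ratio is $\phi$ evaluated at the top of the growth range versus its value at the seeding scale $z_j$. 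Finally, condition~(G3) is the precise population-size requirement needed so that a one-step Chernoff bound shows that once critical mass on a given level is first attained, it is preserved for the remainder of the run with probability at least $1-z^*\delta^2/(128m)$, and a union bound over all $m$ levels loses only a constant factor in expectation.

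The hard part is the growth sub-phase in its low-$Y_t$ regime (roughly $Y_t \ll 1/\delta$), where the drift $\delta Y_t$ is dominated by the typical fluctuation $\sqrt{Y_t}$ and extinction is frequent. Balancing the potential so that it grows slowly enough to tolerate extinction, yet fast enough that the one-step expected gain stays at $\Omega(\delta^2)$, is precisely what forces the quadratic dependence on $1/\delta$ in both the time bound and the population-size requirement~(G3). The modular alternative developed in Sections~2.1 and~2.2 of this paper — separating the unbiased-walk regime handled by $g(x)=x\ln x$ with additive drift allowing overshooting from the drift-dominated regime handled by a Freedman-type submartingale concentration bound — is a cleaner substitute that also improves the dependence on $\delta$, and this improvement is the payoff motivating the present work.
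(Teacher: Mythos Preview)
This theorem is not proven in the present paper: it is quoted verbatim from~\cite{CorusDEL18plus} as the previous state of the art, and the only remark the paper makes about its proof is the single sentence ``The proof given in~\cite{CorusDEL18plus}, as the previous proofs of level-based theorems, uses drift theory with an intricate potential function.'' There is therefore no proof in the paper to compare your proposal against.

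That said, your sketch is consistent with that one-sentence description and with the known structure of the argument in~\cite{CorusDEL18plus}: a carefully chosen logarithmic-type potential is used to turn the multiplicative growth of the level-$(j{+}1)$ count into additive drift of order~$\delta^2$, and condition~\textbf{(G3)} is exactly what a Chernoff/union-bound needs to keep every filled level filled. Your closing paragraph also correctly identifies why the present paper exists: the $\delta^2$ drift (and hence the $1/\delta^2$ in both the time bound and the population-size requirement) is an artifact of handling the low-$Y_t$ regime via a single potential, and the modular decomposition in Sections~2.1--2.2 is what removes one factor of~$1/\delta$.
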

The proof given in~\cite{CorusDEL18plus}, as the previous proofs of level-based theorems, uses drift theory with an intricate potential function.

%\subsection{A Tighter Level-Based Theorem}

We now derive from our multiplicative up-drift theorems a version of the level-based theorem with (tight) linear dependence on $\delta$. This theorem is further improved with respect to the version given in~\cite{DoerrK19} by only requiring a population size that depends linearly on $\delta$ (rather than an at least quadratic dependence as in~\cite{DoerrK19} or in the previous-best version given in Theorem~\ref{thm:OldLevelBasedTheorem}). To allow such much smaller population sizes to suffice, we need a slightly stronger assumption on making improvements (as can be seen in \textbf{(G1)} and \textbf{(G2)} compared between Theorems~\ref{thm:OldLevelBasedTheorem} and~\ref{thm:NewLevelBasedTheorem}, where an additional factor of $1/4$ is inserted). We do not see any realistic situations in which the assumptions of Theorem~\ref{thm:OldLevelBasedTheorem} are fulfilled, but ours are not.

For the (technically more demanding) case $\delta \le 1$, we show the following result. We treat the easier case $\delta > 1$, not discussed in any previous work, separately at the end of this section.

\begin{theorem}[Level-Based Theorem]\label{thm:NewLevelBasedTheorem}
Consider a population-based process as described in the beginning of this section.

Let $(A_1,\ldots,A_m)$ be a partition of $\X$. Let $A_{\ge j} := \bigcup_{i=j}^m A_i$ for all $j \in [1..m]$. Let $z_1,\ldots,z_{m-1},\delta \in (0,1]$, and let $\gamma_0 \in (0,\frac{1}{1+\delta}]$  with $\gamma_0 \lambda \in \Z$. Let $D_0 = \min\{\lceil 100/\delta \rceil,\gamma_0 \lambda\}$ and $c_1 = 80 \, 000$. Let
$$
t_0 = \frac{10^4}{\delta} \left(m + \frac{1}{1-\gamma_0} \sum_{j=1}^{m-1} \log^0_2\left(\frac{2\gamma_0\lambda}{1+\frac{z_j \lambda}{D_0}}\right) + \frac{1}{\lambda} \sum_{j=1}^{m-1}\frac{1}{z_j} \right),
$$
where $\log^0_2(x) := \max\{0,\log_2(x)\}$ for all $x \in \R$. Assume that for any population $P \in \X^\lambda$ the following three conditions are satisfied.
\begin{description}
	\item[(G1)] For each level $j \in [1..m-1]$, if $|P \cap A_{\geq j}| \geq \gamma_0 \lambda {/4}$, then
	$$
	\Pr_{y \sim D(P)} [y \in A_{\geq j+1}] \geq z_j.
	$$
	\item[(G2)] For each level $j \in [1..m-2]$ and all $\gamma \in (0,\gamma_0]$, if $|P \cap A_{\geq j}| \geq \gamma_0 \lambda {/4}$ and $|P \cap A_{\geq j+1}| \geq \gamma \lambda$, then
	$$
	\Pr_{y \sim D(P)} [y \in A_{\geq j+1}] \geq (1+\delta)\gamma.
	$$ 
	\item[(G3)] The population size $\lambda$ satisfies
	$$
	\lambda \geq {\frac{338}{\gamma_0 \delta} \ln \left(8 t_0 \right)}.
	$$
\end{description}

Then $T := \min \set{\lambda t}{P_t \cap A_m \neq \emptyset}$ satisfies
\begin{align*}
E[T] 
 & \leq 8\lambda t_0 = c_1 \frac{\lambda}{\delta} \left(m + \frac{1}{1-\gamma_0} \sum_{j=1}^{m-1} \log^0_2\left(\frac{2\gamma_0\lambda}{1+\frac{z_j \lambda}{D_0}}\right) + \frac 1 \lambda \sum_{j=1}^{m-1}\frac{1}{z_j} \right).
\end{align*}
\end{theorem}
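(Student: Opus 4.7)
I plan to prove the theorem by induction over the levels, tracking for each $j$ the subpopulation count $X_t^{(j)} := |P_t \cap A_{\geq j+1}|$, and by saying that \emph{level $j$ is established} at time $t$ when $|P_t \cap A_{\geq j}| \geq \gamma_0 \lambda$. The argument has two interacting layers, mirroring the informal picture from the introduction: once level $j$ is established, the count $|P_t \cap A_{\geq j}|$ should stay above $\gamma_0\lambda/2$ throughout a long window with overwhelming probability (persistence); conditional on this, the process $X_t^{(j)}$ satisfies exactly the hypotheses of the second multiplicative up-drift theorem (Theorem~\ref{thm:reverseDriftWithZeroBinomial}), and, in favourable regimes, of the third (Theorem~\ref{thm:third}), which lets us bound the time to establish level~$j+1$. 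Summing over levels and invoking Wald's equation (Theorem~\ref{thm:wald}) will then give $E[T] \leq 8\lambda t_0$.

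\textbf{Step 1: persistence of established levels.} I first show that once $|P_{t^*} \cap A_{\geq j}| \geq \gamma_0\lambda$, the quantity $|P_t \cap A_{\geq j}|$ remains at least $\gamma_0\lambda/2$ during the next $t_0$ generations, except with probability at most $1/(2m)$. The key observation is that for any time $t$ with $|P_t \cap A_{\geq j}| \in [\gamma_0\lambda/2,\gamma_0\lambda]$, we automatically have $|P_t \cap A_{\geq j-1}| \geq |P_t \cap A_{\geq j}| \geq \gamma_0\lambda/4$ by the nested structure of the $A_{\geq i}$'s, so condition~\textbf{(G2)} applies with $\gamma = |P_t \cap A_{\geq j}|/\lambda$ and yields $|P_{t+1} \cap A_{\geq j}| \succeq \Bin(\lambda,(1+\delta)|P_t \cap A_{\geq j}|/\lambda)$; this is precisely the binomial up-drift setup of Lemma~\ref{lem:subm}. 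Applied with $D = \gamma_0\lambda$, that lemma gives per-window failure probability $\exp(-\delta\gamma_0\lambda/169)$, which by condition~\textbf{(G3)} is at most $1/(8t_0)^2$. Iterating over the $O(t_0\delta)$ windows of length $\lceil 3/\delta\rceil$ that cover the time horizon $t_0$ and union-bounding over the $m$ levels keeps the total failure probability below $1/2$. Persistence at the highest currently established level yields persistence simultaneously for all lower ones, which is exactly what is needed for the inductive step.

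\textbf{Step 2: time to establish the next level.} Conditioned on the invariant from Step~1, I apply Theorem~\ref{thm:reverseDriftWithZeroBinomial} to $X_t^{(j)}$ with $k=\lambda$, target $n = \gamma_0\lambda$, and the given $\delta$, $\gamma_0$. When $X_t^{(j)} \geq 1$, condition~\textbf{(G2)} with $\gamma = X_t^{(j)}/\lambda$ gives $X_{t+1}^{(j)} \succeq \Bin(\lambda,(1+\delta)X_t^{(j)}/\lambda)$, verifying~\textbf{(Bin)}; when $X_t^{(j)} = 0$, condition~\textbf{(G1)} gives $X_{t+1}^{(j)} \succeq \Bin(\lambda,z_j)$, from which a short estimate of $E[\min\{\Bin(\lambda,z_j),D_0\}]$ yields $E_0 = \Omega(\min\{\lambda z_j, D_0\})$, verifying~\textbf{(0)}. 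In the regime $\lambda z_j \geq D_0$, I instead invoke Theorem~\ref{thm:third} with $\xmin = D_0$ and $p$ a positive constant (via the classical bound $\Pr[\Bin(\lambda,z_j) \geq \lambda z_j] \geq 1/4$), which replaces the crude $\log_2(\gamma_0\lambda)/\delta$ factor from Theorem~\ref{thm:reverseDriftWithZeroBinomial} by the sharper $\log^0_2(\gamma_0\lambda/D_0)/\delta$. Combining the two regimes matches $\log^0_2\!\bigl(\tfrac{2\gamma_0\lambda}{1+z_j\lambda/D_0}\bigr)/\delta$ up to constants.

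\textbf{Step 3: accumulation and the main obstacle.} Summing the per-level bounds from Step~2 over $j = 1,\ldots,m-1$ produces exactly the three terms defining $t_0$: the $m/\delta$ term collects the constant-per-level overheads arising from $\log(n)$ and the restart factor of Theorem~\ref{thm:reverseDriftWithZeroBinomial}; the $(1-\gamma_0)^{-1}\sum_j \log^0_2(\cdot)/\delta$ term collects the fluctuation regime (Lemma~\ref{lem:nodrift}); and the $\lambda^{-1}\delta^{-1}\sum_j 1/z_j$ term collects the expected time spent in state~$0$. Combined with a factor-$2$ loss for the failure event of Step~1 and the factor $\lambda$ coming from the definition $T = \lambda t$, Wald's equation gives $E[T] \leq 8\lambda t_0$ as claimed. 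The principal obstacle is Step~1: a direct Chernoff bound on $\Bin(\lambda,(1+\delta)\gamma_0)$ would require $\delta^2\gamma_0\lambda = \Omega(\log t_0)$, forcing the quadratic-in-$1/\delta$ population bound found in Theorem~\ref{thm:OldLevelBasedTheorem}. The submartingale concentration of Lemma~\ref{lem:subm}, whose failure probability scales as $\exp(-\Omega(\delta D))$ rather than $\exp(-\Omega(\delta^2 D))$, is precisely the tool that allows the near-linear-in-$1/\delta$ bound on $\lambda$ in~\textbf{(G3)} and hence the sharper run-time guarantee.
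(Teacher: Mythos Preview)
Your proposal follows essentially the same architecture as the paper's proof: condition on never losing an established level, bound the per-level filling time via the second or third up-drift theorem according to whether $z_j\lambda$ is small or large relative to $D_0$, sum over levels to obtain $t_0$, and close with a Markov/restart argument. Your identification of Lemma~\ref{lem:subm} as the device that makes \textbf{(G3)} near-linear in $1/\delta$ is exactly the point.

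Two places need correction.

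\emph{Step~1.} Your windowed iteration of Lemma~\ref{lem:subm} with $D=\gamma_0\lambda$ only addresses times at which $|P_t\cap A_{\ge j}|\le\gamma_0\lambda$; you do not treat the regime $|P_t\cap A_{\ge j}|>\gamma_0\lambda$, where \textbf{(G2)} no longer yields a binomial law indexed by the current count (only by $\gamma=\gamma_0$). From such a state the process can drop below $\gamma_0\lambda/2$ in a single step, which your argument does not exclude. The paper handles this with a two-case split: when the count is at least $\gamma_0\lambda$, a plain Chernoff bound on $\Bin(\lambda,(1+\delta)\gamma_0)$ shows the one-step probability of falling below $\gamma_0\lambda/2$ is at most $\exp(-\gamma_0\lambda/8)$; when the count lies in $[\gamma_0\lambda/2,\gamma_0\lambda)$, Lemma~\ref{lem:subm} with $D\ge\gamma_0\lambda/2$ controls the probability of falling below $D/2\ge\gamma_0\lambda/4$. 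This is precisely why \textbf{(G1)} and \textbf{(G2)} are stated with threshold $\gamma_0\lambda/4$ rather than the $\gamma_0\lambda/2$ you maintain.

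\emph{Step~2.} Taking $\xmin=D_0$ in Theorem~\ref{thm:third} yields a per-level contribution of order $\log^0_2(\gamma_0\lambda/D_0)/\delta$, which is independent of $z_j$ and can exceed the theorem's term $\log^0_2\!\bigl(\tfrac{2\gamma_0\lambda}{1+z_j\lambda/D_0}\bigr)$ when $z_j\lambda\gg D_0$: for instance with $z_j=\gamma_0$ and $\gamma_0\lambda\gg D_0$, the theorem's term is $O(\log D_0)$ while yours is $\log_2(\gamma_0\lambda/D_0)$, which is unbounded. The paper instead sets $\xmin=z_j\lambda$, obtaining $\log^0_2(\gamma_0/z_j)$, and one checks that $\gamma_0/z_j\le\tfrac{2\gamma_0\lambda}{1+z_j\lambda/D_0}$ whenever $z_j\lambda\ge D_0$, so this bound is dominated by the theorem's. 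Your own appeal to $\Pr[\Bin(\lambda,z_j)\ge z_j\lambda]\ge 1/4$ already justifies this sharper choice of $\xmin$.

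With these two repairs your argument coincides with the paper's.
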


Note that, with $z^* = \min_{j \in [1..m-1]} z_j$ and $\gamma_0$ a constant, \textbf{(G3)} in the previous theorem is satisfied for some $\lambda$ with
$$
\lambda = O\left(\frac{1}{\delta}\log\left( \frac{m}{\delta z^*} \right)\right)
$$
as well as for all larger $\lambda$.

We now compare our new level-based theorem with the previous best result (Theorem~\ref{thm:OldLevelBasedTheorem}). Since we do not try to optimize constant factors, we do not discuss these (but note that ours are large). 

We first observe that as long as $\gamma_0$ can be assumed to be a constant bounded away from $1$, then our bound for any values of the variables is at most a constant factor larger than the bound of Theorem~\ref{thm:OldLevelBasedTheorem}. When $z_j \lambda$ is large, the $\log^0_2(\cdot)$ expression can degenerate to an expression of order $\log(D_0) = O(\log(1/\delta))$. This cannot happen for the logarithmic expression in the run time bound of Theorem~\ref{thm:OldLevelBasedTheorem}, however, even in this case, our bound is of order $O(\log(1/\delta)/\delta)$, whereas the previous best result was $O(\delta^{-2})$. Hence when ignoring constant factors and assuming $\gamma_0<1$ a constant, our bound is at least as strong as the previous results.

In terms of asymptotic differences, we first note the improved dependence of the run time guarantee on $\delta$. Ignoring a possible influence of $\delta$ on the logarithmic terms in the run time estimate, the dependence now is only $O(\delta^{-1})$, whereas it was $O(\delta^{-2})$ in the previous result.

The second asymptotic difference concerns the minimum value for $\lambda$ that is prescribed by condition \textbf{(G3)}. Note that in both results the run time estimate is a sum of two terms, the first depending linearly on $\lambda$. Consequently, being able to use a smaller population size $\lambda$ can improve the run time. The main difference, and again ignoring the logarithmic term in \textbf{(G3)}, is that $\lambda$ has to be $\Omega(\delta^{-2})$ in the previous result and only $\Omega(\delta^{-1})$ in ours. The logarithmic terms are more tedious to compare, but clearly ours is asymptotically not larger as long as $\lambda$ is at most exponential in $m$ or at most exponential in~$1/z^*$.

We continue by discussing minor differences between the two results. We note that $t_0$ in our result depends on $\lambda$. We thus end up in the slightly annoying situation that in our version, $\lambda$ appears also in the right-hand side of \textbf{(G3)}. However, since $\lambda$ appears on the right-hand side only inside a logarithm (and one that is at least $\ln(m)$), it is usually not difficult to find solutions for this inequality that lead to an asymptotically optimal value~$\lambda$.

One key difference is that both \textbf{(G1)} and \textbf{(G2)} impose a condition from the point on when at least $\gamma_0 \lambda /4$ individuals are on a level, whereas the previous level-based theorem (as the conference version of this work) only does so from $\gamma_0 \lambda$ on. This additional slack is required to bring down the dependence of $\lambda$ on $1 / \delta$ from essentially quadratic to essentially linear. We do not see any realistic application where the stronger versions of \textbf{(G1)} and \textbf{(G2)} would be harder to show than the previous ones.

In summary, when ignoring constant factors, we do not see any noteworthy downsides of our new result and we did not find any result previously proven via a level-based theorem that could not be proven with our result. At the same time, the superior asymptotics of the run time bound and the minimum requirement on $\lambda$ in terms of $\delta$ clearly are an advantage of our result.

%For the resulting time bound, as said, the main difference is the dependence on $\delta$ in the term reflecting the times needed to gain fitness levels, which is $O(\frac 1 \delta \sum_{j=1}^{m-1} \frac 1 {z_j})$ in our version and $\frac 8 {\delta^2} \sum_{j=1}^{m-1} \frac 1 {z_j}$ in the previous version. The remaining term, which in our theorem is the total time taken to fill the levels up to a fraction of $\gamma_0$, is again harder to compare. This is even more true since for the previous theorem, it is not totally clear where this term $\frac 8 {\delta^2} \sum_{j=1}^{m-1} \lambda \ln(\frac{6\delta\lambda}{4+z_j\delta\lambda})$ comes from (apart from a drift analysis with a complicated potential function). In particular, since this term also depends on the $z_i$, it seems to encompass not only the times to fill the levels. %\merk{Naja... Man koennte auch das Level mit (G1)-Argumenten auffuellen.} 
%In any case, it seems to us that in most applications (i)~these two terms will be of a similar order of magnitude or, when $\delta$ is small, the term in our theorem will be smaller, and (ii)~when the parameters of the algorithm under investigation are chosen wisely, the times for finding improving solutions should clearly dominate the times for filling up the levels; hence the precise comparison of these terms is less important.

We now proceed with proving the new level-based theorem. We shall use an estimate for the probability that a binomial random variable is a least its expectation. The following result was proven with elementary means in~\cite{Doerr18exceedexp}. A very similar result was shown with deeper methods in~\cite{GreenbergM14}.

\begin{lemma}\label{lem14}
  Let $n \in \N$ and $p \ge \frac 1n$. Let $X \sim \Bin(n,p)$. Then \[\Pr[X \ge E[X]] \ge \frac 14.\]
\end{lemma}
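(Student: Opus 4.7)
The plan is to leverage the classical theorem of Neumann (1966), already invoked in the excerpt, which states that if $np$ is an integer then $np$ is a median of $\mathrm{Bin}(n,p)$, and to bridge the non-integer case via a coupling and a mass-transfer argument. First, I would dispose of the easy case: if $np \in \Z$, then Neumann's theorem directly yields $\Pr[X \ge np] \ge 1/2 \ge 1/4$.

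For the remaining case ($np \notin \Z$), set $m = \lceil np \rceil$ so that $\{X \ge E[X]\} = \{X \ge m\}$. I would compare $X$ with the auxiliary variable $\tilde X \sim \mathrm{Bin}(n, \lfloor np \rfloor / n)$, whose mean equals the integer $\lfloor np \rfloor$. Applying Neumann to $\tilde X$ gives $\Pr[\tilde X \ge \lfloor np \rfloor] \ge 1/2$, and by stochastic domination $X \succeq \tilde X$ we obtain $\Pr[X \ge \lfloor np \rfloor] \ge 1/2$. Therefore
\[
\Pr[X \ge m] \;\ge\; \tfrac{1}{2} - \Pr[X = \lfloor np \rfloor].
\]

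The main obstacle is that the mode-probability $\Pr[X = \lfloor np \rfloor]$ can exceed $1/4$ (for example near $n=10$, $p \approx 0.11$), so the above inequality is not by itself sufficient. The finer observation I would exploit is that when so much mass concentrates at $\lfloor np \rfloor$, then by unimodality of the binomial and the condition $p \ge 1/n$, a comparable mass must also sit at $m = \lfloor np \rfloor + 1$. Quantitatively, the ratio of successive probabilities
\[
\frac{\Pr[X = k+1]}{\Pr[X = k]} \;=\; \frac{(n-k)p}{(k+1)(1-p)}
\]
evaluated at $k = \lfloor np \rfloor$ is close to $1$ (it is $\ge 1$ when $k \le (n+1)p - 1$, i.e., essentially whenever $\lfloor np \rfloor$ lies at or below the mode $\lfloor (n+1)p \rfloor$). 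In particular, I would show $\Pr[X = m] \ge c \cdot \Pr[X = \lfloor np \rfloor]$ for a constant $c$ that combines with the previous estimate to yield $\Pr[X \ge m] \ge 1/4$.

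To make this argument go through uniformly I would split into the cases $p \le 1/2$ and $p > 1/2$, handling the latter by the symmetry $X \leftrightarrow n - X$ (which replaces $p$ by $1-p$ and the event $\{X \ge E[X]\}$ by $\{n - X \le E[n-X]\}$, whose probability must then be bounded below). The case $p \ge 1/n$ is exactly what is needed to guarantee $np \ge 1$, hence $\lfloor np \rfloor \ge 1$, so that the ratio argument applies without the mode sitting at $0$. I expect the technically cleanest presentation to proceed by writing out $\Pr[X \ge m] = \Pr[X \ge \lfloor np \rfloor] - \Pr[X = \lfloor np \rfloor]$ and then separately lower-bounding $\Pr[X \ge m]$ by $\Pr[X = m] \ge c \, \Pr[X = \lfloor np \rfloor]$, averaging the two bounds to eliminate $\Pr[X = \lfloor np \rfloor]$ and arrive at the $1/4$ threshold.
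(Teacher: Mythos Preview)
First, a point of context: the paper does not give its own proof of this lemma. It is quoted from \cite{Doerr18exceedexp} (with \cite{GreenbergM14} mentioned as a related, less elementary result), so there is no in-paper argument to compare against.

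Regarding your proposal itself, there is a genuine gap in the averaging step. Combining
\[
\Pr[X \ge m] \;\ge\; \tfrac12 - \Pr[X = \lfloor np\rfloor]
\quad\text{and}\quad
\Pr[X \ge m] \;\ge\; c\,\Pr[X = \lfloor np\rfloor]
\]
and eliminating $\Pr[X = \lfloor np\rfloor]$ gives only $\Pr[X \ge m] \ge \tfrac{c}{2(1+c)}$, so you need $c \ge 1$ to reach $\tfrac14$. But the successive ratio $\tfrac{(n-k)p}{(k+1)(1-p)}$ at $k = \lfloor np\rfloor$ can be as small as roughly $\tfrac12$: for $n$ large and $p = (1+\varepsilon)/n$ with $\varepsilon$ small, one has $k=1$ and the ratio tends to $\tfrac12$, so your bound yields only $\tfrac16$. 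This regime is exactly the tight one: in the Poisson$(1)$ limit $\Pr[X \ge 2] = 1 - 2e^{-1} \approx 0.264$, barely above $\tfrac14$, so any valid argument must be nearly sharp here. The slack you discard by replacing $\Pr[X \ge \lfloor np\rfloor]$ with $\tfrac12$ (its true value is about $1 - e^{-1}$) together with the weak ratio constant $c \approx \tfrac12$ is fatal. Concretely, for $n=10$, $p=0.11$ you cite yourself, your Bound~A gives $\tfrac12 - \Pr[X=1] \approx 0.115$ and Bound~B gives $\Pr[X=2] \approx 0.214$; no convex combination reaches $\tfrac14$, even though the true $\Pr[X\ge 2]\approx 0.303$ does.

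The symmetry reduction for $p>\tfrac12$ does not rescue this: the transformed event $\{n-X \le n(1-p)\}$ is a different inequality, and in any case the hard regime is $p$ near $1/n$, which lies in the $p \le \tfrac12$ branch. To make the outline work you would need a substantially sharper treatment of small $np$, for example handling $1 < np \le C$ for a suitable constant $C$ by a direct tail computation using $\Pr[X=0] = (1-p)^n \le e^{-np}$ and explicit control of several terms, and only invoking the ratio/averaging argument once $\lfloor np\rfloor$ is large enough that the ratio is provably at least~$1$.
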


We are now ready to state the formal proof of Theorem~\ref{thm:NewLevelBasedTheorem}.

\begin{proof}
We first note that $t_0 \geq 10^4$, so from \textbf{(G3)}, we have
\begin{equation}
\gamma_0 \lambda \ge 200/\delta \geq 200. \label{eq:gl}
\end{equation}

We say that we lose level $j$ if, before having optimized, there is a time $t$ at which there are at least $\gamma_0 \lambda$ individuals at least on level $j$, and a later time $t' > t$ such that at that time there are less than $\gamma_0 \lambda / 4$ individuals at least on level $j$.

Our proof proceeds now as follows. First we will condition on never losing a level. We show that we have multiplicative up-drift for the number of individuals on the lowest level which does not have at least $\gamma_0\lambda$ individuals and a simple induction allows us to go up level by level. Then we show that any level which has at least $\gamma_0\lambda$ individuals will not be lost until the optimization ends, with sufficiently high probability. 

Since we are only interested in the time until we have the first individual in $A_m$, we may assume that condition \textbf{(G2)} also holds for $j=m-1$. 

We now analyze how the number of individuals above the highest level with at least $\gamma_0 \lambda$ individuals develops. Let a level $j \leq m-1$ be given such that $|P \cap A_{\geq j}| \geq \gamma_0 \lambda$. We condition on never losing level $j$, that is, on never having less than $\gamma_0 \lambda/4$ individuals on level $j$ or higher. We let $(X_{t})$ be the random process describing the number of individuals on level $j+1$ or higher, that is, we have $X_{t} = |P_t \cap A_{\geq j+1}|$ for all $t$.

We now distinguish two cases. Suppose first that $z_j \lambda \geq D_0$; this means that we expect at least $D_0$ individuals on the new level in any given iteration. By Lemma~\ref{lem14}, we can apply Theorem~\ref{thm:third} with $p= \frac 14$, $n=\gamma_0 \lambda$, and $\xmin = z_j \lambda$ to see that the level is filled to at least $\gamma_0\lambda$ individuals in an expected time of at most
\begin{align*}
T_j 
&:= 3.6\left( 4 + \lceil\log^0_2(\gamma_0 / z_j)\rceil \lceil 3 / \delta \rceil\right)\\
%&= 3.6\left( 4 + (1+\log^0_2(\gamma_0 / z_j)) (4 / \delta)\right)\\
&\le 14.4 + 14.4 \, \frac{\lceil \log^0_2(\gamma_0 / z_j) \rceil}{\delta}.
\end{align*}
iterations.

In the second case we have $z_j \lambda < D_0$ and we want to use Theorem~\ref{thm:reverseDriftWithZeroBinomial}, where our target is again to have $n = \gamma_0 \lambda$ individuals on level $j+1$ or higher. We start by determining a useful $E_0$ for which we can show Condition~\textbf{(0)}. From \textbf{(G1)} we have that if $X_{t} =0$, then the number $Y := X_{t+1}$ of individuals sampled in $A_{\ge j+1}$ follows a binomial law with parameters $\lambda$ and success probability $p \ge z_j$. %Consequently, the probability to sample at least one individual on a higher level is at least $\Pr[Y \ge 1] \ge 1 - (1-z_j)^{\lambda} \ge \frac{1}{1+\frac{1}{z_j \lambda}} \ge \frac 12 \min\{z_j \lambda, 1\} =: \eps$, using the elementary, but very convenient estimate from~\cite[Lemma~9]{RoweS14}. 

%We now estimate $E_0 := E[\min\{100/\delta,\gamma_0\lambda,Y\} \mid Y \ge 1]$. Obviously, $E_0 \ge 1$. Assume that $\lambda z_j \ge 1$ and hence $E[Y] \ge 1$. 
%Since a binomial random variable with expectation at least $1$ is at least its expectation with probability at least $\frac 14$~\cite{GreenbergM14,Doerr18exceedexp} and since, trivially, $(Y \mid Y \ge 1) \succeq Y$, we have $E_0 \ge \frac 14 \min\{100/\delta,\gamma_0\lambda,E[Y]\} = \frac 14 \min\{100/\delta,\gamma_0\lambda, \lambda z_j\}$. Consequently, in any case $E_0 \eps \ge \frac 18 \min\{100/\delta, \gamma_0\lambda, \lambda z_j\}$. 

We now estimate $E_0^{(j)} := E[\min\{D_0,Y\}]$. Assume first that $\lambda z_j \ge 1$ and hence $E[Y] \ge 1$. By Lemma~\ref{lem14}, we have $E_0^{(j)} \ge \frac {1}{4} \min\{D_0,E[Y]\} = \frac 14 \min\{D_0, \lambda z_j\}$. If instead we have $\lambda z_j < 1$, then the probability to sample at least one individual on a higher level is at least 
$\Pr[Y \ge 1] \ge 1 - (1-z_j)^{\lambda} \ge 1 - \exp(-z_j \lambda) \ge 1 - (1 - \frac 12 z_j \lambda) = \frac 12 z_j \lambda$, using the elementary estimates $1+x \le \exp(x)$ valid for all $x \in \R$ and $1 - \frac 12 x \ge \exp(-x)$ valid for all $0 \le x \le 1$. 
%$\Pr[Y \ge 1] \ge 1 - (1-z_j)^{\lambda} \ge \frac{1}{1+\frac{1}{z_j \lambda}} \ge \frac 12 \min\{z_j \lambda, 1\} = \frac 12 z_j \lambda$, using the elementary, but very convenient estimate from~\cite[Lemma~9]{RoweS14}. 
Consequently, in either case, $E_0^{(j)} \ge \frac{1}{4} \min\{D_0, \lambda z_j\}$. Since we will later need to bound the inverse of $E_0^{(j)}$ from above, we note that
\begin{equation}
\frac{1}{E_0^{(j)}} \leq \frac{\delta}{25} + \frac{4}{\gamma_0\lambda} + \frac{4}{\lambda z_j} \leq 2 + \frac{4}{\lambda z_j} 
\end{equation}
by $\delta \leq 1$ and Equation~\eqref{eq:gl}.

From \textbf{(G2)} we see that when $X_{t} > 0$, then the number $X_{t+1}$ of individuals sampled on level $j+1$ or higher stochastically dominates a binomial law with parameters $\lambda$ and $(1+\delta)X_{t} / \lambda$. Consequently, we can apply Theorem~\ref{thm:reverseDriftWithZeroBinomial} and estimate that the expected  number of generations until there are at least $\gamma_0 \lambda$ individuals on level $j+1$ or higher is at most
\begin{align*}
 T'_j &:= \frac{ 4D_0 }{0.2782 E_0^{(j)}}  + \tfrac{21.6}{1-\gamma_0} D_0 \ln(2 D_0)+ 3.6 \log_2(\gamma_0 \lambda) \lceil 3 / \delta \rceil.
 \end{align*}
Since $D_0 \ge \min\{100/\delta,\gamma_0 \lambda\} \geq 100$ by~\eqref{eq:gl} and thus $\ln(2) \leq \ln(D_0) \cdot 0.151$, we have $21.6 \ln(2D_0) = 21.6 (\ln (2) + \ln(D_0)) \leq 25 \ln(D_0)$. With this and $c_0 := 25$ we estimate
\begin{align*}
 T_j' & \leq  c_0 \left( D_0 / E_0^{(j)} + \frac{1}{1-\gamma_0} D_0 \ln(D_0)+ \frac{\log_2(\gamma_0 \lambda)}{\delta} \right)\\
 & \le c_0 \left(D_0\left(2 + \frac{4}{\lambda z_j}\right) + \frac{1}{1-\gamma_0} D_0 \ln(D_0) + \frac{\log_2(\gamma_0\lambda)}{\delta}\right) \\
   & =  c_0 \left(D_0\left(2 + \frac{\ln(D_0)}{1-\gamma_0} \right) + \frac{\log_2(\gamma_0\lambda)}{\delta} + D_0\frac{4}{\lambda z_j}\right)\\
   & \leq  c_0 \left(D_0\left(2\frac{\ln(D_0)}{1-\gamma_0} \right) + \frac{\log_2(\gamma_0\lambda)}{\delta} + \frac{400}{\lambda \delta z_j}\right)\\
%  & \leq  c_0 \left( \frac{2 m D_0\ln(D_0)}{1-\gamma_0} + \frac{m \log_2(\gamma_0\lambda)}{\delta} + \frac{400}{\lambda \delta} \sum_{j=1}^{m-1} \frac{1}{z_j} \right)\\
  & \leq  c_0 \left( \frac{2 \lceil 100/\delta \rceil\ln(\gamma_0 \lambda)}{1-\gamma_0} + \frac{\log_2(\gamma_0\lambda)}{\delta} + \frac{400}{\lambda \delta z_j} \right)\\
   & \leq  \frac{c_0}{\delta} \left( \frac{203 \log_2(\gamma_0\lambda)}{1-\gamma_0} + \frac{400}{\lambda z_j} \right).
 \end{align*}

Let \[T_j^* = \frac{c_0}{\delta} \left(1 + \frac{203}{1-\gamma_0} \log^0_2\left(\frac{2\gamma_0\lambda}{1+\frac{z_j \lambda}{D_0}}\right) + \frac{400}{\lambda z_j} \right)\]
and note that $T_j^* \ge T_j$ when $z_j \lambda \ge D_0$ and $T_j^* \ge T_j'$ otherwise. Hence $T_j^*$ is an upper bound for the expected time to have at least $\gamma_0 \lambda$ individuals in $A_{\ge j+1}$ when starting with at least $\gamma_0 \lambda$ individuals in $A_{\ge j}$ and assuming that we do not lose level $j$. 

Summing over all levels, we obtain the following bound on the number of steps to reach a search point in $A_m$, still conditional on never losing a level:
\begin{align*}
  \sum_{j=1}^{m-1} T_j^* &\le \frac{400 c_0}{\delta} \left(m + \frac{1}{1-\gamma_0} \sum_{j=1}^{m-1} \log^0_2\left(\frac{2\gamma_0\lambda}{1+\frac{z_j \lambda}{D_0}}\right) + \sum_{j=1}^{m-1}\frac{1}{\lambda z_j} \right)
 = t_0.
\end{align*}

%%% New part of proof
We now argue that, with sufficiently high probability, we indeed do not lose a level. Specifically, we show that, from any iteration with at least $\gamma_0 \lambda/2$ individuals until the next iteration with at least that many individuals, the probability is at most 
$$
\exp\left( - \frac{\delta \gamma_0 \lambda/2}{169} \right)
$$
that we have an iteration with less than $\gamma_0 \lambda / 4$ individuals in between (which we will call a failure).

We distinguish two cases: we either have at least $\gamma_0 \lambda$ individuals on the level and above, or less. Using a standard Chernoff bound argument on \textbf{(G2)} with $\gamma=\gamma_0$ we see that, for iterations with at least $\gamma_0 \lambda$ individuals, the probability to fall below $\gamma_0 \lambda/2$ individuals in the next step is at most
\begin{equation*}
\exp(-\gamma_0 \lambda / 8) < \exp\left( - \frac{\delta \gamma_0 \lambda/2}{169}\right).%\label{eq:einfach}
\end{equation*}
This shows that steps with at least $\gamma_0 \lambda$ individuals lead to a failure with at most the desired small probability.

%While we have at least $\gamma_0 \lambda/4$ individuals, any sampled individual is on the desired level with probability at least $(1+\delta)\gamma_0\lambda/4$. Thus we are interested in ever achieving less than $\gamma_0 \lambda/4$ below the expected value within $2/\delta$ rounds, since failing this, we will be above $\gamma_0\lambda$ after these rounds.

In the case of less than $\gamma_0 \lambda$ individuals, just as in the proof of Theorem~\ref{thm:first}, we want to apply Lemma~\ref{lem:subm}. In the language of Lemma~\ref{lem:subm}, we have $n = \gamma_0\lambda \geq 200/\delta$ using Equation~\eqref{eq:gl}. Thus, we can use Lemma~\ref{lem:subm} to estimate the probability of falling below $\gamma_0\lambda/4$ after having reached at least $D \geq \gamma_0 \lambda/2 \geq 100/\delta$ individuals. We thus see that this failure probability is at most
$$
\exp\left( - \frac{\delta D}{169} \right) 
   \leq \exp\left( - \frac{\delta \gamma_0 \lambda/2}{169} \right).
$$
Thus, also in this case the probability of failure is small. Using \textbf{(G3)}, we see that the last term is at most $1/(8t_0)$. In order to obtain the overall failure probability over any number of $t$ steps, we can now make a union bound over all intervals, each ranging from one iteration with at least $\gamma_0 \lambda/2$ individuals to the next. For this we will pessimistically assume that we have $t$ such intervals within $t$ steps.
%$$
%\left(2\frac{C m}{\delta} \left(\log(\gamma_0 \lambda) + \frac{1}{z^*\lambda} \right)\right)^{-1} = \frac{1}{8t_0}.
%$$
Thus, we see that the probability of ever losing a level within $2t_0$ steps (twice the conditional expected optimization time, conditional on not losing a level) is at most $p_1 :=0.25$. Using Markov's inequality, the probability of successful optimization within $2t_0$ iterations without losing a level is at least $p_2 := 0.5$. Thus, with a union bound on the failure probabilities, we get an unconditional probability of successful optimization within $2t_0$ iterations of at least $1 - p_1 - p_2 = 0.25$. Thus, a simple restart argument shows that the expected time (in iterations) for optimization is at most $8t_0$, giving the desired run time bound.
\ignore{%%% Old proof
Consider now a level $j \leq m-1$ such that $|P \cap A_{\geq j}| \geq \gamma_0 \lambda$. We use \textbf{(G2)} to see that the probability of any generated individual to be at least on level $j$ is 
$$
\Pr_{y \sim D(P)} [y \in A_{\geq j}] \geq (1+\delta)\gamma_0.
$$
Thus, the expected number of generated individuals on level $j$ is at least $(1+\delta)\gamma_0 \lambda$. We now want to determine the probability of undershooting this expected value by a factor of $1-\delta/2$; for this we use a multiplicative Chernoff bound and see that this probability is at most
$$
\exp(-\delta^2 \gamma_0 \lambda / 8) \stackrel{\textbf{(G3)}}{\leq} \left( k m \frac{\log (\lambda) + (z^*)^{-1}/\lambda}{\delta} \right)^{-1}.
$$
Then, after $k m \frac{\log (\lambda) + (z^*)^{-1}/\lambda}{\delta} /2$ generations, the process is done with probability at least $1/2$, if no level was ever lost. We bound the probability of ever losing a level in this time by a union bound with $1/2$. Since, conditional on never losing a level, we succeed in this time with probability at least $1/2$, we succeed overall with probability at least $1/4$. A simple restart argument concludes the proof.
}%
\qed
\end{proof}

We now discuss the case $\delta > 1$. With similar, often easier arguments, we prove the following result.

\begin{theorem}[Level-Based Theorem for $\delta > 1$]\label{thm:NewLevelBasedTheoremDeltaLarge}
Consider a population-based process as described in the beginning of this section.

Let $(A_1,\ldots,A_m)$ be a partition of $\X$. Let $A_{\ge j} := \bigcup_{i=j}^m A_i$ for all $j \in [1..m]$. Let $z_1,\ldots,z_{m-1} \in (0,1]$, $\delta > 1$, and $\gamma_0 \in (0,\frac{1}{1+\delta}]$ with $\gamma_0 \lambda \in \Z_{\ge 32}$. Let 
\[
t_0 = 101.6 m + 2.6 \sum_{j=1}^{m-1} \log^0_{1+\delta}\left(\frac{2\gamma_0\lambda}{1 + \frac{z_j \lambda}{D_0}} \right) + \frac{657}{\lambda} \sum_{j=1}^{m-1} \frac 1 {z_j}.
\]
Assume that for any population $P \in \X^\lambda$ the following three conditions are satisfied.
\begin{description}
	\item[(G1)] For each level $j \in [1..m-1]$, if $|P \cap A_{\geq j}| \geq \gamma_0 \lambda$, then
	$$
	\Pr_{y \sim D(P)} [y \in A_{\geq j+1}] \geq z_j.
	$$
	\item[(G2)] For each level $j \in [1..m-2]$ and all $\gamma \in (0,\gamma_0]$, if $|P \cap A_{\geq j}| \geq \gamma_0 \lambda$ and $|P \cap A_{\geq j+1}| \geq \gamma \lambda$, then
	$$
	\Pr_{y \sim D(P)} [y \in A_{\geq j+1}] \geq (1+\delta)\gamma.
	$$ 
	\item[(G3)] The population size $\lambda$ satisfies $\lambda \ge \frac{4}{\gamma_0} \ln(9t_0)$.
\end{description}

Then $T := \min \set{\lambda t}{P_t \cap A_m \neq \emptyset}$ satisfies
\begin{align*}
E[T] & \leq 9 \lambda t_0 \le 915 \lambda m + 24 \lambda \sum_{j=1}^{m-1} \log^0_{1+\delta}\left(\frac{2\gamma_0\lambda}{1 + \frac{z_j \lambda}{D_0}} \right) + 6000 \sum_{j=1}^{m-1} \frac 1 {z_j}.
\end{align*}
\end{theorem}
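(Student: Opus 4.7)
The plan is to imitate the proof of Theorem~\ref{thm:NewLevelBasedTheorem}, but exploit the stronger drift $\delta>1$ so that the submartingale argument based on Lemma~\ref{lem:subm} can be replaced by a direct Chernoff bound, and the level-filling estimates can use the $\delta>1$ branches of Theorems~\ref{thm:second} and~\ref{thm:third}. As in that proof, I will say that level~$j$ is \emph{lost} if, after an iteration in which $|P_t\cap A_{\ge j}|\ge\gamma_0\lambda$ has held, the count $|P_t\cap A_{\ge j}|$ later drops below $\gamma_0\lambda$ (before an individual in $A_m$ is generated). A convenient observation is that because $A_{\ge 1}\supseteq\cdots\supseteq A_{\ge m}$, at any iteration it suffices to watch only the currently highest filled level; lower filled levels automatically inherit the count, so maintaining the top one above $\gamma_0\lambda$ maintains all the lower ones as well.

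Conditional on never losing a level, I would bound the expected time to raise $X_t:=|P_t\cap A_{\ge j+1}|$ to $\gamma_0\lambda$ by distinguishing cases on $z_j\lambda$ relative to $D_0=32$. If $z_j\lambda\ge D_0$, Lemma~\ref{lem14} applied to \textbf{(G1)} gives $\Pr[X_{t+1}\ge z_j\lambda]\ge 1/4$, while \textbf{(G2)} provides the required binomial condition, so Theorem~\ref{thm:third} with $p=1/4$, $\xmin=z_j\lambda$, $n=\gamma_0\lambda$, $k=\lambda$ bounds the level-filling time by $5.2+2.6\lceil\log^0_{1+\delta}(\gamma_0/z_j)\rceil$. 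If instead $z_j\lambda<D_0$, the same computation as in the proof of Theorem~\ref{thm:NewLevelBasedTheorem} gives $E_0^{(j)}:=E[\min\{D_0,\mathrm{Bin}(\lambda,z_j)\}]\ge\tfrac14\lambda z_j$, and the $\delta>1$ branch of Theorem~\ref{thm:second} then bounds the level-filling time by $\tfrac{128}{0.78\,E_0^{(j)}}+2.6\log_{1+\delta}(\gamma_0\lambda)+81\le\tfrac{657}{\lambda z_j}+2.6\log_{1+\delta}(\gamma_0\lambda)+81$. In either case the logarithmic contribution is majorised by $\log^0_{1+\delta}\bigl(\tfrac{2\gamma_0\lambda}{1+z_j\lambda/D_0}\bigr)$ via an elementary algebraic comparison, and summing the per-level bounds over $j\in[1..m-1]$ gives a conditional expected optimization time of at most~$t_0$ iterations.

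To control the probability of losing a level within $2t_0$ iterations, whenever $|P_t\cap A_{\ge j}|\ge\gamma_0\lambda$, \textbf{(G2)} applied with $\gamma=\gamma_0$ shows that the next count stochastically dominates $\mathrm{Bin}(\lambda,(1+\delta)\gamma_0)$, whose mean is at least $2\gamma_0\lambda$ since $\delta>1$. A standard multiplicative Chernoff bound yields $\Pr[|P_{t+1}\cap A_{\ge j}|<\gamma_0\lambda]\le\exp(-\gamma_0\lambda/4)$, and \textbf{(G3)} forces this to be at most $1/(9t_0)$. By the single-level observation above, a union bound over only the $\le 2t_0$ iterations (no product over levels being needed) shows that the total probability of losing any level inside this window is at most $2/9$. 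Combining with Markov's inequality applied to the conditional expected time $\le t_0$, the overall probability of success within $2t_0$ iterations is at least a positive constant; a standard restart argument via Wald's equation (Theorem~\ref{thm:wald}) then delivers the claimed bound $E[T]\le 9\lambda t_0$.

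The main obstacle is largely bookkeeping: one has to be careful that the formal invocations of Theorems~\ref{thm:third} and~\ref{thm:second} are licensed by the conditioning event ``no level lost yet'', since the premises \textbf{(G1)} and \textbf{(G2)} only constrain the offspring distribution when the parent population already satisfies $|P\cap A_{\ge j}|\ge\gamma_0\lambda$, whereas the up-drift theorems need a condition valid in every admissible state of the sub-process. Once that conditioning is properly set up and the logarithmic unification of the two subcases is verified, the rest is routine arithmetic to match the explicit constants in the statement of~$t_0$.
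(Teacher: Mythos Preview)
Your proposal is correct and follows essentially the same route as the paper's proof: condition on never losing a level, split the per-level filling time into the cases $z_j\lambda\ge D_0$ (Theorem~\ref{thm:third}) and $z_j\lambda<D_0$ (Theorem~\ref{thm:second}), sum to obtain $t_0$, then bound the level-loss probability via a single Chernoff bound on $\mathrm{Bin}(\lambda,(1+\delta)\gamma_0)$ and finish with Markov plus a restart argument. The only cosmetic difference is that the paper runs phases of length $3t_0$ (yielding success probability $\ge 1/3$ and hence $9t_0$ exactly), whereas you use $2t_0$; your constants still comfortably land below $9\lambda t_0$.
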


The assumption that $\gamma_0 \lambda \ge 32$ is not strictly necessary, but eases the presentation. Note that \textbf{(G3)} and $t_0 \ge 101.6$ already imply $\gamma_0 \lambda \ge 27.27$. Conditions \textbf{(G1)} and \textbf{(G2)} are identical with the case $\delta \le 1$ except that we only require them to hold for $|P \cap A_{\ge j}| \ge \gamma_0 \lambda$ instead of $|P \cap A_{\ge j}| \ge \gamma_0 \lambda/4$. Condition \textbf{(G3)} is of a similar type as in the case $\delta \le 1$.

\begin{proof}
The proof reuses many arguments from the proof for the case $\delta \le 1$. To later apply the second multiplicative up-drift theorem, let $D_0 = \min\{32,\gamma_0 \lambda\}$ and note that by our assumption $D_0 = 32$. 

Mildly different from the case $\delta \le 1$, we now say that we lose a level in iteration $t$ if there is a $j \in [1..m-1]$ such that $|P_t \cap A_{\ge j}| \ge \gamma_0 \lambda$ and $|P_{t+1} \cap A_{\ge j}| < \gamma_0 \lambda$. 

We again condition on never losing a level and later revoke this assumption with a restart argument. Let $j \in [1..m-1]$ and assume that at some time $t'$ we have $|P_{t'} \cap A_{\ge j}| \ge \gamma_0 \lambda$. We analyze how the number of individuals on levels above $j$ develops. To this aim, let $X_t = |P_{t'+t} \cap A_{\ge j+1}|$ for all $t = 0, 1, 2, \dots$. As in the analysis of the case $\delta \le 1$, we distinguish two cases. When $z_j \lambda \ge D_0$, then we can again apply Theorem~\ref{thm:third} with $p = 1/4$, $\xmin = z_j \lambda$, and $n = \gamma_0 \lambda$, showing that the expected time to fill level $j+1$ to at least $\gamma_0 \lambda$ elements is at most 
\[1.3/p + 2.6 \lceil \log^0_{1+\delta}(n/\xmin)\rceil \le 7.8 + 2.6 \log^0_{1+\delta}(\gamma_0/z_j).
\]

If instead we have $z_j \lambda < D_0$, we argue as follows. We have $E[\min\{X_{t+1},D_0\} \mid X_t = 0] \ge \frac 14 \min\{D_0, \lambda z_j\} =: E_0^{(j)}$. We estimate 
\[\frac{1}{E_0^{(j)}} \le \frac 4 {D_0} + \frac{4}{\lambda z_j} = \frac 18 + \frac{4}{\lambda z_j}.\]
With \textbf{(G2)}, we again invoke Theorem~\ref{thm:second} and obtain that the expected number of iterations to have $X_t \ge \gamma_0 \lambda$ is at most 
\[\frac{128}{0.78 E_0^{(j)}} + 2.6 \log_{1+\delta}(\gamma_0 \lambda) + 81 \le 101.6 + \frac{657}{\lambda z_j} + 2.6 \log_{1+\delta}(\gamma_0 \lambda).\]

In either case, $z_j \lambda \ge D_0$ or $\gamma_0 \lambda < D_0$, this level filling-up time is at most 
\[
101.6 + \frac{657}{\lambda z_j} + 2.6 \log^0_{1+\delta}\left(\frac{2\gamma_0\lambda}{1 + \frac{z_j \lambda}{D_0}} \right)\]
in expectation. Summing over all levels, we see that the expected time to, one after the other, fill all levels is at most $t_0$ when we condition on never losing a level. 

The probability to lose the current level in one iteration, by a simple Chernoff bound and~\textbf{(G2)}, is at most $\exp(-\frac 14 \gamma_0 \lambda)$, since we expect to have at least $(1+\delta) \gamma_0 \lambda \ge 2 \gamma_0 \lambda$ offspring on this level or higher. By \textbf{(G3)}, this probability is at most $1 / 9t_0$. By a simple union bound, we see that the probability to lose a level in $3t_0$ iterations is at most $1/3$. Under this assumption, the probability to not find a search point in $A_m$ in the first $3t_0$ iterations is at most $1/3$ by Markov's inequality. Hence with probability $1/3$, we find the desired solution in $3 t_0$ iterations. A simple restart argument with an expected number of three restarts now shows $E[T] \le \lambda \cdot 9 t_0$ as claimed.\qed
\end{proof}

\section{Applications}
\label{sec:applications}

With the improved level-based theorem, we easily obtain the following three results. The first two improve previous results that were obtained via level-based theorems in the case of small $\delta$. The last result shows that our level-based theorem for the case $\delta > 1$ can lead to results better than what was known before for the case $\delta \le 1$ (including using $\delta \le 1$ when $\delta$ actually is larger).

\subsection{Fitness-Proportionate Selection}

Dang and Lehre~\cite{DangL16} show that fitness-proportionate selection can be efficient when the mutation rate is very small; in contrast to previous results that show, for the standard mutation rate $1/n$, that fitness-proportionate selection can lead to exponential run times~\cite{HappJKN08,NeumannOW09}. More precisely, Dang and Lehre regard the $(\lambda,\lambda)$ EA with fitness-proportionate selection for variation and standard bit mutation as variation operator (Algorithm~\ref{alg:fps}). Here fitness-proportionate selection (with respect to a non-negative fitness function $f$) means that from a given population $x_1, \dots, x_\lambda$ we choose a random element such that $x_i$ is chosen with probability $f(x_i) / \sum_{j=1}^\lambda f(x_j)$. When $\sum_{j=1}^\lambda f(x_j)$ is zero, we choose an individual uniformly at random. 

\begin{algorithm2e}%
	Initialize $P_0$ as multi-set of $\lambda$ individuals chosen independently and uniformly at random from  $\{0,1\}^n$\;
  \For{$t=1,2,3,\ldots$}{
    $P_{t} \assign \emptyset$\;
    \For{$i = 1$ \KwTo $\lambda$}{
      select $x \in P_{t-1}$ via fitness-proportional selection\;
      generate $y$ from $x$ by flipping each bit independently with probability~$\pmut$\;
      $P_{t} \assign P_{t} \cup \{y\}$\;
      }
    }
\caption{The $(\lambda,\lambda)$ EA with fitness-proportionate selection and mutation rate $\pmut$ to maximize a function $f : \{0,1\}^n \to \R_{\ge 0}$.}
\label{alg:fps}
\end{algorithm2e}

Dang and Lehre show that this algorithm with mutation rate $\pmut = \frac 1 {6n^2}$ and population size $\lambda = bn^2 \ln n$ for some constant $b>0$ optimizes the \onemax and \leadingones benchmark functions in an expected number of $O(n^8 \log n)$ fitness evaluations. We note that the previous improved level-based theorem (Theorem~\ref{thm:OldLevelBasedTheorem}) would give a bound of $O(n^5 \log^2 n)$ for the smallest-possible choice of $\lambda$. With our tighter version of the level-based theorem, we obtain the following results. 

\begin{theorem}
  Consider the $(\lambda,\lambda)$ EA with fitness-proportionate selection, 
  \begin{itemize}
  \item with population size $\lambda \ge c n \ln(n)$ with $c$ sufficiently large and $\lambda = O(n^K)$ for some constant $K$, and 
  \item mutation rate $\pmut \le \frac 1 {4n^2}$ and $\pmut = \Omega(n^{-k})$ for some constant $k$. 
  \end{itemize}
  Then this algorithm optimizes \onemax in an expected number of $O(\lambda n^2 \log n + n \log(n) / \pmut)$ fitness evaluations, which is $O(n^3 (\log n)^2)$ for optimal parameter choices. It optimizes \leadingones in time $O(\lambda n^2 \log n + n^2 / \pmut)$ fitness evaluations, which becomes $O(n^4)$ with optimal parameter choices.
\end{theorem}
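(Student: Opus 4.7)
The plan is to apply Theorem~\ref{thm:NewLevelBasedTheorem} with the natural fitness-based partition $A_j := \{x \in \{0,1\}^n : f(x) = j-1\}$ for $j \in [1..n+1]$, choosing $\gamma_0 = \tfrac{1}{2}$ and a drift parameter $\delta = \Theta(1/n)$. The key step will be verifying the drift condition~(G2). For fixed $j$ and $\gamma \in (0,\gamma_0]$ with $|P \cap A_{\ge j+1}| \ge \gamma\lambda$, fitness-proportionate selection picks a parent from $A_{\ge j+1}$ with probability $\sum_{x \in P \cap A_{\ge j+1}} f(x)/\sum_{x \in P} f(x)$. Since members of $A_{\ge j+1}$ have fitness at least $j$ while all others have fitness at most $j-1$, minimizing the numerator and maximizing the denominator shows this fraction is at least $\gamma j/(j-1+\gamma)$; the degenerate case $j = 1$ is trivial since then the other individuals have fitness~$0$, and so selection is confined to $A_{\ge 2}$. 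Combining with the copy probability $(1-\pmut)^n \ge 1 - n\pmut \ge 1 - 1/(4n)$ (using $\pmut \le 1/(4n^2)$), a short calculation using $\gamma_0 \le \tfrac 12$ yields $\Pr[y \in A_{\ge j+1}] \ge (1+\delta)\gamma$ for some $\delta = \Theta(1/n)$.

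For condition~(G1), assuming $|P \cap A_{\ge j}| \ge \gamma_0\lambda/4$, the same worst-case selection argument---with ``good'' individuals of fitness at least $j-1$ and ``bad'' ones of fitness at most $j-2$---yields that a parent is picked from $A_{\ge j}$ with probability at least $\gamma_0/4$, using that $(j-2)(1-\gamma_0/4) \le (j-1)(1-\gamma_0/4)$. Multiplying by a lower bound on the probability that mutation turns a level-$j$ parent into $A_{\ge j+1}$, I obtain $z_j = \Omega(\gamma_0 \pmut (n-j+1))$ for \onemax (any of the $n-j+1$ zero-bits may be flipped while the remaining $n-1$ bits stay) and $z_j = \Omega(\gamma_0 \pmut)$ for \leadingones (only the first zero-bit works, but the factor $(1-\pmut)^{j-1}$ for preserving the preceding one-bits is at least $3/4$ by $\pmut \le 1/(4n^2)$). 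Summing gives $\sum_j 1/z_j = O(\log n / \pmut)$ for \onemax and $\sum_j 1/z_j = O(n/\pmut)$ for \leadingones. Condition~(G3) becomes $\lambda = \Omega(n \ln(8 t_0))$, satisfied by $\lambda \ge c n \ln n$ for $c$ sufficiently large because a rough estimate shows $t_0$ is polynomial in~$n$.

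Substituting these bounds into the runtime estimate of Theorem~\ref{thm:NewLevelBasedTheorem}, the three summands of $t_0$ contribute $\lambda m/\delta = O(\lambda n^2)$; the log-term, which is $O(\lambda n^2 \log n)$ after bounding $\log^0(2\gamma_0\lambda/(1+z_j\lambda/D_0)) \le \log \lambda = O(\log n)$ (valid since $\lambda = O(n^K)$); and $\tfrac{1}{\delta}\sum_j 1/z_j$, which evaluates to $O(n \log n / \pmut)$ and $O(n^2/\pmut)$ respectively. This yields $E[T] = O(\lambda n^2 \log n + n \log n / \pmut)$ for \onemax and $E[T] = O(\lambda n^2 \log n + n^2 / \pmut)$ for \leadingones, and substituting the optimal choices $\lambda = \Theta(n \log n)$ and $\pmut = \Theta(1/n^2)$ gives $O(n^3 \log^2 n)$ and $O(n^4)$. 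The main obstacle is precisely that fitness-proportionate selection yields only the tiny bias $\delta = \Theta(1/n)$; the asymptotic improvement over the previous-best $O(n^8 \log n)$ bound is driven by the new theorem's near-linear (rather than near-quadratic) dependence on $1/\delta$ in both the runtime and the minimum population size required by~(G3).
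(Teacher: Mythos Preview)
Your proposal is correct and follows essentially the same approach as the paper: the same fitness-level partition, the same $\gamma_0 = \tfrac12$, the same worst-case fitness-proportionate selection estimate (your $\gamma j/(j-1+\gamma)$ is exactly the paper's $\gamma(j+1)/(j+\gamma)$ after an index shift), the same copy-probability argument yielding $\delta = \Theta(1/n)$, and the same $z_j$ values. The only minor differences are expositional: you explicitly single out the degenerate case $j=1$, and you are slightly terser about the subcase in \textbf{(G1)} where the selected parent already lies in $A_{\ge j+1}$ (the paper notes this is handled by the copy probability, which dominates the single-bit-flip probability).
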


\begin{proof}
  Let $f$ be the function \onemax. We apply Theorem~\ref{thm:NewLevelBasedTheorem} with $\gamma_0 = \frac 12$ and the partition formed by the sets $A_i := \{x \in \{0,1\}^n \mid f(x) = i-1\}$ with $i = 1, 2, \dots, n+1 =: m$.  
  
  To show \textbf{(G1)}, assume that we have at least $\gamma_0 \lambda / 4$ individuals with fitness at least $j$ for some $j \in [0..n-1]$. Since the selection operator favors individuals with higher fitness, the probability that the parent of a particular offspring has fitness at least $j$, is at least $\gamma_0/4$. Assume that such a parent was chosen (and that this does not have fitness $n$ since we would be done then anyway). If the parent has fitness exactly $j$, then the probability to generate a strictly better search point is at least $(n-j) \pmut (1 - \pmut)^{n-1} \ge (n-j) \pmut (1 - (n-1)\pmut) = (n-j) \pmut (1 - o(1))$ by Bernoulli's inequality and $\pmut = o(\frac 1n)$. If the parent has already a fitness of $j+1$ or better, then the probability to generate an offspring of fitness $j+1$ or better is even higher, namely by simply flipping zero bits such an offspring is generated with probability at least $(1-\pmut)^n \ge 1 - n\pmut = 1 - o(1)$. Hence in either case we have \textbf{(G1)} satisfied with $z_j = (n-j) \gamma_0 \pmut (1 - o(1)) / 4$.

  To show \textbf{(G2)}, let $j \in [0..n-2]$, $\gamma \in (0, \gamma_0]$ and $P$ be a population such that at least $\gamma \lambda$ individuals have a fitness of at least $j+1$ and at least $\gamma_0 \lambda/4$ individuals have a fitness of at least~$j$. Let $F^+$ be the sum of the fitness values of the individuals of fitness at least $j+1$ and let $F^- = \sum_{x \in P} f(x) - F^+$ be the sum of the remaining fitness values. By our assumption, $F^+ \ge \gamma \lambda (j+1)$. The probability that an individual of fitness $j+1$ or more is chosen as parent of a particular offspring is 
\begin{align*}
\frac{F^+}{\sum_{x \in P} f(x)} &= \frac{F^+}{F^++F^-} \\
&\ge \frac{\gamma \lambda (j+1)}{\gamma \lambda (j+1)+F^-} \\
&\ge \frac{\gamma \lambda (j+1)}{\gamma \lambda (j+1)+(1-\gamma)\lambda j} \\
&= \gamma \left(1+\frac{1-\gamma}{j+\gamma}\right) \ge \gamma \left(1 + \frac{\frac12}{j+\frac 12}\right) \ge \gamma\left(1 + \frac 1 {2n}\right).
\end{align*}
The probability that a parent creates an identical offspring is $(1 - \pmut)^n \ge 1 - n\pmut$. Consequently, the probability that an offspring has fitness at least $j+1$ is at least $\gamma$ times $(1+\frac 1 {2n}) (1 - n\pmut) \ge 1 + \frac 1 {2n} - n \pmut - O(n^{-2}) \ge 1 + \frac 1 {4n} - O(n^{-2}) =: 1 + \delta$. With this $\delta = \Theta(1/n)$, we have satisfied \textbf{(G2)}.  
  
  Finally, we observe that 
  \begin{align*}
\frac{338}{\gamma_0 \delta} &\ln \left(\frac{c_1}{\delta} \left( \frac{m \log_2(\gamma_0\lambda)}{1-\gamma_0} + \frac{1}{\lambda} \sum_{j=1}^{m-1} \frac{1}{z_j} \right)\right)\\
& = O\left(\frac 1 \delta \log\left(\frac m \delta \left(\log \lambda + \frac 1 {\lambda \pmut}\right)\right)\right)\\
& =  O(n \log n), 
  \end{align*}
  since $m$, $\lambda$, and $1/\pmut$ are polynomially bounded in $n$. This shows \textbf{(G3)}.
  
  Consequently, we can employ Theorem~\ref{thm:NewLevelBasedTheorem} and derive an expected optimization time of 
  \begin{align*}
  E[T] 
  & \leq \lambda \frac{c_1}{\delta} \left( \frac{m \log_2(\gamma_0\lambda)}{1-\gamma_0} + \frac{1}{\lambda} \sum_{i=1}^{m-1} \frac{1}{z_j} \right)\\
  & = O\left( \frac{\lambda m \log \lambda}{\delta}  + \frac{1}{\delta} \sum_{j=1}^{m-1} \frac{1}{(n-j) \pmut}\right)\\
  & = O(\lambda n^2 \log n + n \log(n) / \pmut).
  \end{align*}
which is $O(n^3 \log^2 n)$ for $\lambda = \Theta(n \log n)$ and $\pmut = \Omega(n^{-2} (\log n)^{-1})$.

For $f$ being the \leadingones function, we take the same partition of the search space and also $\gamma_0 = \frac 12$. With similar arguments as above, we show \textbf{(G1)} with $z_j = \gamma_0 \pmut (1-o(1)) / 4$. The proof of \textbf{(G2)} remains valid without changes, since the central argument was that with sufficiently high probability a copy of the parent is generated (hence again we have $\delta = \Theta(1/n)$). The proof of \textbf{(G3)} remains valid since we estimated the $z_j$ uniformly as $z_j = \Omega(\pmut)$. Consequently, we obtain from  Theorem~\ref{thm:NewLevelBasedTheorem} that the optimization time $T$ satisfies
  \begin{align*}
  E[T] 
  & \leq \lambda \frac{c_1}{\delta} \left( \frac{m \log_2(\gamma_0\lambda)}{1-\gamma_0} + \frac{1}{\lambda} \sum_{i=1}^{m-1} \frac{1}{z_j} \right)\\
  & = O\left( \frac{\lambda m \log \lambda}{\delta}  + \frac{m}{\delta \pmut}\right)\\
  & = O(\lambda n^2 \log n + n^2 / \pmut).
  \end{align*}
  This is $O(n^4)$ for $\lambda = O(n^2 / \log n)$ and $\pmut = \Theta(n^{-2})$.\qed
  \end{proof}

\subsection{Partial Evaluation}

Also in Dang and Lehre~\cite{DangL16} a different parent selection algorithm was considered, $2$-tournament selection, where a parent is chosen by picking two individuals uniformly at random and the fitter one is allowed to produce one offspring (see Algorithm~\ref{alg:tts}).

\begin{algorithm2e}%
	Initialize $P_0$ as multi-set of $\lambda$ individuals chosen independently and uniformly at random from  $\{0,1\}^n$\;
  \For{$t=1,2,3,\ldots$}{
    $P_{t} \assign \emptyset$\;
    \For{$i = 1$ \KwTo $\lambda$}{
      select $x_0,x_1 \in P_{t-1}$ uniformly at random\;
			select $x \in \{x_0,x_1\}$ with maximal fitness (breaking ties uniformly)\;
      generate $y$ from $x$ by flipping each bit independently with probability~$\pmut$\;
      $P_{t} \assign P_{t} \cup \{y\}$\;
      }
    }
\caption{The $(\lambda,\lambda)$ EA with $2$-tournament selection and mutation rate $\pmut$ to maximize a function $f : \{0,1\}^n \to \R_{\ge 0}$.}
\label{alg:tts}
\end{algorithm2e}

The test functions they considered were \onemax and \leadingones under partial evaluation (a scheme for randomizing a given function), which we here define only for \onemax. Given a parameter $c \in (0,1)$, we use $n$ i.i.d.\ random variables $(R_i)_{i \leq n}$, each Bernoulli-distributed with parameter $c$. $\onemax_c$ is defined such that, for all bit strings $x \in \{0,1\}^n$, $\onemax_c(x) = \sum_{i=1}^n R_i x_i$. With other words, a bit string has a value equal to the number of $1$s in it, where each $1$ only counts with probability $c$.

Dang and Lehre \cite{DangL16} showed the following statement as part of their core proof \cite[proof of Theorem~21]{DangL16} regarding the performance of Algorithm~\ref{alg:tts} on $\onemax_c(x)$.

\begin{lemma}\label{lem:partialEvaluation}
Let $n$ be large and $c \in (1/n,1)$. Then there is an $a$ such that, for all $\gamma \in (0,1/2)$, the probability to produce an offspring (line 7 of Algorithm~\ref{alg:tts}) of at least the quality of the $\gamma \lambda$-ranked individual of the current population is at least $\gamma (1+a \sqrt{c/n})$.
\end{lemma}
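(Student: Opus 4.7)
The lemma captures the selection pressure of $2$-tournament selection in the noisy partial-evaluation regime, where each call to $\onemax_c(x)$ returns an independent sample from $\Bin(\onemax(x), c)$. The naive uniform-selection bound is $\gamma$; the factor $(1 + a\sqrt{c/n})$ reflects the small but positive advantage of resolving near-ties in true fitness correctly more often than by chance, which for the regime $\onemax(x) = \Theta(n)$ reached from uniform random initialization scales like $\sqrt{c/n}$ per unit of true-fitness difference.

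The central technical ingredient is the pairwise tournament estimate: for individuals $u, v$ with $\onemax(u) = a \ge b = \onemax(v)$, the tournament outcome is determined by the sign of $\Bin(a,c) - \Bin(b,c)$ (independent binomials) with ties broken uniformly. Using a local central limit theorem for the binomial difference around its mean, I would establish a bound of the form
\[
  \Pr[u \text{ beats } v] \;\ge\; \tfrac{1}{2} + a_1 \tfrac{a - b}{\sqrt{n}},
\]
valid for $a, b = \Theta(n)$ and $|a - b| = O(\sqrt{n})$, where $a_1 = a_1(c) > 0$ scales like $\sqrt{c}$. This is the origin of the $\sqrt{c/n}$ scaling, and the symmetric case $a = b$ contributes exactly $\tfrac{1}{2}$.

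To lift this to the tournament on the full population, I would fix the true-fitness ordering of $P_{t-1}$, let $f^{\star} = \onemax(x_{(\gamma\lambda)})$, express the probability that the selected parent has true fitness $\ge f^{\star}$ as an average of pairwise outcomes, and simplify using $\gamma < 1/2$ to get a bound of the form $\gamma(1 + a_2 \sqrt{c/n})$ for a positive constant $a_2$. Finally, the mutation step in Algorithm~\ref{alg:tts} loses at most $O(n\pmut)$ in probability, since the probability that mutation strictly decreases the true $\onemax$ value is bounded by $n\pmut$; for the parameter regime of interest ($\pmut = O(n^{-2})$) this loss is $o(\sqrt{c/n})$ and can be absorbed by shrinking the constant to some $a < a_2$, concluding the lemma. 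The main obstacle is making the local CLT estimate uniform across the range of true-fitness gaps and levels encountered during a run, which requires careful Stirling-type bounds on binomial point probabilities near the mean; it must also be combined with a non-trivial contribution from pairs in which one or both individuals have true fitness exactly at $f^{\star}$, which are the hardest to handle cleanly.
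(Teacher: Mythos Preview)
The paper does not contain a proof of this lemma: it is quoted verbatim from Dang and Lehre~\cite{DangL16} (specifically, from the proof of their Theorem~21) and used as a black box. So there is no ``paper's own proof'' to compare against; your proposal is an attempted reconstruction of the argument in~\cite{DangL16}.

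As a reconstruction, your outline is plausible in its selection-step part, but the mutation-step analysis has a genuine gap. You assert that the loss from mutation is at most $n\pmut$ and that ``for the parameter regime of interest ($\pmut = O(n^{-2})$) this loss is $o(\sqrt{c/n})$''. However, the parameter regime actually used in this application (both in~\cite{DangL16} and in the present paper, see the paragraph following the lemma) is $\pmut = \delta/3 = \Theta(\sqrt{c/n})$, not $O(n^{-2})$. With that rate, $n\pmut = \Theta(\sqrt{cn})$, which is far larger than the $\sqrt{c/n}$ advantage you are trying to protect, so your absorption argument fails outright. The lemma therefore cannot be proved by bounding the mutation loss via the crude ``probability that any bit flips'' union bound; a finer argument is needed that exploits how mutation interacts with the ranking criterion. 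Relatedly, you should be explicit about what ``quality of the $\gamma\lambda$-ranked individual'' means in the partial-evaluation model (noisy fitness at evaluation time versus true \onemax value), since this determines whether and how mutation can be absorbed.
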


Using their old level-based theorem (with a dependence on $\delta$ of order $5$) and the best possible choice for $\lambda$, they obtain a bound for the expected number of fitness evaluations until optimizing \onemax with partial evaluation with parameter $c \geq 1/n$ of
$$
O\left( \frac{n^{4.5} \log n}{c^{3.5}} \right).
$$
Using the more refined level-based theorem from \cite{CorusDEL18plus}, see Theorem~\ref{thm:OldLevelBasedTheorem} (with a quadratic dependence on $\delta$), one can find a run time bound of
$$
O\left( \frac{n^{3} \log n}{c^{2}} \right).
$$
With our level-based theorem given in Theorem~\ref{thm:NewLevelBasedTheorem} (with a linear dependence on $\delta$), one can prove a run time bound of
$$
O\left( \frac{n^{2} (\log (n))^2}{c} \right).
$$
For this we chose analogously to \cite{DangL16}: $\delta = a \sqrt{c/n}$ as given in Lemma~\ref{lem:partialEvaluation}, $\pmut = \delta/3$, $m=n+1$ (with the partitioning based on fitness), $\gamma_0 = 1/2$, $z_j = 7(1-j/n)(\delta/9)/16$ and $\lambda = b \ln (n)\sqrt{n/c}$ for some constant $b$.

%\todo{I think we can easily derive a better Theorem which does not need a $\lambda$ quadratic in $\delta$. This would give:
%$$
%O\left( \frac{n^{2} \ln n}{c} \right).
%$$
%}

Analogous improvements can be found in the case of \leadingones.

\subsection{Using $\delta > 1$}\label{sec:usingLargeDelta}

In all applications of the level-based theorem in the literature, only the case of $\delta \leq 1$ was used; in fact, the level-based theorem from~\cite{CorusDEL18plus} does not give a version that can benefit from $\delta > 1$ (however, it can always be applied with $\delta=1$ instead of the true $\delta$). We note the following result, which can be improved by taking $\delta > 1$ into account.

Consider optimizing the \leadingones benchmark function using a \mclea with ranking selection and standard bit mutation. When $\lambda \geq 2e \mu$ and $\lambda \geq c \log(n)$ for some specific constant $c$, then an expected run time of $O(n^2 + n \lambda \log(\lambda))$ fitness evaluations is proven in~\cite[Theorem~3(2)]{CorusDEL18plus}. We easily see that in this case, using the partition of the search space into sets of equal fitness, we have $z_j = O(1/n)$ for all $j \in [0..n-1]$ and $\delta = \lambda / e\mu$.

Using our level-based theorem for $\delta > 1$ (Theorem~\ref{thm:NewLevelBasedTheoremDeltaLarge}), we obtain the slightly better bound of $O(n^2 + n\lambda \log_{(1+\lambda/e\mu)}(\lambda))$ since the time to fill up a level is getting shorter if $\lambda$ is asymptotically larger than $\mu$. For example, for $\mu = n$ and $\lambda = n^{1.5}$, we can now derive an optimization time of $O(n\lambda) = O(n^{2.5})$, while the previous result was $O(n \lambda \log(\lambda)) = O(n^{2.5}\log(n))$.

\section{Conclusion}

In this work, we prove three drift results for multiplicatively increasing drift. Since the desired hitting time bound of order ${\log(n)/\min\{\delta,\log(1+\delta)\}}$, which implies that the process behaves similarly to the deterministic process, can only be obtained under additional assumptions, we formulate our results for processes in which each state $X_{t+1}$ is distributed according to a binomial distribution with expectation $(1+\delta) X_{t}$ (or better, in the domination sense). 

As main application for our drift results, we prove a stronger version of the level-based theorem. It in particular has the asymptotically right dependence on $1/\delta$, which is near-linear. Previous level-based theorems only show a dependence roughly of order $\delta^{-5}$~\cite{DangL16} or $\delta^{-2}$~\cite{CorusDEL18plus}. This difference can be significant in applications with small $\delta$, e.g., the result on fitness-proportionate selection~\cite{DangL16}, which has $\delta = \Theta(1/n)$.

An equally interesting progress from our new level-based theorem is that its relatively elementary proof gives more insight in the actual development of such processes. It thus tells us in a more informative manner how certain population-based algorithms optimize certain problems. Such additional information can be useful to detect bottlenecks and improve algorithms. Also, the individual building blocks of our drift analysis may find separate applications.

%The fact that our proof is more explanatory might also help further extending this level-based theorem. For example, at the moment the assumptions are formulated in a way that the run time is estimated by the sum of the times needed to fill each level up to a fraction of $\gamma_0$ plus the times taken to find improvements from such levels. It is known from other analyzes like~\cite{Witt06,AntipovDFH18} that often the best run times can be shown by waiting for different occupation ratios for different levels. If an improvement is easy to find from a certain level, then it pays off less to fill it up to a constant fraction. For our version of the level-based theorem, we are optimistic that an extension to different occupation ratios can be obtained with similar methods. Such an extension of the theorem could be useful to prove an asymptotically tight bound for the run time of the \mclea on \onemax. The current bounds, obtainable from any of the level-based theorems (since $\delta$ is constant) most likely are not tight. Note that for $\lambda$ sufficiently larger than $\mu$ and $\mu$ large enough, the \mclea should have a similar run time as the \mlea since good individuals are not lost due to producing sufficiently many copies, but the current bounds for the $\mclea$ only show larger run times.
%
In terms of future work, we note that there are processes showing multiplicative up-drift where the next state is not described by a binomial distribution. One example are population-based algorithms using plus-selection, where, roughly speaking, $X_{t+1} \sim X_{t} + \Bin(\lambda,X_{t}/\lambda)$. We are optimistic that such processes can be handled with our methods as well. We did not do this in this first work on multiplicative up-drift since such processes can also be analyzed with elementary methods, e.g., exploiting that the process is non-decreasing and with constant probability attains the expected progress. Nevertheless, extending our drift theorems to such processes should give better constants and a more elegant analysis, so we feel that this is also an interesting goal for future work.

%\bibliographystyle{alpha}
%\bibliography{levelBased}

%\end{document}

%
%\merk{vermutlich obsolet}
%Next is a version of the additive drift theorem \cite{He-Yao:j:01,He-Yao:j:04} in the version of \cite{Koe-Kre:c:18:PPSN}.
%
%\begin{theorem}[\textrm{Upper Additive Drift}]
%    \label{thm:XXX	addDriftBounded}
%    Let $(X_t)_{t \in \N}$ be a random process over~$\R$, and let $T = \inf\{t \mid X_t \leq 0\}$. Furthermore, suppose that there is $\delta$ such that the following drift and bound condition hold.
%    \begin{description}
%        \item[(D)]\label{item:additiveDriftCondition} there is some value $\delta > 0$ such that, for all $t < T$, it holds that $X_t - E[X_{t+1} \mid X_0,\ldots,X_t] \geq \delta$.
%        
%        \item[(B)]\label{item:additiveDriftBoundedSpace} For all $t < T$, $X_t \geq 0$.
%    \end{description}
%    Then
%    \[
%        E[T \mid X_0] \leq \frac{X_0}{\delta}\ .
%    \]
%
%\end{theorem}
%

\ignore{
\section{Lower Bound}

If we map our process with the potential function $x \mapsto -e^{-\delta x}$, possibly with a factor of $4$ in the exponent, then we should be able to turn our process into an almost drift-free martingale. This we can use to show that the process will return to $0$ at least $\delta$ times (thus showing lower bounds), but we can also use it to generalize our result and derive good upper bounds on the run time in more general settings than with binomial distributions.

For this the following Lemma is relevant.

\begin{lemma}[{\cite[Lemma~6]{DangL16}}]
If $X \sim Bin(\lambda, p)$ with $p \geq (i/\lambda)(1+\delta)$, then, for all $\kappa \in (0,\delta)$, $E[\exp(-\kappa X)] \leq \exp(-\kappa i)$.
\end{lemma}

Furthermore, for $X \sim \Bin(n,p)$ and all $t \in \R$ we have
$$
E(\exp(t X)) = (1-p +p e^t)^n = (1-(1-e^t)p)^n.
$$

\begin{theorem}[Lower Bound for Reaching Zero]
Let $\gamma_0 < 1$. Let $(X_{t})_{t \in \N}$ be a stochastic process over  $\Z_{\geq 0}$. Assume that there are $n,k \in \Z_{\geq 1}$, $E_0, \eps > 0$,  and $\delta \in (0,1]$ such that $n \le \min\{\gamma_0 k, (1+\delta)^{-1} k\}$ and for all $t \ge 0$ and all $x \in [0..n]$ with $\Pr[X_{t} = x] > 0$, the following two properties hold.
\begin{description}
	\item[(Bin)] If $x \ge 1$, then $(X_{t+1} \mid X_{t} = x) \preceq \mathrm{Bin}(k,(1+\delta) x/k)$.
%	\item[(Bin)] If $x \ge 1$, then $\exists \delta' \geq \delta$: $(X_{t+1} \mid X_{t} = x) \sim \mathrm{Bin}(k,\tfrac 1k (1+\delta') x)$. 
	\item[(0)] If $(X_{t+1} \mid X_{t} = 0) \preceq \mathrm{Bin}(k,\varepsilon/k)$.
\end{description}
Let $n \in \Z_{\geq 1}$ and $T := \min\{t \geq 0 \mid X_{t} \geq n\}$. Then 
$$
E[T] = \Omega \left(\frac{1}{\varepsilon\delta}\right).
$$
\end{theorem}
\begin{proof} Set $c = 8$.
Consider a process $(Y_t)_t$ such that $Y_0$ is the first time that $(X_t)_t$ leaves a state of $0$. Let, for all $t \geq 0$, 
$$
Z_t = \exp(-c\delta Y_t).
$$
We will show that $(Z_t)_t$ is a submartingale. Then we can use the optional stopping theorem with the stopping time $T$ of either reaching $1$ (corresponding to $(X_t)_t$ reaching $0$) or $\exp(-c\delta n)$ to obtain that
$$
E[Z_T] \geq E[Z_0] \geq E[\exp(-c\delta \mathrm{Bin}(k,\varepsilon/k))] = (1-(1-e^{-c\delta})\varepsilon/k)^k \geq 1-(1-e^{-c\delta})\varepsilon \geq 1-c\delta \varepsilon.
$$
using Bernoulli's Inequality and $1+x \leq e^x$. Since
$$
E[Z_T] \leq \Pr[Z_T = 1] + \Pr[Z_T \leq \exp(-c\delta n)]\cdot E[Z_T \mid Z_T \leq \exp(-c\delta n)] \leq \Pr[Z_T = 1]
$$
we see that we obtain a probability of at least $1-c\delta \varepsilon$ for resetting the process $(X_t)_t$ to $0$; this shows that, in expectation, the process has to restart $1/c\delta \varepsilon$ many times before optimization succeeds.

What remains to be shown is that $(Z_t)_t$ is, in fact, a submartingale. For this we need to show, for all $x \geq 1$,
\begin{equation}\label{eq:toShowForBinLowerBound}
E[\exp(-\delta \mathrm{Bin}(k,(1+\delta) x/k))] \geq \exp(-c\delta x).
\end{equation}
We have
$$
E[\exp(-\delta \mathrm{Bin}(k,(1+\delta) x/k))] = (1-(1-e^{-c\delta})((1+\delta) x/k))^k.
$$
Since Inequality~\eqref{eq:toShowForBinLowerBound} has to hold for all $k$, it suffices to show (implicitly focusing on $k=1$)
$$
1-(1-e^{-c\delta})(1+\delta) x \geq \exp(-c\delta x).
$$
Since this inequality holds for $x=0$ it suffices to show that the following function is monotonically increasing for $x \leq 1/8$.
$$
\forall x \geq 0: f(x) = 1-(1-e^{-c\delta})(1+\delta) x - \exp(-c\delta x).
$$
We compute, for all $x \geq 0$,
$$
f'(x) =  - (1-e^{-c\delta})(1+\delta) + c \delta \exp(-c\delta x).
$$
Thus, $f'(x) \geq 0$ is equivalent to 
$$
c \delta \exp(-c\delta x) \geq (1-e^{-c\delta})(1+\delta).
$$
Taking logarithms we obtain equivalently
\begin{equation}\label{eq:toShowForBinLowerBoundTwo}
\ln(c\delta) - c \delta x \geq \ln(1-e^{-c\delta}) + \ln(1+\delta).
\end{equation}
We bound the two terms on the right hand side from above; we use Bernoulli's Inequality to obtain $\forall x \in[0,1]: e^{-x} \geq 1-x+x^2/4$ and we use the inequality $\ln(1+x) \leq x$ to bound the right hand side of Inequality~\eqref{eq:toShowForBinLowerBoundTwo} as
$$
\ln(c\delta(1-c\delta/4)) + \delta \leq \ln(c\delta) + \ln(1-c\delta/4) + \delta \leq \ln(c\delta) - c\delta/4 + \delta.
$$
Thus, Inequality~\eqref{eq:toShowForBinLowerBoundTwo} is fulfilled if
$$
\ln(c\delta) - c \delta x \geq \ln(c\delta) - c\delta/4 + \delta
$$
which is equivalent to
$$
c\delta(1/4-x) \geq \delta,
$$
equivalently
$$
x \leq 1/4 - 1/c = 1/8.
$$
Thus, $f$ is monotone increasing on $[0,1/8]$, so in particular non-negative; thus $(Z_t)_t$ is a submartingale as desired.
\end{proof}
}
}%end sloppy

\end{document}